\DeclareMathOperator*{\argmin}{argmin}
\newtheorem{theorem}{Theorem}[section]
\newtheorem{lemma}[theorem]{Lemma}
\newtheorem*{theorem*}{Theorem}
\newtheorem*{lemma*}{Lemma}
\newtheorem{assumption}[theorem]{Assumption}
\newtheorem{corollary}[theorem]{Corollary}
\newtheorem{proposition}[theorem]{Proposition}
\newtheorem*{proposition*}{Proposition}
\def\Ppmat{\mathbf{P}^\prime}
\def\Atmat{\tilde{\mathbf{A}}}
\def\Atpmat{\tilde{\mathbf{A}^\prime}}
\def\Dtmat{\tilde{\mathbf{D}}}
\def\Dtpmat{\tilde{\mathbf{D}^\prime}}
\def\Xmat{\mathbf{X}}
\def\ei{\mathbf{e}_i}
\def\ej{\mathbf{e}_j}
\def\e1{\mathbf{e}_1}
\def\r1{\mathbf{r}_1}
\title{Certified Graph Unlearning}
\author{%
  Eli Chien\thanks{Equal contribution.} \\
  ECE\\
  UIUC\\
  \texttt{ichien3@illinois.edu} \\
  \And
  Chao Pan$^*$ \\
  ECE\\
  UIUC\\
  \texttt{chaopan2@illinois.edu} \\
  \And
  Olgica Milenkovic \\
  ECE\\
  UIUC\\
  \texttt{milenkov@illinois.edu} \\
}
\begin{document}

\maketitle

\begin{abstract}
Graph-structured data is ubiquitous in practice and often processed using graph neural networks (GNNs). With the adoption of recent laws ensuring the ``right to be forgotten'', the problem of graph data removal has become of significant importance. To address the problem, we introduce the first known framework for \emph{certified graph unlearning} of GNNs. In contrast to standard machine unlearning, new analytical and heuristic unlearning challenges arise when dealing with complex graph data. First, three different types of unlearning requests need to be considered, including node feature, edge and node unlearning. Second, to establish provable performance guarantees, one needs to address challenges associated with feature mixing during propagation. The underlying analysis is illustrated on the example of simple graph convolutions (SGC) and their generalized PageRank (GPR) extensions, thereby laying the theoretical foundation for certified unlearning of GNNs. Our empirical studies on six benchmark datasets demonstrate excellent performance-complexity trade-offs when compared to complete retraining methods and approaches that do not leverage graph information. For example, when unlearning $20\%$ of the nodes on the Cora dataset, our approach suffers only a $0.1\%$ loss in test accuracy while offering a $4$-fold speed-up compared to complete retraining. Our scheme also outperforms unlearning methods that do not leverage graph information with a $12\%$ increase in test accuracy for comparable time complexity. Our implementation is available online.\footnote{\url{https://github.com/thupchnsky/sgc_unlearn}.}
\end{abstract}

\section{Introduction}\label{sec:intro}
Machine learning algorithms are used in many application domains, including biology, computer vision and natural language processing. Relevant models are often trained either on third-party datasets, internal or customized subsets of publicly available user data. For example, many computer vision models are trained on images of Flickr users~\cite{thomee2016yfcc100m,guo2020certified}, while many natural language processing (e.g., sentiment analysis) and recommender systems heavily rely on repositories such as IMDB~\cite{maas-EtAl:2011:ACL-HLT2011}. Furthermore, numerous ML classifiers in computational biology are trained on data from the UK Biobank~\cite{sudlow2015uk}, which represents a collection of genetic and medical records of roughly half a million patients~\cite{ginart2019making}. With recent demands for increased data privacy, the above referenced and many other data repositories are facing increasing demands for data removal. Certain laws are already in place guaranteeing the rights of certified data removal, including the European Union's General Data Protection Regulation (GDPR), the California Consumer Privacy Act (CCPA) and the Canadian Consumer Privacy Protection Act (CPPA)~\cite{sekhari2021remember}.

Removing user data from a dataset is insufficient to guarantee the desired level of privacy, since models trained on the original data may still contain information about their patterns and features. This consideration gave rise to a new research direction in machine learning, referred to as \emph{machine unlearning}~\cite{cao2015towards}, in which the goal is to guarantee that the user data information is also removed from the trained model. Naively, one can retrain the model from scratch to meet the privacy demand, yet retraining comes at a high computational cost and is thus not practical when accommodating frequent removal requests. To avoid complete retraining, various methods for machine unlearning have been proposed, including \emph{exact approaches}~\cite{ginart2019making,bourtoule2021machine} as well as \emph{approximate methods} which often come with performance guarantees and are therefore certified unlearners~\cite{guo2020certified,sekhari2021remember}.

At the same time, graph-centered machine learning has received significant interest from the learning community due to the ubiquity of graph-structured data. Usually, the data contains two sources of information: Node features and graph topology. Graph Neural Networks (GNN) leverage both types of information simultaneously and achieve state-of-the-art performance in numerous real-world applications, including Google Maps~\cite{derrow2021eta}, various recommender system~\cite{ying2018graph}, self-driving cars~\cite{gao2020vectornet} and bioinformatics~\cite{zhang2021graph_bio}. Clearly, user data is involved in training the underlying GNNs and it may therefore be subject to removal. However, it is still unclear how to perform unlearning of GNNs.

We take the first step towards solving the approximate unlearning problem for graphs by performing a nontrivial theoretical analysis of some simplified GNN architectures. Inspired by the unstructured data \emph{certified removal procedure}~\cite{guo2020certified}, we propose the first known approach for \emph{certified graph unlearning}. Our main contributions are as follows. First, we introduce three types of data removal requests for graph unlearning: Node feature unlearning, edge unlearning and node unlearning (see Figure~\ref{fig:CGU}). Second, we derive theoretical guarantees for certified graph unlearning mechanisms for all three removal cases on SGCs~\cite{wu2019simplifying} and their GPR generalizations. In particular, we analyze $L_2$-regularized graph models trained with differentiable convex loss functions. The analysis is challenging since propagation on graphs ``mixes'' node features. Our analysis reveals the intuitive observation that the degree of the unlearned node plays an important role in the unlearning process, while the number of propagation steps may or may not be important for different unlearning scenarios. To the best of our knowledge, the theoretical guarantees established in this work are the first provable certified removal studies for graphs. Furthermore, the proposed analysis also encompasses node classification and node regression problems. Third, our empirical investigation on frequently used datasets for GNN learning shows that our method offers an excellent performance-complexity trade-off. For example, when unlearning $20\%$ nodes on the Cora dataset, our method achieves a $4$-fold speedup with only a $0.1\%$ drop in test accuracy compared to complete retraining. We also test our model on datasets for which removal requests are more likely to arise, including the Amazon co-purchase networks. All proofs and detailed discussions of the results are relegated to the Appendix.
\begin{figure}
    \centering
    \includegraphics[trim={1.2cm 14.5cm 0cm 2cm},clip,width=1\linewidth]{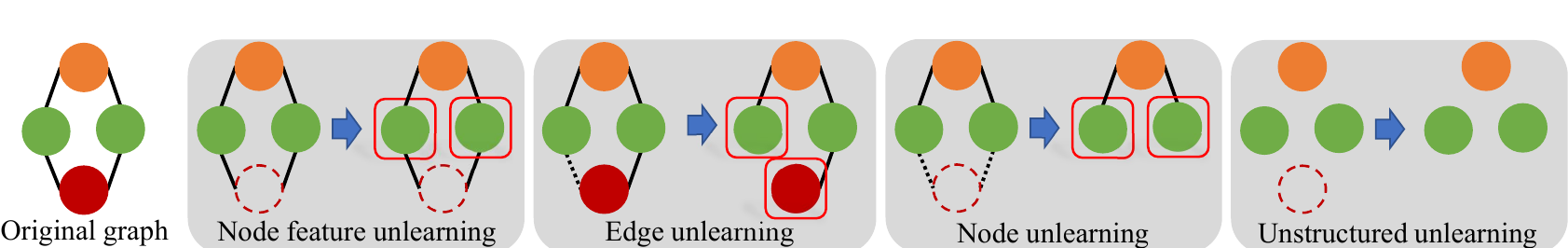}
    \caption{Illustration of three different types of certified graph unlearning problems and a comparison with the case of unlearning without graph information~\cite{guo2020certified}. The colors of the nodes capture properties of node features, and the red frame indicates node embeddings affected by $1$-hop propagation. When no graph information is used, the node embeddings are uncorrelated. However, for the case of graph unlearning problems, removing one node or edge can affect the node embeddings of the entire graph for a large enough number of propagation steps.}
    \label{fig:CGU}
    \vspace{-0.4cm}
\end{figure}

\section{Related Works}\label{sec:related}
\textbf{Machine unlearning and certified data removal.} The paper~\cite{cao2015towards} introduced the concept of machine unlearning and proposed distributed learners for exact unlearning, while~\cite{bourtoule2021machine} introduced sharding-based methods for unlearning. The authors of~\cite{ginart2019making} described unlearning approaches for $k$-means clustering. All the aforementioned works focused on \emph{exact unlearning}: The unlearned model is required to perform identically to a completely retrained model. As an alternative,~\cite{guo2020certified} introduced a probabilistic definition of unlearning motivated by differential privacy~\cite{dwork2011differential} and proposed the study of ``approximate'' unlearning. This work was followed by~\cite{sekhari2021remember}, which studied the generalization performance of machine unlearning methods, and~\cite{golatkar2020eternal}, which proposed heuristic-based selective forgetting in deep networks. None of these works addressed the machine unlearning problem for graphs. To the best of our knowledge, the only work in this direction is the preprint~\cite{chen2021graph}. However, the strategy proposed therein is sharding, which only works for exact unlearning and is hence completely different from our approximate approach. Also, the approach in~\cite{chen2021graph} relies on partitioning the graph using community detection methods. It therefore implicitly makes the assumption that the graph is homophilic which is not warranted in practice~\cite{chien2021adaptive,lim2021new}. In contrast, our method works for arbitrary graphs and allows for approximate unlearning while ensuring excellent trade-offs between performance and complexity.

\textbf{Differential privacy (DP) and DP-GNNs.} Machine unlearning, especially the approximation version described in~\cite{guo2020certified}, is closely related to differential privacy~\cite{dwork2011differential}. In fact, differential privacy is a \emph{sufficient} for machine unlearning. If a model is differentially private, then the adversary cannot distinguish whether the model is trained on the original dataset or on a dataset in which one data point is removed. Hence, even \emph{without} model updating, a DP model will automatically unlearn the removed data point (see also the explanation in~\cite{ginart2019making,sekhari2021remember} and Figure~\ref{fig:UnlearningVSDP}). Although DP is sufficient for unlearning, it is not necessary. Also, most of the DP models suffer from a significant degradation in performance even when the privacy constraint is loose~\cite{chaudhuri2011differentially,abadi2016deep}. Machine unlearning can therefore be viewed as a means to trade-off between performance and computational cost, with complete retraining and DP on two different ends of the spectrum~\cite{guo2020certified}. Several recent works proposed DP-GNNs~\cite{daigavane2021node,olatunji2021releasing,wu2021linkteller,sajadmanesh2022gap} -- however, even for unlearning \emph{one single node or edge}, these methods require a very high ``privacy budget'' to learn with sufficient accuracy.

\begin{figure}[t]
    \centering
    \includegraphics[trim={0 10.5cm 0.2cm 3.5cm},clip,width=0.95\linewidth]{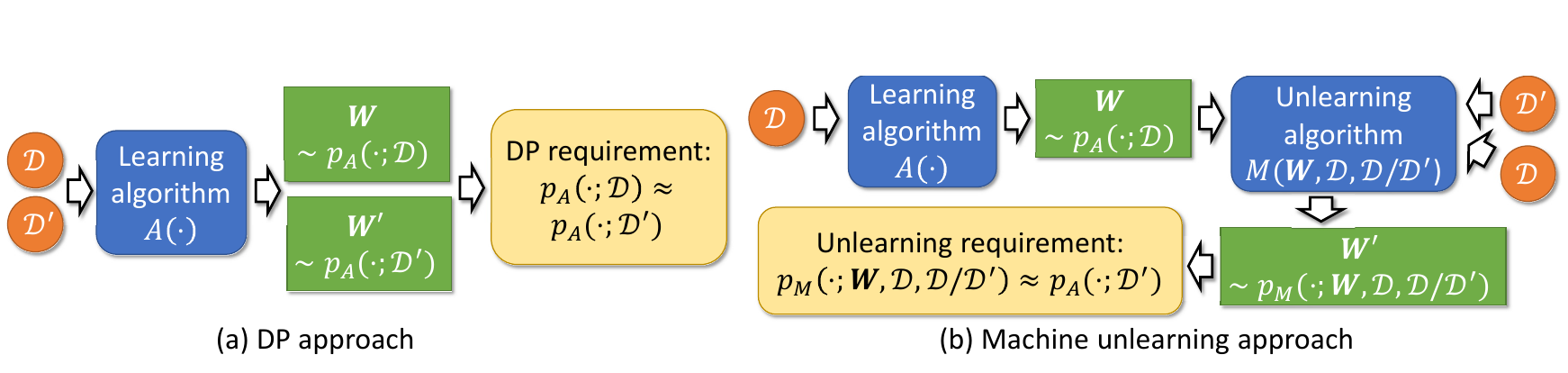}
    \caption{Difference between machine unlearning (as defined in~\cite{guo2020certified}) and Differential Privacy (DP).}
    \label{fig:UnlearningVSDP}
    \vspace{-0.5cm}
\end{figure}

\section{Preliminaries}\label{sec:prelim}
\textbf{Notation. }We reserve bold-font capital letters such as $\mathbf{S}$ for matrices and bold-font lowercase letters such as $\mathbf{s}$ for vectors. We use $\mathbf{e}_{i}$ to denote the $i^{th}$ standard basis, so that $\mathbf{e}_i^T\mathbf{S}$ and $\mathbf{S}\mathbf{e}_i$ represent the $i^{th}$ row and column vector of $\mathbf{S},$ respectively. The absolute value $|\cdot|$ is applied component-wise on both matrices and vectors. We also use the symbols $\mathbf{1}$ for the all-one vector and $\mathbf{I}$ for the identity matrix. Furthermore, we let $\mathcal{G}=(\mathcal{V},\mathcal{E})$ stand for an undirected graph with node set $\mathcal{V} = [n]$ of size $n$ and edge set $\mathcal{E}$. The symbols $\mathbf{A}$ and $\mathbf{D}$ are used to denote the corresponding adjacency and node degree matrix, respectively. The feature matrix is denoted by $\mathbf{X}\in \mathbb{R}^{n\times F}$ and the features have dimension $F$; For binary classification, the label are summarized in $\mathbf{Y}\in \{-1,1\}^{n}$, while the nonbinary case is discussed in Section~\ref{sec:Emp_discussion}. The relevant norms are $\|\cdot\|$, the $l_2$ norm, and $\|\cdot\|_F$, the Frobenius norm. Note that we use $\|\cdot\|$ for both row and column vectors to simplify the notation. The matrices $\mathbf{A}$ and $\mathbf{D}$ should not be confused with the symbols for an algorithm $A$ and dataset $\mathcal{D}$.

\textbf{Certified removal. }Let $A$ be a (randomized) learning algorithm that trains on $\mathcal{D}$, the set of data points before removal, and outputs a model $h\in \mathcal{H}$, where $\mathcal{H}$ represents a chosen space of models. The removal 
of a subset of points from $\mathcal{D}$ results in $\mathcal{D}^\prime$. For instance, let $\mathcal{D}=(\mathbf{X},\mathbf{Y})$. Suppose we want to remove a data point, $(\mathbf{e}_i^T\mathbf{X},\mathbf{e}_i^T\mathbf{Y})$ from $\mathcal{D}$, resulting in $\mathcal{D}^\prime=(\mathbf{X}^\prime,\mathbf{Y}^\prime)$. Here, $\mathbf{X}^\prime,\mathbf{Y}^\prime$ are equal to $\mathbf{X},\mathbf{Y},$ respectively, except that the row corresponding to the removed data point is deleted. Given $\epsilon, \delta>0$, an unlearning algorithm $M$ applied to $A(\mathcal{D})$ is said to guarantee an $(\epsilon,\delta)$-certified removal for $A$, where $\mathcal{X}$ denotes the space of possible datasets, if $\forall \mathcal{T}\subseteq \mathcal{H}, \mathcal{D}\subseteq \mathcal{X}, i\in[n]$,
\begin{align}\label{eq:CR_def}
    &\mathbb{P}\left(M(A(\mathcal{D}),\mathcal{D},\mathcal{D}\setminus\mathcal{D}^\prime)\in \mathcal{T}\right)
    \leq e^{\epsilon} \mathbb{P}\left(A(\mathcal{D}^\prime)\in \mathcal{T}\right)+\delta, \nonumber\\
    & \mathbb{P}\left(A(\mathcal{D}^\prime)\in \mathcal{T}\right)\leq e^{\epsilon}\mathbb{P}\left(M(A(\mathcal{D}),\mathcal{D},\mathcal{D}\setminus\mathcal{D}^\prime)\in \mathcal{T}\right)+\delta.
\end{align}
This definition is related to  $(\epsilon,\delta)$-DP~\cite{dwork2011differential} except that we are allowed to update the model based on the removed point (see Figure~\ref{fig:UnlearningVSDP}). An $(\epsilon,\delta)$-certified removal method guarantees that the updated model $M(A(\mathcal{D}),\mathcal{D},\mathcal{D}\setminus\mathcal{D}^\prime)$ is ``approximately'' the same as the model $A(\mathcal{D}^\prime)$ obtained by retraining from scratch. Thus, any information about the removed data $\mathcal{D}\setminus\mathcal{D}^\prime$ is ``approximately'' eliminated from the model. Ideally, we would like to design $M$ such that it satisfies equation~(\ref{eq:CR_def}) and has a complexity that is significantly smaller than that of complete retraining. 

\section{Certified Graph Unlearning}\label{sec:CGU}

Unlike standard machine unlearning, certified graph unlearning uses datasets that contain not only node features $\mathbf{X}$ but also the graph topology $\mathbf{A}$, and therefore require different data removal procedures. We focus on node classification, for which the training dataset equals $\mathcal{D}=(\mathbf{X},\mathbf{Y}_{T_r},\mathbf{A})$. Here, $\mathbf{Y}_{T_r}$ is identical to $\mathbf{Y}$ on rows indexed by points of the training set $T_r$ while the remaining rows are all zeros. Wlog, we assume that the training set comprises the first $m$ nodes (i.e. $T_r=[m]$), where $m\leq n$. An unlearning method $M$ achieves $(\epsilon,\delta)$-certified graph unlearning with algorithm $A$ if~\eqref{eq:CR_def} is satisfied for $\mathcal{D}=(\mathbf{X},\mathbf{Y}_{T_r},\mathbf{A})$ and $\mathcal{D}^\prime$, which differ based on the type of graph unlearning: Node feature unlearning, edge unlearning, and node unlearning.

\subsection{Unlearning SGC}
SGC is a simplification of GCN obtained by removing all nonlinearities from the latter model. This leads to the following update rule: $\mathbf{P}^K\mathbf{X}\mathbf{W} \triangleq \mathbf{Z}\mathbf{W},$ where $\mathbf{W}$ denotes the matrix of learnable weights, $K\geq 0$ equals the number of propagation steps and $\mathbf{P}$ denotes the one-step propagation matrix. The standard choice of the propagation matrix is the symmetric normalized adjacency matrix with self-loops, $\mathbf{P} = \Tilde{\mathbf{D}}^{-1/2}\Tilde{\mathbf{A}}\Tilde{\mathbf{D}}^{-1/2}$, where $\Tilde{\mathbf{A}} = \mathbf{A} + \mathbf{I}$ and $\Tilde{\mathbf{D}}$ equals the degree matrix with respect to $\Tilde{\mathbf{A}}$. We will work with the asymmetric normalized version of $\mathbf{P}$, $\mathbf{P} = \Tilde{\mathbf{D}}^{-1}\Tilde{\mathbf{A}}$. This choice is made purely for analytical purposes and our empirical results confirm that this normalization ensures the competitive performance of our unlearning methods.

The resulting node embedding is used for node classification by choosing an appropriate loss (i.e., logistic loss) and minimizing the $L_2$-regularized empirical risk. For binary classification, $\mathbf{W}$ can be replaced by a vector $\mathbf{w}$; the loss equals $L(\mathbf{w},\mathcal{D}) = \sum_{i:\mathbf{e}_i^T\mathbf{Y}_{T_r}\neq 0} \left(\ell(\mathbf{e}_i^T\mathbf{Z}\mathbf{w},\mathbf{e}_i^T\mathbf{Y}_{T_r}) + \frac{\lambda }{2}\|\mathbf{w}\|^2\right),$ where $\ell(\mathbf{e}_i^T\mathbf{Z}\mathbf{w},\mathbf{e}_i^T\mathbf{Y}_{T_r})$ is a convex loss function that is differentiable everywhere. We also write $\mathbf{w}^\star = A(\mathcal{D})=\argmin_\mathbf{w}L(\mathbf{w},\mathcal{D})$, where the optimizer is unique whenever $\lambda>0$.

Motivated by the unlearning approach from~\cite{guo2020certified} pertaining to unstructured data, we design an unlearning mechanism $M$ for graphs that changes the trained model $\mathbf{w}^\star$ to $\mathbf{w}^{-}$ which represents an approximation of the unique optimizer of $L(\mathbf{w},\mathcal{D}^\prime)$. Denote the Hessian of $L(\cdot,\mathcal{D}^\prime)$ at $\mathbf{w}^\star$ by $\mathbf{H}_{\mathbf{w}^\star} = \nabla^2 L(\mathbf{w}^\star, \mathcal{D}^\prime)$. The authors of~\cite{guo2020certified} propose the following unlearning mechanism for unlearning the $m^{th}$ training point: $\mathbf{w}^{-} = M(\mathbf{w}^\star, \mathcal{D},\mathcal{D}\setminus \mathcal{D}') = \mathbf{w}^\star + \mathbf{H}_{\mathbf{w}^\star}^{-1}\Delta_{guo}$, where $\Delta_{guo} = \lambda \mathbf{w}^\star + \nabla \ell (\mathbf{e}_m^T\mathbf{X}\mathbf{w}^\star,\mathbf{e}_m^T\mathbf{Y}_{T_r})$. In order to address graph unlearning, we generalize this unlearning mechanism by replacing $\Delta_{guo}$ with $\Delta = \nabla L(\mathbf{w}^\star,\mathcal{D}) - \nabla L(\mathbf{w}^\star,\mathcal{D}^\prime)$. More precisely, we have
\begin{align}\label{eq:Delta_ours}
    & \Delta = \lambda \mathbf{w}^\star + \nabla\ell(\mathbf{e}_m^T\mathbf{Z}\mathbf{w}^\star,\mathbf{e}_m^T\mathbf{Y}_{T_r}) + \sum_{i=1}^{m-1}\left[\nabla\ell(\mathbf{e}_i^T\mathbf{Z}\mathbf{w}^\star,\mathbf{e}_i^T\mathbf{Y}_{T_r})- \nabla\ell(\mathbf{e}_i^T\mathbf{Z}^\prime \mathbf{w}^\star,\mathbf{e}_i^T\mathbf{Y}_{T_r}) \right].
\end{align}
Note that our generalized unlearning mechanism matches the one in~\cite{guo2020certified} when no graph information is involved. This can be seen by setting $K=0$, which leads to  $\mathbf{Z}=\mathbf{X}$ and $\mathbf{e}_i^T\mathbf{Z}=\mathbf{e}_i^T\mathbf{Z}^\prime\;\forall i\in [m-1]$. Hence, the third term in equation~(\ref{eq:Delta_ours}) is zero and thus $\Delta = \Delta_{guo}$. Note that the third term in equation~(\ref{eq:Delta_ours}) is not zero in general when graph information is involved. This also highlights the main difficulty of directly applying the analysis of~\cite{guo2020certified} to graphs, as we need to take care of the third term in equation~(\ref{eq:Delta_ours}). More details can be found in Appendix~\ref{app:update_intuition}.

When $\nabla L(\mathbf{w}^-,\mathcal{D}^\prime)=0$, $\mathbf{w}^-$ is the unique optimizer of $L(\cdot, \mathcal{D}^\prime)$. If $\nabla L(\mathbf{w}^-,\mathcal{D}^\prime) \neq 0$, information about the removed data point remains present. One can show that the gradient residual norm $\|\nabla L(\mathbf{w}^-,\mathcal{D}^\prime)\|$ determines the error of $\mathbf{w}^-$ when used to approximate the true minimizer of $L(\cdot, \mathcal{D}^\prime)$~\cite{guo2020certified}. Hence, upper bounds on $\|\nabla L(\mathbf{w}^-,\mathcal{D}^\prime)\|$ can be used to establish certified removal/unlearning guarantees. More precisely, assume that we have $\|\nabla L(\mathbf{w}^-,\mathcal{D}^\prime)\|\leq \epsilon^\prime$ for some $\epsilon^\prime>0$. Furthermore, consider training with the noisy loss $L_{\mathbf{b}}(\mathbf{w},\mathcal{D}) = \sum_{i:\mathbf{e}_i^T\mathbf{Y}_{T_r}\neq 0} \left(\ell(\mathbf{e}_i^T\mathbf{Z}\mathbf{w},\mathbf{e}_i^T\mathbf{Y}_{T_r}) + \frac{\lambda }{2}\|\mathbf{w}\|^2\right)+\mathbf{b}^T\mathbf{w},$ where $\mathbf{b}$ is drawn randomly according to some distribution. Then one can leverage the following result.

\begin{theorem}[Theorem 3 from~\cite{guo2020certified}]\label{thm:app_thm3}  Let $A$ be the learning algorithm that returns the unique optimum of the loss $L_{\mathbf{b}}(\mathbf{w},\mathcal{D})$. Suppose that $\|\nabla L(\mathbf{w}^-,\mathcal{D}^\prime)\|\leq \epsilon^\prime$ for some computable bound $\epsilon^\prime > 0$, independent of $\mathbf{b}$ and achieved by $M$. If $\mathbf{b}\sim \mathcal{N}(\mathbf{0},c_0 \epsilon^\prime/\epsilon \cdot\mathbf{I})$ with $c_0>0$, then $M$ satisfies~\eqref{eq:CR_def} with $(\epsilon,\delta)$ for algorithm $A$ applied to $\mathcal{D}^\prime$, where $\delta = 1.5 e^{-c_0^2/2}$. 
\end{theorem}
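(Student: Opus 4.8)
The plan is to prove~\eqref{eq:CR_def} by a change-of-measure argument of the kind used to analyze the Gaussian mechanism in differential privacy, following~\cite{guo2020certified}. The starting point is the elementary observation that the Newton update in $M$ returns the \emph{exact} minimizer of a nearby regularized objective: since $\lambda>0$ makes $L(\cdot,\mathcal{D}^\prime)$ strongly convex, the first-order optimality conditions show that $\mathbf{w}^-$ is the unique minimizer of $L(\mathbf{w},\mathcal{D}^\prime)+(\mathbf{b}-\mathbf{u})^T\mathbf{w}$, where $\mathbf{u}:=\nabla L_{\mathbf{b}}(\mathbf{w}^-,\mathcal{D}^\prime)$ and $\|\mathbf{u}\|\leq\epsilon^\prime$ by hypothesis. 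In other words, running $M$ on the model that $A$ trained with noise $\mathbf{b}$ produces exactly the model that $A$ would have output on $\mathcal{D}^\prime$ had its noise vector been $\mathbf{b}-\mathbf{u}$ rather than $\mathbf{b}$.

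The next step is to pass to densities. Because $L(\cdot,\mathcal{D}^\prime)$ is $\lambda$-strongly convex and $\ell$ is differentiable everywhere, $\mathbf{w}\mapsto-\nabla L(\mathbf{w},\mathcal{D}^\prime)$ is a diffeomorphism of parameter space onto noise space, so pushing the Gaussian noise density forward through its inverse writes the density of $A(\mathcal{D}^\prime)$ at a point $\theta$ as $\mu_\sigma\!\left(-\nabla L(\theta,\mathcal{D}^\prime)\right)\left|\det\nabla^2 L(\theta,\mathcal{D}^\prime)\right|$, where $\mu_\sigma$ is the $\mathcal{N}(0,\sigma^2\Imat)$ density with $\sigma=c_0\epsilon^\prime/\epsilon$. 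Carrying out the identical change of variables for the output of $M$, the factor $\left|\det\nabla^2 L(\theta,\mathcal{D}^\prime)\right|$ is common to the two densities and cancels, so the ratio of the output densities of $A(\mathcal{D}^\prime)$ and of $M$ equals the ratio of $\mu_\sigma$ to the law of the effective noise $\mathbf{b}-\mathbf{u}$, evaluated at $-\nabla L(\theta,\mathcal{D}^\prime)$. Since $\mathbf{b}-\mathbf{u}$ is the Gaussian noise displaced by a vector of norm at most $\epsilon^\prime$, the logarithm of this ratio is, up to a Jacobian correction coming from $\mathbf{b}\mapsto\mathbf{b}-\mathbf{u}$, of the form $\sigma^{-2}\langle\mathbf{b},\mathbf{u}\rangle-\tfrac{1}{2}\sigma^{-2}\|\mathbf{u}\|^2$.

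It then remains to bound the tail of this log-ratio. For $\mathbf{b}\sim\mathcal{N}(0,\sigma^2\Imat)$ and $\mathbf{u}$ (essentially) fixed, $\sigma^{-2}\langle\mathbf{b},\mathbf{u}\rangle$ is a centered Gaussian with standard deviation $\|\mathbf{u}\|/\sigma\leq\epsilon^\prime/\sigma=\epsilon/c_0$, so the Gaussian tail estimate $\mathbb{P}(\mathcal{N}(0,1)>t)\leq\tfrac{1}{2}e^{-t^2/2}$ shows that the log-ratio exceeds $\epsilon$ in absolute value with probability at most $1.5\,e^{-c_0^2/2}$ (the constant $1.5$ absorbing an $e^{\epsilon/2}$-type factor and using that $\epsilon$ is a moderate constant). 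Splitting an arbitrary $\mathcal{T}\subseteq\mathcal{H}$ into the region where the density ratio lies in $[e^{-\epsilon},e^{\epsilon}]$ and its low-probability complement then yields both directions of~\eqref{eq:CR_def} with $\delta=1.5\,e^{-c_0^2/2}$, as claimed. I expect the real work to be in the density step: because $\mathbf{u}$ is itself a function of $\mathbf{b}$ rather than a fixed ``sensitivity'' vector, a naive bound would be dimension-dependent, and one must use the specific structure of the Newton step --- that $\mathbf{u}$ is a slowly varying second-order remainder --- to argue that $\mathbf{b}\mapsto\mathbf{b}-\mathbf{u}(\mathbf{b})$ is a nearly volume-preserving diffeomorphism and that $\langle\mathbf{b},\mathbf{u}\rangle$ concentrates like a one-dimensional Gaussian, so that the Jacobian correction does not spoil the bound on $\delta$.
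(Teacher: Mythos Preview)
The paper does not prove this statement: it is quoted verbatim as ``Theorem~3 from~\cite{guo2020certified}'' and used as a black box to convert gradient--residual bounds into certified--removal guarantees. There is therefore no proof in the present paper to compare your proposal against.

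For what it is worth, your sketch tracks the argument in the cited source. The opening identity --- that $\mathbf{w}^-$ is the exact minimizer of $L_{\mathbf{b}-\mathbf{u}}(\cdot,\mathcal{D}')$ with $\mathbf{u}=\nabla L_{\mathbf{b}}(\mathbf{w}^-,\mathcal{D}')$ --- is correct and is the heart of the reduction, and pushing forward the Gaussian density through the diffeomorphism $\mathbf{b}\mapsto\argmin_{\mathbf{w}}L_{\mathbf{b}}(\mathbf{w},\mathcal{D}')$ is indeed the mechanism by which one lands on a Gaussian tail bound. You are also right to flag the Jacobian issue: your claim that ``the factor $|\det\nabla^2 L(\theta,\mathcal{D}')|$ is common to the two densities and cancels'' is not literally true, since for $M$ the noise-to-output map is $\mathbf{b}\mapsto\mathbf{w}^-(\mathbf{b})$ (via $\mathbf{w}^\star(\mathbf{b})$ and a Newton step on $\mathcal{D}$), not $\mathbf{b}\mapsto\argmin L_{\mathbf{b}}(\cdot,\mathcal{D}')$ directly. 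Your final paragraph acknowledges exactly this gap but does not close it; resolving it would require going back to~\cite{guo2020certified} rather than the present paper.
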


Hence, if we are able to prove that $\|\nabla L(\mathbf{w}^-,\mathcal{D}^\prime)\|$ is appropriately bounded for our graph setting as well, then $M$ will ensure $(\epsilon,\delta)$-certified graph unlearning. Our main technical contribution is to establish such bounds for all three types of unlearning methods for graphs. For the analysis, we need the loss function $\ell$ to satisfy the following properties.

\begin{assumption}\label{asp:guo}
For any $\mathcal{D}$, $i\in[n]$ and $\mathbf{w}\in\mathbb{R}^F$: (1) $\|\nabla\ell(\mathbf{e}_i^T\mathbf{Z}\mathbf{w},\mathbf{e}_i^T\mathbf{Y})\|\leq c$ (i.e. the norm of $\nabla \ell$ is $c$-bounded); (2) $\ell''$ is $\gamma_2$-Lipschitz; (3) $\|\mathbf{e}_i^T\mathbf{X}\| \leq 1$; (4) $\ell'$ is $\gamma_1$-Lipschitz; (5) $\ell'$ is $c_1$-bounded.
\end{assumption} 

Assumptions (1)-(3) are also needed for unstructured unlearning of linear classifiers~\cite{guo2020certified}. To account for graph-structured data, we require the additional assumptions (4)-(5) to establish worst-case bounds, which can be avoided when working with data-dependent bounds (Section~\ref{sec:Emp_discussion}).

In all subsequent derivations, we assume that the unlearned data point corresponds to the $m^{th}$ node for node feature and node unlearning; for edge unlearning, we wish to unlearn the edge $(1,m)$. Generalizations for multiple unlearning requests are discussed in Section~\ref{sec:Emp_discussion}.

\subsection{Node feature unlearning for SGC}
We start with the simplest type of unlearning -- node feature unlearning -- for SGCs. In this case, we remove the node feature and label of one node from $\mathcal{D},$ resulting in $\mathcal{D}^\prime=(\mathbf{X}^\prime,\mathbf{Y}_{T_r}^\prime,\mathbf{A})$. The matrices $\mathbf{X}^\prime,\mathbf{Y}_{T_r}^\prime$ are identical to $\mathbf{X},\mathbf{Y}_{T_r},$ respectively, except for the $m^{th}$ row of the former being set to zero. Note that in this case, the graph structure remains unchanged.

\begin{theorem}\label{thm:NFR_SGCcase}
    Suppose that Assumption~\ref{asp:guo} holds. For the node feature unlearning scenario, and for $\mathbf{Z}=\mathbf{P}^K\mathbf{X}$ and $\mathbf{P} =\Tilde{\mathbf{D}}^{-1}\Tilde{\mathbf{A}}$, we have
    \begin{align}
        \|\nabla L(\mathbf{w}^-,\mathcal{D}^\prime)\| = \|(\mathbf{H}_{\mathbf{w}_\eta}-\mathbf{H}_{\mathbf{w}^\star})\mathbf{H}_{\mathbf{w}^\star}^{-1}\Delta\| \leq \frac{\gamma_2(2c\lambda+(c\gamma_1+\lambda c_1)\Tilde{\mathbf{D}}_{mm})^2}{\lambda^4(m-1)},
    \end{align}
\end{theorem}
where $\mathbf{H}_{\mathbf{w}_\eta}$ denotes the Hessian of $L(\cdot,\mathcal{D}^\prime)$ at ${\mathbf{w}_\eta}=\mathbf{w}^\star+\eta \mathbf{H}_{\mathbf{w}^\star}^{-1}\Delta,$ for some $\eta\in[0,1]$. A similar conclusion holds for the case when we wish to unlearn node features of a node that is not in $T_r$. In this case we just replace $\Tilde{\mathbf{D}}_{mm}$ by the degree of the corresponding node. This result shows that the norm bound is large if the unlearned node has a large degree, since a large-degree node will affect the values of many rows in $\mathbf{Z}$. Our result also demonstrates that the norm bound is \emph{independent} of $K$, due to the fact that  $\mathbf{P}$ is right stochastic. We provide next a sketch of the proof to illustrate the analytical challenges of graph unlearning compared to those of unstructured data unlearning.  

Although for node feature unlearning the graph topology does not change, all rows of $\mathbf{Z} = \mathbf{P}^K\mathbf{X}$ may potentially change due to graph information propagation. Thus, the original analysis from~\cite{guo2020certified}, which corresponds to the special case $\mathbf{Z} = \mathbf{X}$, cannot be applied directly. There are two particular challenges. The first is to ensure that the norm of each row of $\mathbf{Z}$ is bounded by $1$. We provide Lemma~\ref{lma:Z_norm_bound_1} to guarantee this. It is critical to choose $\mathbf{P}=\Tilde{\mathbf{D}}^{-1}\Tilde{\mathbf{A}}$ since all other choices of degree normalization lead to worse bounds (see Appendix~\ref{apx:lemma_Z_1}). The second and more difficult challenge is to bound $\|\Delta\|$. When $\mathbf{Z} = \mathbf{X}$, the third term in equation~(\ref{eq:Delta_ours}) is exactly zero, in accordance with~\cite{guo2020certified}. Due to graph propagation, we have to further bound the norm of the third term, which is highly nontrivial since the upper bound is not allowed to grow with $m$ or $n$. We first focus on one of the $m-1$ terms in the sum. Using Assumption~\ref{asp:guo}, one can bound this term by $\|\mathbf{e}_i^T(\mathbf{Z}-\mathbf{Z}^\prime)\|$ (we suppressed the dependency on $\lambda, c, c_1$ and $\gamma_1$ for simplicity). The key analytical novelty is to explore the sparsity of $\mathbf{Z}-\mathbf{Z}^\prime=\mathbf{P}^K(\mathbf{X}-\mathbf{X}^\prime)$. Note that $\mathbf{X}-\mathbf{X}^\prime$ is an all-zero matrix except for its $m^{th}$ row being equal to $\mathbf{e}_m^T\mathbf{X}$. Thus, we have $\|\mathbf{e}_i^T(\mathbf{Z}-\mathbf{Z}^\prime)\| = \|\mathbf{e}_i^T\mathbf{P}^K(\mathbf{X}-\mathbf{X}^\prime)\| = \|\mathbf{e}_i^T\mathbf{P}^K\mathbf{e}_m\mathbf{e}_m^T\mathbf{X}\|\leq \mathbf{e}_i^T\mathbf{P}^K\mathbf{e}_m,$ where the last bound follows from the Cauchy-Schwartz inequality, (3) in Assumption~\ref{asp:guo} and the fact that $\mathbf{P}^K$ is a (component-wise) nonnegative matrix. Thus, summing over $i\in [m-1]$ leads to the upper bound $\mathbf{1}^T\mathbf{P}^K\mathbf{e}_m$, since $m\leq n$. Next, observe that $\mathbf{1}^T\mathbf{P}^K\mathbf{e}_m = \mathbf{1}^T\mathbf{P}^K\Tilde{\mathbf{D}}^{-1}\Tilde{\mathbf{D}}\mathbf{e}_m = \mathbf{1}^T\left(\Tilde{\mathbf{D}}^{-1}\Tilde{\mathbf{A}}\right)^K\Tilde{\mathbf{D}}^{-1}\mathbf{e}_m\Tilde{\mathbf{D}}_{mm} = \mathbf{1}^T\Tilde{\mathbf{D}}^{-1}\left(\Tilde{\mathbf{A}}\Tilde{\mathbf{D}}^{-1}\right)^K\mathbf{e}_m\Tilde{\mathbf{D}}_{mm}.$ Since $\Tilde{\mathbf{A}}\Tilde{\mathbf{D}}^{-1}$ is a left stochastic matrix, $\Tilde{\mathbf{A}}\Tilde{\mathbf{D}}^{-1}\mathbf{p}$ is a probability vector whenever $\mathbf{p}$ is a probability vector. Clearly, $\mathbf{e}_m$ is a probability vector. Hence, $(\Tilde{\mathbf{A}}\Tilde{\mathbf{D}}^{-1})^K\mathbf{e}_m$ is also a probability vector. Since all diagonal entries of $\Tilde{\mathbf{D}}^{-1}$ are nonnegative and upper bounded by $1$ given the self-loops for all nodes, $\mathbf{1}^T\Tilde{\mathbf{D}}^{-1}\mathbf{p} \leq \mathbf{1}^T\mathbf{p} = 1$ for any probability vector $\mathbf{p}$. Hence, the term above is bounded by $\Tilde{\mathbf{D}}_{mm}.$ The bound depends on $\Tilde{\mathbf{D}}_{mm}$ and does not increase with $m$ or $K$. Although node feature unlearning is the simplest case of graph unlearning, our sketch of the proof illustrates the difficulties associated with bounding the third term in $\Delta$. Similar, but more complicated approaches are needed for the analysis of edge unlearning and node unlearning.

\subsection{Edge and node unlearning for SGC and GPR extension}
\textbf{Edge unlearning for SGC. }
We describe next the bounds for edge unlearning and highlight the technical issues arising in the analysis of this setting. Here, we remove one edge $(1,m)$ from $\mathcal{D},$ resulting in $\mathcal{D}^\prime=(\mathbf{X},\mathbf{Y}_{T_r},\mathbf{A}^\prime)$. The matrix $\mathbf{A}^\prime$ is identical to $\mathbf{A}$ except for $\Tilde{\mathbf{A}}^\prime_{1m}=\Tilde{\mathbf{A}}^\prime_{m1}=0$. Furthermore, $\Tilde{\mathbf{D}}^\prime$ is the degree matrix corresponding to $\Tilde{\mathbf{A}}^\prime$. Note that the node features and labels remain unchanged.

\begin{theorem}\label{thm:ER_SGCcase}
    Suppose that Assumption~\ref{asp:guo} holds. For the edge unlearning scenario, and for $\mathbf{P} =\Tilde{\mathbf{D}}^{-1}\Tilde{\mathbf{A}}$ and $\mathbf{Z}=\mathbf{P}^K \mathbf{X}$, we have
    \begin{align}
        \|\nabla L(\mathbf{w}^-,\mathcal{D}^\prime)\| = \|(\mathbf{H}_{\mathbf{w}_\eta}-\mathbf{H}_{\mathbf{w}^\star})\mathbf{H}_{\mathbf{w}^\star}^{-1}\Delta\| \leq  \frac{16\gamma_2K^2\left(c\gamma_1+c_1\lambda\right)^2}{\lambda^4 m}.
    \end{align}
\end{theorem}

\textbf{Node unlearning for SGC. }
We now discuss the most difficult case, node unlearning. In this case, one node is entirely removed from $\mathcal{D}$, including node features, labels and edges. This results in $\mathcal{D}^\prime=(\mathbf{X}^\prime,\mathbf{Y}_{T_r}^\prime,\mathbf{A}^\prime)$. The matrices $\mathbf{X}^\prime,\mathbf{Y}_{T_r}^\prime$ are defined similarly to those described for node feature unlearning. The matrix $\mathbf{A}^\prime$ is obtained by replacing the $m^{th}$ row and column in $\mathbf{A}$ by all-zeros (similar changes are introduced in $\Tilde{\mathbf{A}}$, with $\Tilde{\mathbf{A}}_{mm}=0$). For simplicity, we let $\Tilde{\mathbf{D}}^\prime_{mm}=1$ as this assumption does not affect the propagation results.

\begin{theorem}\label{thm:NR_SGCcase}
    Suppose that Assumption~\ref{asp:guo} holds. For the node unlearning scenario, and for $\mathbf{Z}=\mathbf{P}^K\mathbf{X}$ and $\mathbf{P} =\Tilde{\mathbf{D}}^{-1}\Tilde{\mathbf{A}}$, we have
    \begin{align}
        \|\nabla L(\mathbf{w}^-,\mathcal{D}^\prime)\| = \|(\mathbf{H}_{\mathbf{w}_\eta}-\mathbf{H}_{\mathbf{w}^\star})\mathbf{H}_{\mathbf{w}^\star}^{-1}\Delta\| \leq \frac{\gamma_2\left(2c\lambda+K\left(c\gamma_1+c_1\lambda\right)\left(2\Tilde{\mathbf{D}}_{mm}-1\right)\right)^2}{\lambda^4 (m-1)}.
    \end{align}
\end{theorem}
The main challenge arising in the proofs of Theorem~\ref{thm:ER_SGCcase} and~\ref{thm:NR_SGCcase} is bounding $\|\Delta\|$ appropriately. Unlike for the node feature unlearning case, now both graph structure and node features can change due to the unlearning request. We establish a series of lemmas to characterize the differences between $\mathbf{Z}$ and $\mathbf{Z}^\prime$, which play an important role in our proofs (see Appendix~\ref{apx:pfER_SGC} and ~\ref{apx:pfNR_SGC} for complete proofs).

\textbf{Certified graph unlearning in GPR-based model. }
Our analysis can be extended to Generalized PageRank (GPR)-based models~\cite{li2019optimizing}. The definition of GPR is $\sum_{k=0}^{K}\theta_k \mathbf{P}^k\mathbf{S}$, where $\mathbf{S}$ denotes a node feature or node embedding. The learnable weights $\theta_k$ are called GPR weights and different choices for the weights lead to different propagation rules~\cite{jeh2003scaling,chung2007heat}. GPR-type propagations include SGC and APPNP rules as special cases~\cite{chien2021adaptive}. If we use linearly transformed features $\mathbf{S}=\mathbf{X}\bar{\mathbf{W}},$ for some weight matrix $\bar{\mathbf{W}}$, the GPR rule can be rewritten as $\mathbf{Z}\mathbf{W} = \frac{1}{K+1}\left[\mathbf{X},\mathbf{P}\mathbf{X},\mathbf{P}^2\mathbf{X},\cdots,\mathbf{P}^K\mathbf{X} \right]\mathbf{W}$. This constitutes a concatenation of the steps from $0$ up to $K$. The learnable weight matrix $\mathbf{W}\in \mathbb{R}^{(K+1)F\times C}$ combines $\theta_k$ and $\bar{\mathbf{W}}$. These represent linearizations of GPR-GNNs~\cite{chien2021adaptive} and SIGNs~\cite{frasca2020sign}, simple yet useful models for learning on graphs. For simplicity, we only describe the results for node feature unlearning and delegate the analysis of edge and node unlearning to Appendix~\ref{apx:pfGPR}.
\begin{theorem}\label{thm:NFR_GPRcase}
    Suppose that Assumption~\ref{asp:guo} holds and considers the node feature unlearning case. For $\mathbf{Z}=\frac{1}{K+1}\left[\mathbf{X},\mathbf{P}\mathbf{X},\mathbf{P}^2\mathbf{X},\cdots,\mathbf{P}^K\mathbf{X} \right]$ and $\mathbf{P} =\Tilde{\mathbf{D}}^{-1}\Tilde{\mathbf{A}}$, we have
    \begin{align}
        \|\nabla L(\mathbf{w}^-,\mathcal{D}^\prime)\| = \|(\mathbf{H}_{\mathbf{w}_\eta}-\mathbf{H}_{\mathbf{w}^\star})\mathbf{H}_{\mathbf{w}^\star}^{-1}\Delta\| \leq \frac{\gamma_2(2c\lambda+(c\gamma_1+\lambda c_1)\Tilde{\mathbf{D}}_{mm})^2}{\lambda^4(m-1)}.
    \end{align}
\end{theorem}
Note that the resulting bound is the same as the bound in Theorem~\ref{thm:NFR_SGCcase}. This is due to the fact that we used the normalization factor $\frac{1}{K+1}$ in $\mathbf{Z}$. Hence, given the same noise level, the GPR-based models are more sensitive when we trained on the noisy loss $L_{\mathbf{b}}$. Whether the general high-level performance of GPR can overcompensate this drawback depends on the actual datasets considered.

\section{Empirical Aspect of Certified Graph Unlearning}\label{sec:Emp_discussion}
\textbf{Logistic and least-squares regression on graphs. }For binary logistic regression, the loss equals $\ell(\mathbf{e}_i^T\mathbf{Z}\mathbf{w}, \mathbf{e}_i^T\mathbf{Y}_{T_r}) = -\log(\sigma(\mathbf{e}_i^T\mathbf{Y}_{T_r}\mathbf{e}_i^T\mathbf{Z}\mathbf{w})),$ where $\sigma(x) = 1/(1+\exp(-x))$ denotes the sigmoid function. As shown in~\cite{guo2020certified}, the assumptions (1)-(3) in~\ref{asp:guo} are satisfied with $c=1$ and $\gamma_2 = 1/4$. We only need to show that (4) and (5) of~\ref{asp:guo} hold as well. By standard analysis, we show that our loss satisfies (4) and (5) in~\ref{asp:guo} with $\gamma_1=1/4$ and $c_1=1$. For multi-class logistic regression, one can adapt the ``one-versus-all other-classes'' strategy which leads to the same result. For least-square regression, since the hessian is independent of $\mathbf{w}$ our approach offers $(0, 0)$-certified graph unlearning even without loss perturbations. See Appendix~\ref{apx:diss} for the complete discussion and derivation.

\textbf{Sequential unlearning. }In practice, multiple users may request unlearning. Hence, it is desirable to have a model that supports sequential unlearning of all types of data points. One can leverage the same proof as in~\cite{guo2020certified} (induction coupled with the triangle inequality) to show that the resulting gradient residual norm bound equals $T\epsilon^\prime$ at the $T^{th}$ unlearning request, where $\epsilon^\prime$ is the bound for a single instance of certified graph unlearning. We relegate the complete proof in Appendix~\ref{apx:seq_unlearnig} for completeness.

\textbf{Data-dependent bounds. }The gradient residual norm bounds derived for different types of certified graph unlearning contain a constant factor $1/\lambda^4$, and may be loose in practice. Following~\cite{guo2020certified}, we also examined data-dependent bounds.
\begin{corollary}[Application of Corollary 1 in~\cite{guo2020certified}]\label{cor:data_bound}
For all three graph unlearning scenarios, we have $\|\nabla L(\mathbf{w}^-,\mathcal{D}^\prime)\|\leq \gamma_2\|\mathbf{Z}^\prime\|_{op}\|\mathbf{H}_{\mathbf{w}^\star}^{-1}\Delta\|\|\mathbf{Z}^\prime\mathbf{H}_{\mathbf{w}^\star}^{-1}\Delta\|$.
\end{corollary}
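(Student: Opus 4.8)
The plan is to mimic the proof of Corollary~1 in~\cite{guo2020certified}, replacing the worst-case control of the Hessian difference by a \emph{data-dependent} one, and to observe that the only place where the graph enters is through the row-norm bound on $\mathbf{Z}^\prime$, which is already delivered by Lemma~\ref{lma:Z_norm_bound_1}. First I would reuse the identity $\nabla L(\mathbf{w}^-,\mathcal{D}^\prime)=(\mathbf{H}_{\mathbf{w}_\eta}-\mathbf{H}_{\mathbf{w}^\star})\mathbf{H}_{\mathbf{w}^\star}^{-1}\Delta$ already established in Theorems~\ref{thm:NFR_SGCcase}--\ref{thm:NR_SGCcase}: it follows from $\nabla L(\mathbf{w}^\star,\mathcal{D})=0$, the definition $\mathbf{w}^-=\mathbf{w}^\star+\mathbf{H}_{\mathbf{w}^\star}^{-1}\Delta$, and the fundamental theorem of calculus applied to $t\mapsto\nabla L(\mathbf{w}^\star+t(\mathbf{w}^--\mathbf{w}^\star),\mathcal{D}^\prime)$, with $\mathbf{w}_\eta$ a point on the segment $[\mathbf{w}^\star,\mathbf{w}^-]$ (equivalently, one may keep the averaged Hessian $\int_0^1\mathbf{H}_{\mathbf{w}^\star+t(\mathbf{w}^--\mathbf{w}^\star)}\,dt$ under the integral). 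This step is identical for all three unlearning requests.

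Next I would exploit the structure of the Hessian of the $L_2$-regularized loss. Writing $\mathbf{H}_{\mathbf{w}}=\sum_{i:\mathbf{e}_i^T\mathbf{Y}_{T_r}\neq0}\ell''(\mathbf{e}_i^T\mathbf{Z}^\prime\mathbf{w},\mathbf{e}_i^T\mathbf{Y}_{T_r})\,((\mathbf{Z}^\prime)^T\mathbf{e}_i)(\mathbf{e}_i^T\mathbf{Z}^\prime)+\lambda\mathbf{I}$, the $\lambda\mathbf{I}$ term cancels in $\mathbf{H}_{\mathbf{w}_\eta}-\mathbf{H}_{\mathbf{w}^\star}$, leaving $\mathbf{H}_{\mathbf{w}_\eta}-\mathbf{H}_{\mathbf{w}^\star}=(\mathbf{Z}^\prime)^T\mathbf{C}\,\mathbf{Z}^\prime$, where $\mathbf{C}$ is diagonal with entries $\ell''(\mathbf{e}_i^T\mathbf{Z}^\prime\mathbf{w}_\eta,\cdot)-\ell''(\mathbf{e}_i^T\mathbf{Z}^\prime\mathbf{w}^\star,\cdot)$ on $T_r$ and $0$ elsewhere. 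Submultiplicativity of the operator norm (using $\|(\mathbf{Z}^\prime)^T\|_{op}=\|\mathbf{Z}^\prime\|_{op}$) then gives $\|\nabla L(\mathbf{w}^-,\mathcal{D}^\prime)\|\le\|\mathbf{Z}^\prime\|_{op}\,\|\mathbf{C}\|_{op}\,\|\mathbf{Z}^\prime\mathbf{H}_{\mathbf{w}^\star}^{-1}\Delta\|$. Since $\mathbf{C}$ is diagonal, $\|\mathbf{C}\|_{op}=\max_i|\ell''(\mathbf{e}_i^T\mathbf{Z}^\prime\mathbf{w}_\eta,\cdot)-\ell''(\mathbf{e}_i^T\mathbf{Z}^\prime\mathbf{w}^\star,\cdot)|$, and the $\gamma_2$-Lipschitzness of $\ell''$ (Assumption~\ref{asp:guo}(2)) together with Cauchy--Schwarz bounds this by $\gamma_2\max_i\|\mathbf{e}_i^T\mathbf{Z}^\prime\|\cdot\|\mathbf{w}_\eta-\mathbf{w}^\star\|$; and $\|\mathbf{w}_\eta-\mathbf{w}^\star\|\le\|\mathbf{w}^--\mathbf{w}^\star\|=\|\mathbf{H}_{\mathbf{w}^\star}^{-1}\Delta\|$ because $\mathbf{w}_\eta$ lies on $[\mathbf{w}^\star,\mathbf{w}^-]$.

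Finally I would close the argument with $\max_i\|\mathbf{e}_i^T\mathbf{Z}^\prime\|\le1$, which is exactly where the graph structure is handled uniformly across the three cases: in node feature unlearning $\mathbf{Z}^\prime=\mathbf{P}^K\mathbf{X}^\prime$, in edge unlearning $\mathbf{Z}^\prime=(\mathbf{P}^\prime)^K\mathbf{X}$, and in node unlearning $\mathbf{Z}^\prime=(\mathbf{P}^\prime)^K\mathbf{X}^\prime$; in each case $\mathbf{P}^\prime$ is the asymmetric-normalized propagation matrix of a graph that still carries a self-loop on every node (hence is right stochastic), and every row of $\mathbf{X}^\prime$ has norm at most $1$ by Assumption~\ref{asp:guo}(3), so Lemma~\ref{lma:Z_norm_bound_1} applies verbatim. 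Chaining the inequalities yields $\|\nabla L(\mathbf{w}^-,\mathcal{D}^\prime)\|\le\gamma_2\|\mathbf{Z}^\prime\|_{op}\|\mathbf{H}_{\mathbf{w}^\star}^{-1}\Delta\|\|\mathbf{Z}^\prime\mathbf{H}_{\mathbf{w}^\star}^{-1}\Delta\|$; combined with Theorem~\ref{thm:app_thm3} this gives a data-dependent certified-unlearning guarantee. Note that, unlike the worst-case bounds, this derivation uses only Assumption~\ref{asp:guo}(2)--(3) and never the auxiliary conditions (1),(4),(5). I expect the only subtle point to be the uniform row-norm bound on $\mathbf{Z}^\prime$ in the edge- and node-unlearning cases, where the propagation matrix itself changes — but this is precisely what Lemma~\ref{lma:Z_norm_bound_1} provides once one checks that $\mathbf{P}^\prime$ remains right stochastic with self-loops; everything else is a short chain of operator-norm inequalities with no genuine obstacle.
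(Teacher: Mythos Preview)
Your proposal is correct and follows exactly the route the paper intends: the corollary is stated as a direct application of Corollary~1 in~\cite{guo2020certified}, and the paper does not supply a separate proof, so your decomposition $\mathbf{H}_{\mathbf{w}_\eta}-\mathbf{H}_{\mathbf{w}^\star}=(\mathbf{Z}^\prime)^T\mathbf{C}\mathbf{Z}^\prime$ followed by submultiplicativity and the row-norm bound $\max_i\|\mathbf{e}_i^T\mathbf{Z}^\prime\|\le 1$ is precisely the argument being invoked. One small imprecision worth fixing: in the node-unlearning case the paper sets $\tilde{\mathbf{A}}^\prime_{mm}=0$, so node $m$ carries no self-loop and $\mathbf{P}^\prime$ is not right stochastic on that row; Lemma~\ref{lma:Z_norm_bound_1} therefore does not apply verbatim, and the paper instead uses the dedicated variant Lemma~\ref{lma:Z_norm_bound_3} (the $m$-th row of $\mathbf{Z}^\prime$ is simply zero), but the conclusion $\max_i\|\mathbf{e}_i^T\mathbf{Z}^\prime\|\le 1$ is unchanged and your chain of inequalities goes through.
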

Hence, there are two ways to accomplish certified graph unlearning. If we do not allow any retraining, we have to leverage the worst-case bound in Section~\ref{sec:CGU} based on the expected number of unlearning requests. Importantly, we will also need to constrain the node degree of nodes to be unlearned (i.e., do not allow for unlearning hub nodes), for both node feature and node unlearning. Otherwise, we can select the noise standard deviation $\alpha$, $\epsilon$ and $\delta$ and compute the corresponding ``privacy budget'' $\alpha\epsilon/\sqrt{2\log(1.5/\delta)}$. Once the accumulated gradient residual norm exceeds this budget, we retrain the model from scratch. Note that this still greatly reduces the time complexity compare to retraining the model for every unlearning request (see Section~\ref{sec:exp}). We provide the pseudo-code for our method that leverages data-dependent bounds for sequential unlearning in Appendix~\ref{apx:Psuedo_code}.

\section{Experiment}\label{sec:exp}
We test our certified graph unlearning methods by verifying theorems and via comparisons with baseline methods on benchmark datasets.

\textbf{Settings.} We test our methods on benchmark datasets for graph learning, including Cora, Citseer, Pubmed~\cite{sen2008collective,yang2016revisiting,fey2019fast} and large-scale dataset ogbn-arxiv~\cite{hu2020open} and Amazon co-purchase networks Computers and Photo~\cite{mcauley2015image,shchur2018pitfalls}. We either use the public splitting or random splitting based on similar rules as public splitting and focus on node classification. Following~\cite{guo2020certified}, we use LBFGS as the optimizer for all methods due to its high efficiency on strongly convex problems. Unless specified otherwise, we fix $K=2, \delta = 10^{-4}, \lambda=10^{-2}, \epsilon=1, \alpha=0.1$ for all experiments, and average the results over $5$ independent trails with random initializations. Our baseline methods include complete retraining with graph information after each unlearning request (SGC Retraining), complete retraining without graph information after each unlearning request (No Graph Retraining), and Algorithm 2 in~\cite{guo2020certified}. Additional details can be found in Appendix~\ref{apx:more_exp_details}.


\begin{figure}[t]
    \centering
    \includegraphics[trim={0 0cm 0cm 7.74cm},clip,width=\linewidth]{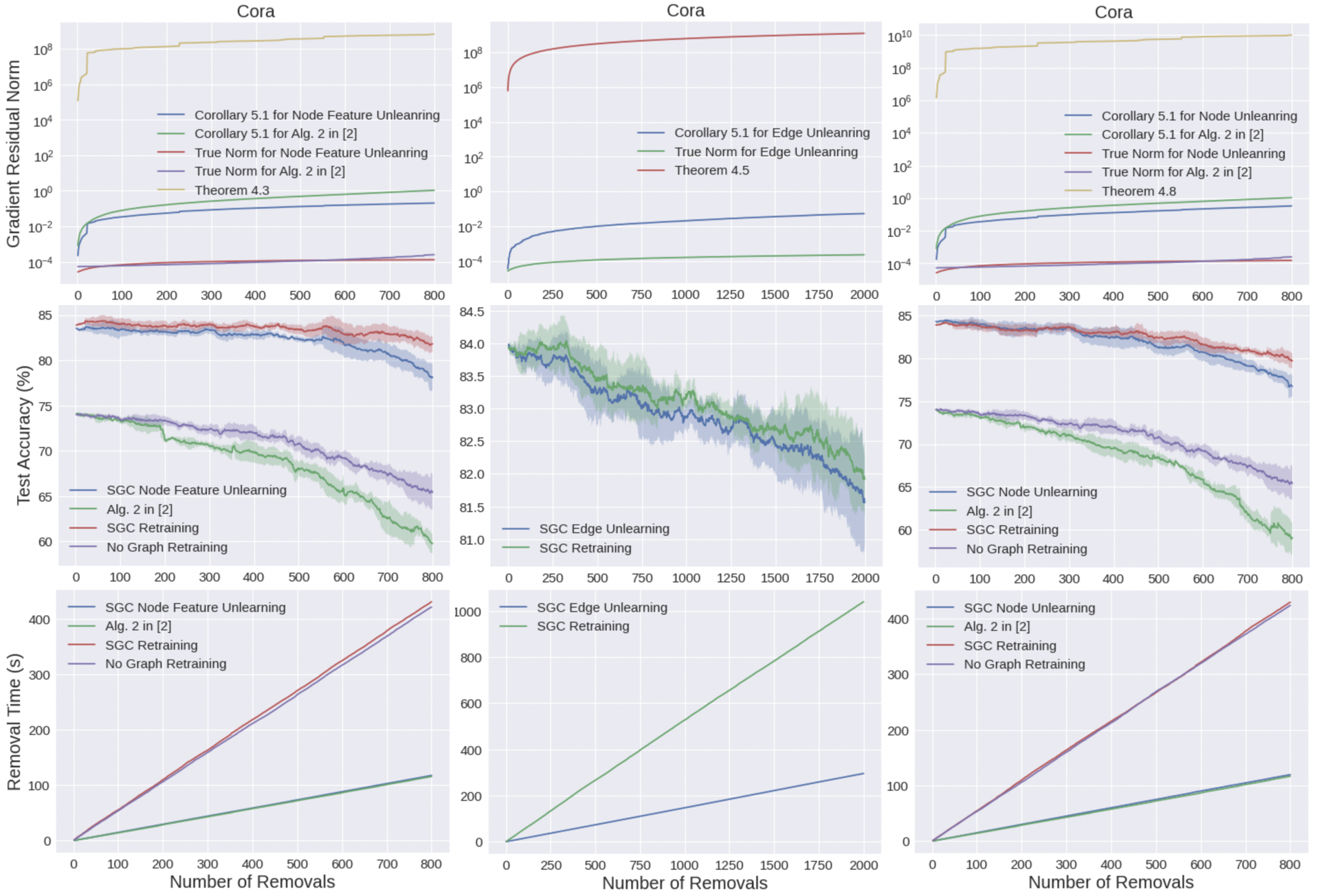}
    \vspace{-0.4cm}
    \caption{Comparison of proposed SGC node feature unlearning (left column), edge unlearning (middle column) and node unlearning (right column) with baseline methods. The shaded regions in the first row represent the standard deviation of test accuracy. In the second row, we show the accumulated unlearning time as a function of the number of unlearned points. The time needed for each unlearning procedure is given in Appendix~\ref{apx:more_exp_details}.}
    \label{fig:cora_full}
    \vspace{-0.2cm}
\end{figure}

\textbf{Dependency on node degrees.} While an upper bound does not necessarily capture the dependency of each term correctly, we show in Figure~\ref{fig:other_datasets} (a) and (b) that our Theorem~\ref{thm:NR_SGCcase} and~\ref{thm:NFR_GPRcase} indeed do so. Here, each point corresponds to unlearning one node. We test for all nodes in the training set and fix $\lambda=10^{-4},\alpha=0$. Our results show that unlearning a large-degree node is more expensive in terms of the privacy budget (i.e., it induces a larger gradient residual norm). The node degree dependency is unclear if one merely examines Corollary~\ref{cor:data_bound}. For other datasets, refer to Appendix~\ref{apx:more_exp_details}. The worst-case bounds are looser than the data-dependent bounds, matching the observation from~\cite{guo2020certified}.

\begin{figure}[t]
    \centering
    \includegraphics[width=\linewidth]{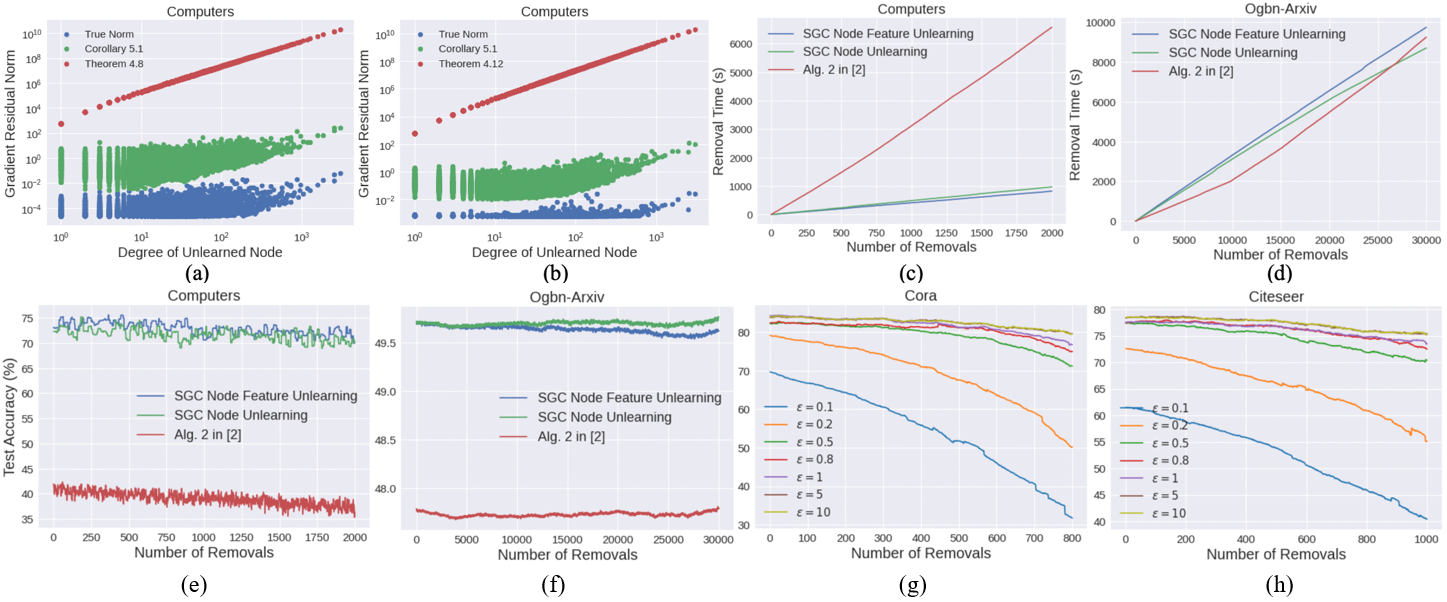}
    \vspace{-0.4cm}
    \caption{(a), (b) Simulation verification of the result in Theorem~\ref{thm:NR_SGCcase} and~\ref{thm:NFR_GPRcase} pertaining to node degrees. (c), (d) Accumulated unlearning time as a function of the number of removed points. The unlearning time of Algorithm 2 from~\cite{guo2020certified} is often higher than that of our proposed certified graph unlearning algorithms, because the number of retraining steps needed may be larger.
    (e), (f) Performance of certified graph unlearning methods on different datasets. We set $\alpha=10,\lambda=10^{-4}$ for Computers and $\lambda=10^{-4}$ for ogbn-arxiv. The number of repeated trails is $3$ due to large amount of removed data. (g), (h) Tradeoff between privacy $\epsilon$ and performance. To match the number of retraining cycles, we set $\alpha\epsilon=0.1$.}
    \label{fig:other_datasets}
    \vspace{-0.4cm}
\end{figure}

\textbf{Performance of certified graph unlearning methods.} The performance of our proposed certified graph unlearning methods, including the time complexity of unlearning and test accuracy after unlearning, is shown in Figures~\ref{fig:cora_full} and~\ref{fig:other_datasets}. It shows that: (1) Leveraging graph information is necessary when designing unlearning methods for node classification tasks. (2) Our method supports unlearning a large proportion of data points with a small loss in test accuracy. (3) Our method is around $4\times$ faster than completely retraining the model after each unlearning request. (4) Our methods have robust performance regardless of the scale of the datasets (see Appendix~\ref{apx:more_exp_details}).

\textbf{Trade-off amongst privacy, performance and time complexity.} As indicated in Theorem~\ref{thm:app_thm3}, there exists a trade-off amongst privacy, performance and time complexity parameters. Compared to exact unlearning (i.e. SGC retraining), approximate unlearning offers $4\times$ speedups in time complexity with competitive performance (see Figure~\ref{fig:cora_full}). Figure~\ref{fig:other_datasets} (c)-(d) shows the need for certified graph unlearning compared to unstructured unlearning, where we achieve higher accuracy with comparable or lower unlearning time. Note that it is possible to have a lower unlearning time for SGC unlearning compared to unstructured unlearning, as the latter may require more frequent retraining updates due to reaching the preset privacy budget. We further examine this trade-off by fixing $\lambda$ and $\delta$, in which case the trade-off is controlled by $\epsilon$ and $\alpha$. The results are shown in Figure~\ref{fig:other_datasets} (g) and (h) for Cora and Citeseer respectively, where we set $\alpha\epsilon = 0.1$. The test accuracy increases when we relax our constraints on $\epsilon$, which agrees with our intuition. Remarkably, we can still obtain competitive performance with SGC Retraining when we require $\epsilon$ to be as small as $1$. In contrast, one needs at least $\epsilon\geq 5$ to unlearn even \emph{$one$} node or edge by leveraging state-of-the-art DP-GNNs~\cite{sajadmanesh2022gap,daigavane2021node} for reasonable performance, albeit our tested datasets are different. This shows the benefit of our certified graph unlearning method as opposed to both retraining from scratch and DP-GNNs. The code for DP-GNNs is not publicly available, which prevents us from testing them on our datasets in a unified treatment.

\section{Conclusion}\label{apx:conclusion}
We introduced the first known framework for certified graph unlearning. In this setting, new analytical and heuristic unlearning challenges had to be addressed due to the presence of complex graph feature and topology data. Our analytical contributions pertain to novel proof techniques for certified graph unlearning, while our empirical studies on six benchmark datasets established fundamental performance-complexity trade-offs between unlearning and complete retraining.

\begin{ack}
    This work was funded by NSF grants 1816913 and 1956384.
\end{ack}

\bibliography{Ref}
\bibliographystyle{IEEEtran}

\section*{Appendix}

\subsection{Limitations and future research directions }\label{apx:limit}
\textbf{Batch unlearning. }In practice, it is likely that we not only require sequential unlearning, but also batch unlearning: A number of users may request their data to be unlearned within a certain (short) time frame. The approach in~\cite{guo2020certified} can ensure  certified removal even in this scenario. The generalization of our approach for batch unlearning is also possible, but will be discussed elsewhere.

\textbf{Nonlinear models. }Also akin to what was described in~\cite{guo2020certified}, we can leverage pre-trained (nonlinear) feature extractors or special graph feature transforms to further improve the performance of the overall model. For example, Chien et al.~\cite{chien2022node} proposed a node feature extraction method termed GIANT-XRT that greatly improves the performance of simple network models such as MLP and SGC. If a public dataset is never subjected to unlearning, one can pre-train GIANT-XRT on that dataset and use it for subsequent certified graph unlearning. If such a public dataset is unavailable, we have to make the node feature extractor DP. In this case, we can either design a DP version of GIANT-XRT or leverage the DP-GNN model described in Section~\ref{sec:related}. By applying Theorem 5 of~\cite{guo2020certified}, the overall model can be shown to guarantee certified graph unlearning, where the parameters $\epsilon$ and $\delta$ now also depend on the DP guarantees of the node feature extractor. There is also another line of work on Graph Scattering Transforms (GSTs)~\cite{gama2019stability,pan2021spatiotemporal} for use as feature extractors for graph information. Since a GST is a predefined mathematical transform and hence does not require training, it can be easily combined with our approach. The rigorous analysis is delegated to future work.

\textbf{Societal impacts. } The authors believe that for medical and biological sciences, the right to be forgotten may significantly set back potentially life-saving discoveries due to the need to have access to many diverse data samples. But current trends seem to favor privacy over discovery rates and timings. Hence, a compromise between data availability and the right to be forgotten has to be established in the near future.

One current limitation of our work is that the newly proposed proof-techniques do not apply to general graph neural networks where nonlinear activation functions are used. Nevertheless, our work is the first step towards developing certified graph unlearning approaches for general GNNs. 


\subsection{Intuition behind the model update rule}\label{app:update_intuition}
Our unlearning mechanism proposed in Section~\ref{sec:CGU} is 
$$
\mathbf{w}^{-}=\mathbf{w}^\star + \left[\nabla^2 L(\mathbf{w}^\star,\mathcal{D}^\prime)\right]^{-1}\left[\nabla L(\mathbf{w}^\star,\mathcal{D})-\nabla L(\mathbf{w}^\star,\mathcal{D}^\prime)\right],
$$
and the intuition behind the approach is as follows. Our goal is to have $\nabla L(\mathbf{w}^{-},\mathcal{D}^\prime)=0$ for the updated model. 
Using Taylor series expansion we have
$$
\nabla L(\mathbf{w}^{-},\mathcal{D}^\prime)\approx \nabla L(\mathbf{w}^\star,\mathcal{D}^\prime) + \nabla^2 L(\mathbf{w}^\star,\mathcal{D}^\prime)(\mathbf{w}^{-}-\mathbf{w}^\star)=0.
$$
Therefore,
\begin{align*}
\mathbf{w}^{-}-\mathbf{w}^\star&=\left[\nabla^2 L(\mathbf{w}^\star,\mathcal{D}^\prime)\right]^{-1}\left[0-\nabla L(\mathbf{w}^\star,\mathcal{D}^\prime)\right] \\
\mathbf{w}^{-}&=\mathbf{w}^\star + \left[\nabla^2 L(\mathbf{w}^\star,\mathcal{D}^\prime)\right]^{-1}\left[\nabla L(\mathbf{w}^\star,\mathcal{D})-\nabla L(\mathbf{w}^\star,\mathcal{D}^\prime)\right].
\end{align*}
The last equality holds due to the fact that $\mathbf{w}^\star$ should be the unique optimizer for the strongly convex loss $L(\mathbf{w},\mathcal{D})$ over the entire dataset $\mathcal{D}$.

\subsection{Additional discussions}\label{apx:diss}
\textbf{Details regarding Assumption~\ref{asp:guo}. }Assumptions (2), (4) and (5) in our model and that of~\cite{guo2020certified} require Lipschitz conditions with respect to the first argument of $\ell$, but not the second. We also implicitly assume that the second argument (corresponding to labels) does not effect the norm of gradients or Hessians. One example that meets these constraints is the logistic loss: If $\ell(\mathbf{w}^T\mathbf{x},y) = \ell(y\mathbf{w}^T\mathbf{x})$ then all required assumptions hold.

\textbf{Least-squares and logistic regression on graphs. }Paralleling once again the results of~\cite{guo2020certified}, it is clear that 
our certified graph unlearning mechanism can be used in conjunction with least-squares and logistic regressions. For example, node classification can be performed using a logistic loss. The node regression problem described in~\cite{ma2020copulagnn,jia2020residual} is related to least-squares regression. In particular, least-squares regression uses the loss $\ell(\mathbf{e}_i^T\mathbf{Z}\mathbf{w}, \mathbf{e}_i^T\mathbf{Y}_{T_r}) = (\mathbf{e}_i^T\mathbf{Z}\mathbf{w}-\mathbf{e}_i^T\mathbf{Y}_{T_r})^2$. Note that its Hessian is of the form $(\mathbf{e}_i^T\mathbf{Z})^T\mathbf{e}_i^T\mathbf{Z},$ which does not depend on $\mathbf{w}$. Thus, based on the same arguments presented in~\cite{guo2020certified}, our proposed unlearning method $M$ offers $(0,0)$-certified graph unlearning even without loss perturbations. 

For binary logistic regression, the loss equals $\ell(\mathbf{e}_i^T\mathbf{Z}\mathbf{w}, \mathbf{e}_i^T\mathbf{Y}_{T_r}) = -\log(\sigma(\mathbf{e}_i^T\mathbf{Y}_{T_r}\mathbf{e}_i^T\mathbf{Z}\mathbf{w})),$ where $\sigma(x) = 1/(1+\exp(-x))$ denotes the sigmoid function. As shown in~\cite{guo2020certified}, the assumptions (1)-(3) in~\ref{asp:guo} are satisfied with $c=1$ and $\gamma_2 = 1/4$. We only need to show that (4) and (5) of~\ref{asp:guo} hold as well. Observe that $\ell'(x, \mathbf{e}_i^T\mathbf{Y}_{T_r}) = \left (\sigma(\mathbf{e}_i^T\mathbf{Y}_{T_r}x)-1 \right).$ Since the sigmoid function $\sigma(\cdot)$ is restricted to lie in $[0,1]$, $|\ell'|$ is bounded by $1$, which means that our loss satisfies (5) in~\ref{asp:guo} with $c_1=1$. Based on the Mean Value Theorem, one can show that $\sigma(x)$ is $\max_{x\in \mathbb{R}}|\sigma(x)'|$-Lipschitz. Using some simple algebra, one can also prove that $\sigma(x)' = \sigma(x)(1-\sigma(x)) \Rightarrow \max_{x\in \mathbb{R}}|\sigma(x)'| = 1/4.$ Thus our loss satisfies assumption (4) in~\ref{asp:guo} as well, with $\gamma_1=1/4$. For multi-class logistic regression, one can adapt the ``one-versus-all other-classes'' strategy which leads to the same result.

\subsection{Algorithmic details}\label{apx:Psuedo_code}
The pseudo-codes for training removal-enabled models and the removal procedure for the case of binary classification are presented below. Note that this procedure is the same for all three types of removal requests (node feature unlearning, edge unlearning and node unlearning). During training, we add a random linear term to the training loss by sampling a Gaussian noise vector $\mathbf{b}$. The choice of standard deviation $\alpha$ is determined by the $\alpha\epsilon/\sqrt{2\log(1.5/\delta)},$ as described in Section~\ref{sec:Emp_discussion}.

\begin{algorithm}
  \caption{Training procedure}
  \label{alg:training}
  \begin{algorithmic}[1]
    \STATE \textbf{Input:} Training data $\mathbf{Z}\in\mathbb{R}^{m\times d}$, training labels $\mathbf{Y}\in\mathbb{R}^{m}$, loss $\ell$, parameters $\alpha,\lambda > 0$.
    \STATE Sample the noise vector $\mathbf{b}\sim\mathcal{N}(\mathbf{0},\alpha^2\mathbf{I}_d)$.
    \STATE $\mathbf{w}^\star = \arg\min_{\mathbf{w}\in\mathbb{R}^d}\sum_{i=1}^m \left(\ell(\mathbf{z}_i^T\mathbf{w},y_i) + \frac{\lambda }{2}\|\mathbf{w}\|^2\right)+\mathbf{b}^T\mathbf{w}$.
    \RETURN $\mathbf{w}^\star$.
  \end{algorithmic}
\end{algorithm}

\begin{algorithm}
  \caption{Unlearning procedure}
  \label{alg:unlearning}
  \begin{algorithmic}[1]
    \STATE \textbf{Input:} Feature matrix $\mathbf{X}\in\mathbb{R}^{n\times d}$, labels $\mathbf{Y}\in\mathbb{R}^{n}$, one-step propagation matrix $\mathbf{P}$, loss $\ell$, training set indices $T_r=\{i_1,i_2,\ldots\}$, sequence of removal requests $R_m=\{j_1,j_2,\ldots\}$, parameters $K, \epsilon, \delta, \gamma_2, \alpha,\lambda > 0$.
    \STATE Compute node embedding after propagation $\mathbf{Z}=\mathbf{P}^K \mathbf{X}$.
    \STATE Training set $\mathcal{D}=\{\mathbf{z}_i,y_i\}_{i\in T_r}$.
    \STATE Compute $\mathbf{w}$ using Algorithm~\ref{alg:training} ($\mathcal{D}, \ell, \alpha, \lambda$).
    \STATE Accumulated gradient residual norm $\beta=0$.
    \FOR{$j \in R_m$}
      \STATE Update the feature matrix $\mathbf{X}^\prime$ and propagation matrix $\mathbf{P}^\prime$ based on the removed point.
      \STATE Compute new node embedding after propagation $\mathbf{Z}^\prime={\mathbf{P}^\prime}^K \mathbf{X}^\prime$.
      \IF{$j\in Tr$}
       \STATE Remove $j$ from the training indices $T_r=T_r\setminus \{j\}$.
      \ENDIF
      \STATE Update the training set $\mathcal{D}^\prime=\{\mathbf{z}_i^\prime,y_i\}_{i\in T_r}$.
      \STATE Compute $\Delta=\nabla L\left(\mathbf{w}, \mathcal{D}\right)-\nabla L\left(\mathbf{w}, \mathcal{D}^{\prime}\right)$.
      \STATE Compute $\mathbf{H}=\nabla^{2} L\left(\mathbf{w},\mathcal{D}^{\prime}\right)$.
      \STATE Update the accumulated gradient residual norm $\beta = \beta + \gamma_2\|\mathbf{Z}^\prime\|_{op}\|\mathbf{H}^{-1}\Delta\|\|\mathbf{Z}^\prime\mathbf{H}^{-1}\Delta\|$.
      \IF{$\beta > \alpha\epsilon/\sqrt{2\log(1.5/\delta)}$}
      \STATE Recompute $\mathbf{w}$ using Algorithm~\ref{alg:training} ($\mathcal{D}^\prime, \ell, \alpha, \lambda$), $\beta=0$.
      \ELSE
      \STATE $\mathbf{w}=\mathbf{w}+\mathbf{H}^{-1}\Delta$.
      \ENDIF
    \ENDFOR
    \RETURN $\mathbf{w}$.
  \end{algorithmic}
\end{algorithm}

\subsection{Proof of Theorem~\ref{thm:NFR_SGCcase}}
\begin{theorem*}
    Under the node feature unlearning scenario, $\mathcal{D}=(\mathbf{X},\mathbf{Y}_{T_r},\mathbf{A})$ and $\mathcal{D}=(\mathbf{X}^\prime,\mathbf{Y}_{T_r}^\prime,\mathbf{A})$. Suppose Assumption~\ref{asp:guo} holds. For $\mathbf{Z}=\mathbf{P}^K\mathbf{X}$ and $\mathbf{P} =\Tilde{\mathbf{D}}^{-1}\Tilde{\mathbf{A}}$, we have
    \begin{align}
        \|\nabla L(\mathbf{w}^-,\mathcal{D}^\prime)\| = \|(\mathbf{H}_{\mathbf{w}_\eta}-\mathbf{H}_{\mathbf{w}^\star})\mathbf{H}_{\mathbf{w}^\star}^{-1}\Delta\| \leq \frac{\gamma_2(2c\lambda+(c\gamma_1+\lambda c_1)\Tilde{\mathbf{D}}_{mm})^2}{\lambda^4(m-1)}.
    \end{align}
\end{theorem*}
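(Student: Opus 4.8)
The plan is to bound $\|\nabla L(\mathbf{w}^-,\mathcal{D}^\prime)\|$ by combining the exact expression for the gradient residual via Taylor's theorem with the two ingredients sketched in the excerpt: a norm bound on $\mathbf{Z}$ (Lemma~\ref{lma:Z_norm_bound_1}) and a bound on $\|\Delta\|$. First I would recall that since $\mathbf{w}^\star$ minimizes $L(\cdot;\mathcal{D})$ we have $\nabla L(\mathbf{w}^\star,\mathcal{D})=0$, so $\Delta = -\nabla L(\mathbf{w}^\star,\mathcal{D}^\prime)$. Then, applying the mean value form of Taylor's theorem to $\nabla L(\cdot;\mathcal{D}^\prime)$ about $\mathbf{w}^\star$ and using the update rule $\mathbf{w}^- = \mathbf{w}^\star + \mathbf{H}_{\mathbf{w}^\star}^{-1}\Delta$, one gets $\nabla L(\mathbf{w}^-,\mathcal{D}^\prime) = (\mathbf{H}_{\mathbf{w}_\eta}-\mathbf{H}_{\mathbf{w}^\star})\mathbf{H}_{\mathbf{w}^\star}^{-1}\Delta$ for some $\mathbf{w}_\eta$ on the segment between $\mathbf{w}^\star$ and $\mathbf{w}^-$; this is exactly the middle expression in the theorem statement and is identical to the argument in~\cite{guo2020certified}. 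Taking norms gives $\|\nabla L(\mathbf{w}^-,\mathcal{D}^\prime)\| \le \|\mathbf{H}_{\mathbf{w}_\eta}-\mathbf{H}_{\mathbf{w}^\star}\|_{op}\,\|\mathbf{H}_{\mathbf{w}^\star}^{-1}\|_{op}\,\|\Delta\|$.

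Next I would bound each factor. For $\|\mathbf{H}_{\mathbf{w}^\star}^{-1}\|_{op}$: since $L(\cdot;\mathcal{D}^\prime)$ is $\lambda$-strongly convex (the $L_2$ regularizer contributes $\lambda\mathbf{I}$ and the data terms are convex), $\|\mathbf{H}_{\mathbf{w}^\star}^{-1}\|_{op}\le 1/\lambda$. Summing over the $m-1$ remaining training points and using that there is a $1/\lambda$ for each term in the regularizer sum, this ultimately contributes a factor $1/(\lambda^2(m-1))$ or similar (following the normalization in~\cite{guo2020certified}, where the loss is an average so the strong convexity parameter scales; here the key $1/(m-1)$ comes from the averaging convention in the Hessian-difference bound). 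For $\|\mathbf{H}_{\mathbf{w}_\eta}-\mathbf{H}_{\mathbf{w}^\star}\|_{op}$: using that $\ell''$ is $\gamma_2$-Lipschitz (Assumption~\ref{asp:guo}(2)) and that each row of $\mathbf{Z}$ (and $\mathbf{Z}^\prime$) has norm at most $1$ by Lemma~\ref{lma:Z_norm_bound_1}, one bounds this by roughly $\gamma_2\|\mathbf{w}_\eta-\mathbf{w}^\star\| \le \gamma_2\|\mathbf{H}_{\mathbf{w}^\star}^{-1}\Delta\| \le (\gamma_2/\lambda)\|\Delta\|$, again with the averaging giving the $1/(m-1)$. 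Collecting factors yields a bound of the form $\frac{\gamma_2}{\lambda^4(m-1)}\|\Delta\|^2$, so the whole problem reduces to showing $\|\Delta\| \le 2c\lambda + (c\gamma_1+\lambda c_1)\Tilde{\mathbf{D}}_{mm}$ (up to the normalization that matches the claimed numerator).

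The bulk of the work, and the main obstacle, is bounding $\|\Delta\|$. Writing $\Delta = \lambda\mathbf{w}^\star + \nabla\ell(\mathbf{e}_m^T\mathbf{Z}\mathbf{w}^\star,\mathbf{e}_m^T\mathbf{Y}_{T_r}) + \sum_{i=1}^{m-1}[\nabla\ell(\mathbf{e}_i^T\mathbf{Z}\mathbf{w}^\star,\mathbf{e}_i^T\mathbf{Y}_{T_r}) - \nabla\ell(\mathbf{e}_i^T\mathbf{Z}^\prime\mathbf{w}^\star,\mathbf{e}_i^T\mathbf{Y}_{T_r})]$, I would bound the three pieces separately. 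The term $\|\lambda\mathbf{w}^\star\|$ is handled via the optimality condition for $\mathbf{w}^\star$ on $\mathcal{D}$: $\lambda\mathbf{w}^\star = -\sum_i \nabla\ell(\dots)$, and since $\nabla\ell$ is $c$-bounded and $\mathbf{Z}$ has unit-norm rows, a normalized version gives $\|\lambda\mathbf{w}^\star\|\le c$; together with the second (single-point) term this produces the $2c\lambda$ contribution. The third (sum) term is the genuinely new difficulty: each summand is bounded, via the chain rule and $\gamma_1$-Lipschitzness of $\ell'$ plus $c_1$-boundedness of $\ell'$, by a multiple of $\|\mathbf{e}_i^T(\mathbf{Z}-\mathbf{Z}^\prime)\|$ (with coefficients involving $c\gamma_1/\lambda$ from bounding $\|\mathbf{w}^\star\|$ and $c_1$), and then summed over $i\in[m-1]$. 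Exploiting that $\mathbf{X}-\mathbf{X}^\prime = \mathbf{e}_m\mathbf{e}_m^T\mathbf{X}$ so $\mathbf{Z}-\mathbf{Z}^\prime = \mathbf{P}^K\mathbf{e}_m\mathbf{e}_m^T\mathbf{X}$, Cauchy--Schwarz and Assumption~\ref{asp:guo}(3) give $\|\mathbf{e}_i^T(\mathbf{Z}-\mathbf{Z}^\prime)\|\le \mathbf{e}_i^T\mathbf{P}^K\mathbf{e}_m$; summing over $i$ and extending to all $n$ coordinates bounds it by $\mathbf{1}^T\mathbf{P}^K\mathbf{e}_m$. The crucial estimate $\mathbf{1}^T\mathbf{P}^K\mathbf{e}_m \le \Tilde{\mathbf{D}}_{mm}$ is then proved exactly as in the excerpt's sketch: rewrite $\mathbf{1}^T\mathbf{P}^K\mathbf{e}_m = \Tilde{\mathbf{D}}_{mm}\,\mathbf{1}^T\Tilde{\mathbf{D}}^{-1}(\Tilde{\mathbf{A}}\Tilde{\mathbf{D}}^{-1})^K\mathbf{e}_m$, observe $(\Tilde{\mathbf{A}}\Tilde{\mathbf{D}}^{-1})^K\mathbf{e}_m$ is a probability vector (since $\Tilde{\mathbf{A}}\Tilde{\mathbf{D}}^{-1}$ is left stochastic), and note all diagonal entries of $\Tilde{\mathbf{D}}^{-1}$ are in $(0,1]$ because of the self-loops, so $\mathbf{1}^T\Tilde{\mathbf{D}}^{-1}\mathbf{p}\le 1$. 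This yields the coefficient of $\Tilde{\mathbf{D}}_{mm}$ in the bound, namely $(c\gamma_1 + \lambda c_1)$ after tracking constants, and is manifestly independent of $K$ because $\mathbf{P}$ is right stochastic — which is the delicate point that forces the choice $\mathbf{P}=\Tilde{\mathbf{D}}^{-1}\Tilde{\mathbf{A}}$ rather than the symmetric normalization. Assembling $\|\Delta\| \le 2c\lambda + (c\gamma_1+\lambda c_1)\Tilde{\mathbf{D}}_{mm}$ and substituting into $\frac{\gamma_2}{\lambda^4(m-1)}\|\Delta\|^2$ finishes the proof; the case where the unlearned node lies outside $T_r$ is identical with $\Tilde{\mathbf{D}}_{mm}$ replaced by that node's degree, and with the single-point term $\nabla\ell(\mathbf{e}_m^T\mathbf{Z}\mathbf{w}^\star,\cdot)$ absent.
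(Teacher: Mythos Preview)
Your approach is the same as the paper's, and all the key ideas (Taylor expansion, Lemma~\ref{lma:Z_norm_bound_1} for the row-norm bound, the decomposition of $\Delta$ into three pieces, and the left-stochastic argument for $\mathbf{1}^T\mathbf{P}^K\mathbf{e}_m\le\Tilde{\mathbf{D}}_{mm}$) are correctly identified. However, your tracking of the $\lambda$ and $(m-1)$ factors is muddled, and two of your intermediate claims are wrong in a way that happens to cancel.

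First, the loss $L(\cdot;\mathcal{D}')$ here is a \emph{sum} over the $m-1$ remaining training points, each carrying its own $\tfrac{\lambda}{2}\|\mathbf{w}\|^2$ term, so it is $\lambda(m-1)$-strongly convex and $\|\mathbf{H}_{\mathbf{w}^\star}^{-1}\|\le \tfrac{1}{\lambda(m-1)}$, not $1/\lambda$. The Hessian-difference bound likewise picks up a factor $(m-1)$ from summing over those points: $\|\mathbf{H}_{\mathbf{w}_\eta}-\mathbf{H}_{\mathbf{w}^\star}\|\le \gamma_2(m-1)\|\mathbf{H}_{\mathbf{w}^\star}^{-1}\Delta\|$. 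Combining these gives $\|\nabla L(\mathbf{w}^-,\mathcal{D}')\|\le \tfrac{\gamma_2}{\lambda^2(m-1)}\|\Delta\|^2$, \emph{not} $\tfrac{\gamma_2}{\lambda^4(m-1)}\|\Delta\|^2$ as you wrote. Second, your bound on $\|\Delta\|$ is off by a factor of $\lambda$: the first two pieces contribute $\|\lambda\mathbf{w}^\star\|+\|\nabla\ell(\mathbf{e}_m^T\mathbf{Z}\mathbf{w}^\star,\cdot)\|\le c+c=2c$, not $2c\lambda$, and the third piece contributes $(c\gamma_1/\lambda+c_1)\Tilde{\mathbf{D}}_{mm}$; hence $\|\Delta\|\le \tfrac{2c\lambda+(c\gamma_1+\lambda c_1)\Tilde{\mathbf{D}}_{mm}}{\lambda}$. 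The extra $1/\lambda^2$ in $\|\Delta\|^2$ is precisely what turns $\lambda^2$ into $\lambda^4$ in the denominator. Once you correct these two intermediate statements, the argument is exactly the paper's.
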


We need to ensure that the norm of each row of $\mathbf{Z}$ is bounded by $1$. We state the following lemma in support of this claim.
\begin{lemma}\label{lma:Z_norm_bound_1}
Assume that $\|\mathbf{e}_i^T\mathbf{S}\| \leq 1,\;\forall i\in [n]$. Then, $\forall i\in [n], K\geq 0$, $\|\mathbf{e}_i^T\mathbf{P}^K\mathbf{S}\|\leq 1$, where $\mathbf{P}=\Tilde{\mathbf{D}}^{-1}\Tilde{\mathbf{A}}$.
\end{lemma}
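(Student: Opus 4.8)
The plan is to exploit the fact that $\mathbf{P} = \tilde{\mathbf{D}}^{-1}\tilde{\mathbf{A}}$ is \emph{right stochastic}: its entries are nonnegative and each row sums to one, i.e. $\mathbf{P}\mathbf{1} = \mathbf{1}$. This is immediate because the row sums of $\tilde{\mathbf{A}}$ are precisely the diagonal entries of $\tilde{\mathbf{D}}$, so $\tilde{\mathbf{A}}\mathbf{1} = \tilde{\mathbf{D}}\mathbf{1}$ and hence $\mathbf{P}\mathbf{1} = \tilde{\mathbf{D}}^{-1}\tilde{\mathbf{D}}\mathbf{1} = \mathbf{1}$. Since a product of entrywise-nonnegative matrices is entrywise nonnegative and $\mathbf{P}^K\mathbf{1} = \mathbf{1}$ follows by a one-line induction on $K$, every power $\mathbf{P}^K$ is again right stochastic. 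In particular, for each $i$, the row vector $\mathbf{e}_i^T\mathbf{P}^K$ has nonnegative entries that sum to one, so it encodes convex-combination weights.

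With this in hand I would write the $i$-th row of $\mathbf{P}^K\mathbf{S}$ as $\mathbf{e}_i^T\mathbf{P}^K\mathbf{S} = \sum_{j=1}^n (\mathbf{e}_i^T\mathbf{P}^K\mathbf{e}_j)\,\mathbf{e}_j^T\mathbf{S}$, exhibiting it as a convex combination of the rows $\mathbf{e}_j^T\mathbf{S}$ of $\mathbf{S}$. Applying the triangle inequality and then the hypothesis $\|\mathbf{e}_j^T\mathbf{S}\|\le 1$ gives
\[
\|\mathbf{e}_i^T\mathbf{P}^K\mathbf{S}\| \;\le\; \sum_{j=1}^n (\mathbf{e}_i^T\mathbf{P}^K\mathbf{e}_j)\,\|\mathbf{e}_j^T\mathbf{S}\| \;\le\; \sum_{j=1}^n \mathbf{e}_i^T\mathbf{P}^K\mathbf{e}_j \;=\; 1 ,
\]
which is exactly the claim. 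Equivalently, the whole argument can be packaged as an induction on $K$: the base case $K=0$ is trivial since $\mathbf{P}^0\mathbf{S}=\mathbf{S}$, and the inductive step applies the convex-combination bound to $\mathbf{P}\big(\mathbf{P}^{K-1}\mathbf{S}\big)$, using that by the inductive hypothesis every row of $\mathbf{P}^{K-1}\mathbf{S}$ already has norm at most one.

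There is no serious obstacle here: the lemma merely says that left-multiplication by a row-stochastic matrix is a row-wise averaging operation, hence nonexpansive with respect to the ``maximum row norm''. The only point deserving a word of care — and the reason the analysis adopts the asymmetric normalization $\mathbf{P} = \tilde{\mathbf{D}}^{-1}\tilde{\mathbf{A}}$ rather than the symmetric $\tilde{\mathbf{D}}^{-1/2}\tilde{\mathbf{A}}\tilde{\mathbf{D}}^{-1/2}$ — is the verification $\mathbf{P}\mathbf{1}=\mathbf{1}$; the symmetric variant is not right stochastic and would only yield a bound growing with $K$. I would flag this normalization dependence explicitly, since the intended application is the instance $\mathbf{S}=\mathbf{X}$, for which the hypothesis is exactly Assumption~\ref{asp:guo}(3), giving $\|\mathbf{e}_i^T\mathbf{Z}\| = \|\mathbf{e}_i^T\mathbf{P}^K\mathbf{X}\| \le 1$ uniformly in $K$.
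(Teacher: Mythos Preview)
Your proof is correct and essentially identical to the paper's: both use that $\mathbf{P}$ is right stochastic so that each row of $\mathbf{P}^K\mathbf{S}$ is a convex combination of rows of $\mathbf{S}$, and then apply the triangle inequality. The paper phrases it as an induction on $K$ (exactly the alternative packaging you mention at the end), and its remark after the proof makes the same point you do about the necessity of the asymmetric normalization $\tilde{\mathbf{D}}^{-1}\tilde{\mathbf{A}}$.
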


\begin{proof}
Our proof is a nontrivial generalization and extension of the proof in~\cite{guo2020certified}. For completeness, we outline every step of the proof. We also emphasize novel approaches used to accommodate out graph certified unlearning scenario.  

Let $G(\mathbf{w}) = \nabla L(\mathbf{w},\mathcal{D}^\prime)$. By the Taylor theorem, $\exists \eta\in [0,1]$ such that
\begin{align}
    & G(\mathbf{w}^-) = G(\mathbf{w}^\star + \mathbf{H}_{\mathbf{w}^\star}^{-1}\Delta) = G(\mathbf{w}^\star) + \nabla G(\mathbf{w}^\star+\eta\mathbf{H}_{\mathbf{w}^\star}^{-1}\Delta)\mathbf{H}_{\mathbf{w}^\star}^{-1}\Delta \notag \\
    & \stackrel{(a)}{=} G(\mathbf{w}^\star) + \mathbf{H}_{\mathbf{w}_\eta}\mathbf{H}_{\mathbf{w}^\star}^{-1}\Delta \notag \\
    & = G(\mathbf{w}^\star) + \Delta + \mathbf{H}_{\mathbf{w}_\eta}\mathbf{H}_{\mathbf{w}^\star}^{-1}\Delta -\Delta \notag \\
    & \stackrel{(b)}{=} 0 + \mathbf{H}_{\mathbf{w}_\eta}\mathbf{H}_{\mathbf{w}^\star}^{-1}\Delta -\Delta \notag \\
    & = \mathbf{H}_{\mathbf{w}_\eta}\mathbf{H}_{\mathbf{w}^\star}^{-1}\Delta -\mathbf{H}_{\mathbf{w}^\star}\mathbf{H}_{\mathbf{w}^\star}^{-1}\Delta \notag \\
    & = (\mathbf{H}_{\mathbf{w}_\eta}-\mathbf{H}_{\mathbf{w}^\star})\mathbf{H}_{\mathbf{w}^\star}^{-1}\Delta.
\end{align}
In (a), we wrote $\mathbf{H}_{\mathbf{w}_\eta} \triangleq \nabla G(\mathbf{w}^\star+\eta\mathbf{H}_{\mathbf{w}^\star}^{-1}\Delta)$, corresponding to the Hessian at $\mathbf{w}_\eta \triangleq \mathbf{w}^\star+\eta\mathbf{H}_{\mathbf{w}^\star}^{-1}\Delta$. Equality (b) is due to our choice of $\Delta = \nabla L(\mathbf{w}^\star,\mathcal{D})-\nabla L(\mathbf{w}^\star,\mathcal{D}^\prime)$ and the fact that $\mathbf{w}^\star$ is the minimizer of $L(\cdot,\mathcal{D})$. We would like to point out that our choice of $\Delta$ is more general then that~\cite{guo2020certified}: Since unlearning one node may affect the entire node embedding $\mathbf{Z}$, a generalization of $\Delta$ is crucial. When $K=0$ (i.e., when no graph topology is included), one recovers $\Delta$ from~\cite{guo2020certified} as a special case of our model. In the latter part of the proof, we will see how the graph setting makes the analysis more intricate and complex.
 
By the Cauchy-Schwartz inequality, we have
\begin{align}
    \|G(\mathbf{w}^-)\| \leq \|\mathbf{H}_{\mathbf{w}_\eta}-\mathbf{H}_{\mathbf{w}^\star}\| \|\mathbf{H}_{\mathbf{w}^\star}^{-1}\Delta\|.
\end{align}
Below we bound both norms on the right hand side separately. We start with the term $\|\mathbf{H}_{\mathbf{w}_\eta}-\mathbf{H}_{\mathbf{w}^\star}\|$. Note that
\begin{align}
    & \|\nabla^2 \ell(\mathbf{e}_i^T\mathbf{Z}^\prime \mathbf{w}_\eta,\mathbf{e}_i^T\mathbf{Y}_{T_r}^\prime)-\nabla^2 \ell(\mathbf{e}_i^T\mathbf{Z}^\prime \mathbf{w}_\star,\mathbf{e}_i^T\mathbf{Y}_{T_r}^\prime)\| \notag \\
    & = \|\left [\ell''(\mathbf{e}_i^T\mathbf{Z}^\prime \mathbf{w}_\eta,\mathbf{e}_i^T\mathbf{Y}_{T_r}^\prime)- \ell''(\mathbf{e}_i^T\mathbf{Z}^\prime \mathbf{w}_\star,\mathbf{e}_i^T\mathbf{Y}_{T_r}^\prime)\right ] (\mathbf{e}_i^T\mathbf{Z}^\prime)^T\mathbf{e}_i^T\mathbf{Z}^\prime\| \notag \\
    & \stackrel{(a)}{\leq}\gamma_2\|\mathbf{e}_i^T\mathbf{Z}^\prime \mathbf{w}_\eta - \mathbf{e}_i^T\mathbf{Z}^\prime \mathbf{w}^\star\| \|\mathbf{e}_i^T\mathbf{Z}^\prime\|^2 \notag \\
    & \leq \gamma_2\|\mathbf{w}_\eta - \mathbf{w}^\star\|\|\mathbf{e}_i^T\mathbf{Z}^\prime\|^3 = \gamma_2\|\eta\mathbf{H}_{\mathbf{w}^\star}^{-1}\Delta\|\|\mathbf{e}_i^T\mathbf{Z}^\prime\|^3 \leq \gamma_2\|\mathbf{H}_{\mathbf{w}^\star}^{-1}\Delta\|\|\mathbf{e}_i^T\mathbf{Z}^\prime\|^3. \label{eq:1}
\end{align}
Here, (a) follows from the Cauchy-Schwartz inequality and the Lipschitz condition on $\ell''$ in Assumption~\ref{asp:guo}. Unlike the analysis in~\cite{guo2020certified}, we are faced with the problem of bounding the term $\|\mathbf{e}_i^T\mathbf{Z}^\prime\|$. In~\cite{guo2020certified} (where $\mathbf{Z}=\mathbf{X}$), a simple bound equals $1$, which may be ontained via (3) in Assumption~\ref{asp:guo}. However, in our case, due to graph propagation this norm needs more careful examination and a simple application of the  Cauchy-Schwartz inequality does not suffice, as it would lead to a term $\|\mathbf{X}\|_{op},$ where $\|\cdot\|_{op}$ denotes the operator norm. The simple worst case (i.e., when all rows of $\mathbf{X}$ are identical) leads to a meaningless bound $O(n)$.

By leveraging Lemma~\ref{lma:Z_norm_bound_1}, we can further upper bound~\eqref{eq:1} according to
\begin{align}
    & \|\nabla^2 \ell(\mathbf{e}_i^T\mathbf{Z}^\prime \mathbf{w}_\eta,\mathbf{e}_i^T\mathbf{Y}_{T_r}^\prime)-\nabla^2 \ell(\mathbf{e}_i^T\mathbf{Z}^\prime \mathbf{w}_\star,\mathbf{e}_i^T\mathbf{Y}_{T_r}^\prime)\| \leq \gamma_2\|\mathbf{H}_{\mathbf{w}^\star}^{-1}\Delta\|\|\mathbf{e}_i^T\mathbf{Z}^\prime\|^3 \notag \\
    & \stackrel{(a)}{\leq }\gamma_2\|\mathbf{H}_{\mathbf{w}^\star}^{-1}\Delta\|,
\end{align}
where (a) follows from Lemma~\ref{lma:Z_norm_bound_1}.

As a result, we arrive at a bound for $\|\mathbf{H}_{\mathbf{w}_\eta}-\mathbf{H}_{\mathbf{w}^\star}\|$ of the form
\begin{align}\label{eq:H_bound_1}
    & \|\mathbf{H}_{\mathbf{w}_\eta}-\mathbf{H}_{\mathbf{w}^\star}\| \leq \sum_{i=1}^{m-1}\|\nabla^2 \ell(\mathbf{e}_i^T\mathbf{Z}^\prime \mathbf{w}_\eta,\mathbf{e}_i^T\mathbf{Y}_{T_r}^\prime)-\nabla^2 \ell(\mathbf{e}_i^T\mathbf{Z}^\prime \mathbf{w}_\star,\mathbf{e}_i^T\mathbf{Y}_{T_r}^\prime)\| \notag \\
    & \leq \gamma_2(m-1)\|\mathbf{H}_{\mathbf{w}^\star}^{-1}\Delta\|.
\end{align}

Next, we bound $\|\mathbf{H}_{\mathbf{w}^\star}^{-1}\Delta\|$. Since $L(\cdot,\mathcal{D}^\prime)$ is $\lambda(m-1)$-strongly convex, we have $\|\mathbf{H}_{\mathbf{w}^\star}^{-1}\| \leq \frac{1}{\lambda(m-1)}$. For the norm $\|\Delta\|$, we have
\begin{align}
    & \Delta = \nabla L(\mathbf{w}^\star,\mathcal{D})-\nabla L(\mathbf{w}^\star,\mathcal{D}^\prime) \notag \\
    & = \lambda \mathbf{w}^\star + \nabla\ell(\mathbf{e}_m^T\mathbf{Z}\mathbf{w}^\star,\mathbf{e}_m^T\mathbf{Y}_{T_r}) + \sum_{i=1}^{m-1}\left[\nabla\ell(\mathbf{e}_i^T\mathbf{Z}\mathbf{w}^\star,\mathbf{e}_i^T\mathbf{Y}_{T_r})- \nabla\ell(\mathbf{e}_i^T\mathbf{Z}^\prime \mathbf{w}^\star,\mathbf{e}_i^T\mathbf{Y}_{T_r}) \right].
\end{align}
The third term does not appear in~\cite{guo2020certified}, since when $K=0$, $\mathbf{Z}=\mathbf{X}$ and $\mathbf{Z}^\prime=\mathbf{X}^\prime$ are identical except for the $m^{th}$ row. In the graph certified unlearning scenario, even removing one node feature can make the entire node embedding matrix $\mathbf{Z}$ change in every row, which creates new analytical challenges. For example, consider the case $\mathbf{X} = [x_1, x_2, x_3]^T$, where we have a graph with three nodes, each with a $1$-dimensional feature. Consider the fully connected graph (i.e., all entries in $\mathbf{P}$ set to $1/3$). Then, unlearning node $1$ results in $\mathbf{Z}^{\prime} = [0, x_2, x_3]^T$ for unstructured unlearning. However, $\mathbf{Z}^\prime = [(x_2+x_3)/3, (x_2+x_3)/3, (x_2+x_3)/3]^T$ for the case of $L=1$, which is completely different from $\mathbf{Z} = [(x_1+x_2+x_3)/3, (x_1+x_2+x_3)/3, (x_1+x_2+x_3)/3]^T$. Hence, the analysis in~\cite{guo2020certified} cannot be directly applied to graphs, as $\mathbf{Z}^\prime$ changes in more than just one row compared to $\mathbf{Z}$ while unlearning a node feature.

By Minkowski's triangle inequality, we only need to bound the norm of the three individual terms in order to bound the norm of $\Delta$. For $\|\mathbf{w}^\star\|$, since $\mathbf{w}^\star$ is the global optimum of $L(\cdot,\mathcal{D})$, we have
\begin{align}
    & 0 = \nabla L(\mathbf{w}^\star,\mathcal{D}) = \sum_{i=1}^m \nabla \ell(\mathbf{e}_i^T\mathbf{Z}\mathbf{w}^\star,\mathbf{e}_i^T\mathbf{Y}_{T_r}) + \lambda m \mathbf{w}^\star.
\end{align}
By (1) in Assumption~\ref{asp:guo}, we have
\begin{align}
    & \|\mathbf{w}^\star\| = \frac{\|\sum_{i=1}^m \nabla \ell(\mathbf{e}_i^T\mathbf{Z}\mathbf{w}^\star,\mathbf{e}_i^T\mathbf{Y}_{T_r})\|}{\lambda m} \leq \frac{c}{\lambda}.
\end{align}
Once again, by (1) in Assumption~\ref{asp:guo}, we have
\begin{align}
    & \|\nabla\ell(\mathbf{e}_m^T\mathbf{Z}\mathbf{w}^\star,\mathbf{e}_m^T\mathbf{Y}_{T_r})\| \leq c.
\end{align}
A bound for the last term is established in the last step, as described below.
\begin{align}
    & \|\sum_{i=1}^{m-1}\left[\nabla\ell(\mathbf{e}_i^T\mathbf{Z}\mathbf{w}^\star,\mathbf{e}_i^T\mathbf{Y}_{T_r})- \nabla\ell(\mathbf{e}_i^T\mathbf{Z}^\prime \mathbf{w}^\star,\mathbf{e}_i^T\mathbf{Y}_{T_r}) \right]\| \notag \\
    & \leq \sum_{i=1}^{m-1}\|\nabla\ell(\mathbf{e}_i^T\mathbf{Z}\mathbf{w}^\star,\mathbf{e}_i^T\mathbf{Y}_{T_r})- \nabla\ell(\mathbf{e}_i^T\mathbf{Z}^\prime \mathbf{w}^\star,\mathbf{e}_i^T\mathbf{Y}_{T_r})\| \notag \\
    & = \sum_{i=1}^{m-1}\|\ell'(\mathbf{e}_i^T\mathbf{Z}\mathbf{w}^\star,\mathbf{e}_i^T\mathbf{Y}_{T_r})(\mathbf{e}_i^T\mathbf{Z})^T- \ell'(\mathbf{e}_i^T\mathbf{Z}^\prime \mathbf{w}^\star,\mathbf{e}_i^T\mathbf{Y}_{T_r})(\mathbf{e}_i^T\mathbf{Z}^\prime)^T\|.
\end{align}
Observe that
\begin{align}
    & \|\ell'(\mathbf{e}_i^T\mathbf{Z}\mathbf{w}^\star,\mathbf{e}_i^T\mathbf{Y}_{T_r})(\mathbf{e}_i^T\mathbf{Z})^T- \ell'(\mathbf{e}_i^T\mathbf{Z}^\prime \mathbf{w}^\star,\mathbf{e}_i^T\mathbf{Y}_{T_r})(\mathbf{e}_i^T\mathbf{Z}^\prime)^T\| \notag \\
    & \leq \|\ell'(\mathbf{e}_i^T\mathbf{Z}\mathbf{w}^\star,\mathbf{e}_i^T\mathbf{Y}_{T_r})(\mathbf{e}_i^T\mathbf{Z})^T- \ell'(\mathbf{e}_i^T\mathbf{Z}^\prime \mathbf{w}^\star,\mathbf{e}_i^T\mathbf{Y}_{T_r})(\mathbf{e}_i^T\mathbf{Z})^T\| \notag \\
    & + \|\ell'(\mathbf{e}_i^T\mathbf{Z}^\prime\mathbf{w}^\star,\mathbf{e}_i^T\mathbf{Y}_{T_r})(\mathbf{e}_i^T\mathbf{Z})^T- \ell'(\mathbf{e}_i^T\mathbf{Z}^\prime \mathbf{w}^\star,\mathbf{e}_i^T\mathbf{Y}_{T_r})(\mathbf{e}_i^T\mathbf{Z}^\prime)^T\|
\end{align}
The first term can be bounded as
\begin{align}
    & \|\ell'(\mathbf{e}_i^T\mathbf{Z}\mathbf{w}^\star,\mathbf{e}_i^T\mathbf{Y}_{T_r})(\mathbf{e}_i^T\mathbf{Z})^T- \ell'(\mathbf{e}_i^T\mathbf{Z}^\prime \mathbf{w}^\star,\mathbf{e}_i^T\mathbf{Y}_{T_r})(\mathbf{e}_i^T\mathbf{Z})^T\| \notag \\
    & \leq \left|\ell'(\mathbf{e}_i^T\mathbf{Z}\mathbf{w}^\star,\mathbf{e}_i^T\mathbf{Y}_{T_r})-\ell'(\mathbf{e}_i^T\mathbf{Z}^\prime \mathbf{w}^\star,\mathbf{e}_i^T\mathbf{Y}_{T_r})\right| \|(\mathbf{e}_i^T\mathbf{Z})^T\| \notag \\
    & \stackrel{(a)}{\leq} \gamma_1 \|\mathbf{e}_i^T\mathbf{Z}\mathbf{w}^\star-\mathbf{e}_i^T\mathbf{Z}^\prime \mathbf{w}^\star\|\|(\mathbf{e}_i^T\mathbf{Z})^T\| \notag \\
    & \stackrel{(b)}{\leq} \gamma_1 \|(\mathbf{e}_i^T\mathbf{Z}-\mathbf{e}_i^T\mathbf{Z}^\prime)^T\| \|\mathbf{w}^\star\| \notag \\
    & \stackrel{(c)}{\leq} \frac{c\gamma_1}{\lambda} \|(\mathbf{e}_i^T\mathbf{Z}-\mathbf{e}_i^T\mathbf{Z}^\prime)^T\|.
\end{align}
Here, (a) is due to (4) in Assumption~\ref{asp:guo}, while (b) follows from Lemma~\ref{lma:Z_norm_bound_1} and the Cauchy-Schwartz inequality. Inequality (c) is a consequence of the bound for $\|\mathbf{w}\|$ that we previously derived.

The second term can be bounded as
\begin{align}
    & \|\ell'(\mathbf{e}_i^T\mathbf{Z}^\prime\mathbf{w}^\star,\mathbf{e}_i^T\mathbf{Y}_{T_r})(\mathbf{e}_i^T\mathbf{Z})^T- \ell'(\mathbf{e}_i^T\mathbf{Z}^\prime \mathbf{w}^\star,\mathbf{e}_i^T\mathbf{Y}_{T_r})(\mathbf{e}_i^T\mathbf{Z}^\prime)^T\| \notag \\
    & \leq \left|\ell'(\mathbf{e}_i^T\mathbf{Z}^\prime \mathbf{w}^\star,\mathbf{e}_i^T\mathbf{Y}_{T_r})\right|\|(\mathbf{e}_i^T\mathbf{Z})^T-(\mathbf{e}_i^T\mathbf{Z}^\prime)^T\| \notag \\
    & \stackrel{(a)}{\leq}  c_1\|(\mathbf{e}_i^T\mathbf{Z})^T-(\mathbf{e}_i^T\mathbf{Z}^\prime)^T\|.
\end{align}
For the inequality in (a), we used (5) from Assumption~\ref{asp:guo}. Put together, we have
\begin{align}
    & \|\sum_{i=1}^{m-1}\left[\nabla\ell(\mathbf{e}_i^T\mathbf{Z}\mathbf{w}^\star,\mathbf{e}_i^T\mathbf{Y}_{T_r})- \nabla\ell(\mathbf{e}_i^T\mathbf{Z}^\prime \mathbf{w}^\star,\mathbf{e}_i^T\mathbf{Y}_{T_r}) \right]\| \notag \\
    & \leq \sum_{i=1}^{m-1}\left[\left(\frac{c\gamma_1}{\lambda}+c_1\right)\|(\mathbf{e}_i^T\mathbf{Z})^T-(\mathbf{e}_i^T\mathbf{Z}^\prime)^T\|\right] \notag \\
    & = \left(\frac{c\gamma_1}{\lambda}+c_1\right)\sum_{i=1}^{m-1}\|\mathbf{e}_i^T(\mathbf{Z}-\mathbf{Z}^\prime)\| \notag \\
    & = \left(\frac{c\gamma_1}{\lambda}+c_1\right)\sum_{i=1}^{m-1}\|(\mathbf{e}_i^T\mathbf{P}^K(\mathbf{X}-\mathbf{X}^\prime))\| \notag \\
    & = \left(\frac{c\gamma_1}{\lambda}+c_1\right)\sum_{i=1}^{m-1}\|(\mathbf{e}_i^T\mathbf{P}^K\Tilde{\mathbf{D}}^{-1}\Tilde{\mathbf{D}}(\mathbf{X}-\mathbf{X}^\prime))\| \notag \\
    & \stackrel{(a)}{=} \left(\frac{c\gamma_1}{\lambda}+c_1\right)\sum_{i=1}^{m-1}\|(\mathbf{e}_i^T\mathbf{P}^K\Tilde{\mathbf{D}}^{-1}\Tilde{\mathbf{D}}\mathbf{e}_m\mathbf{e}_m^T\mathbf{X})\| \notag \\
    & \stackrel{(b)}{\leq} \left(\frac{c\gamma_1}{\lambda}+c_1\right)\sum_{i=1}^{m-1}\|\mathbf{e}_i^T\mathbf{P}^K\Tilde{\mathbf{D}}^{-1}\Tilde{\mathbf{D}}\mathbf{e}_m\|\|\mathbf{e}_m^T\mathbf{X}\| \notag \\
    & \stackrel{(c)}{\leq} \left(\frac{c\gamma_1}{\lambda}+c_1\right)\sum_{i=1}^{m-1}\|\mathbf{e}_i^T\mathbf{P}^K\Tilde{\mathbf{D}}^{-1}\Tilde{\mathbf{D}}\mathbf{e}_m\| \notag \\
    & = \left(\frac{c\gamma_1}{\lambda}+c_1\right)\sum_{i=1}^{m-1}\|\mathbf{e}_i^T\mathbf{P}^K\Tilde{\mathbf{D}}^{-1}\mathbf{e}_m\Tilde{\mathbf{D}}_{mm}\| \notag \\
    & \stackrel{(d)}{=} \left(\frac{c\gamma_1}{\lambda}+c_1\right)\sum_{i=1}^{m-1}\mathbf{e}_i^T\mathbf{P}^K\Tilde{\mathbf{D}}^{-1}\mathbf{e}_m\Tilde{\mathbf{D}}_{mm} \notag \\
    & \stackrel{(e)}{\leq} \left(\frac{c\gamma_1}{\lambda}+c_1\right)\mathbf{1}^T\mathbf{P}^K\Tilde{\mathbf{D}}^{-1}\mathbf{e}_m\Tilde{\mathbf{D}}_{mm} \notag \\
    & \stackrel{(f)}{=} \left(\frac{c\gamma_1}{\lambda}+c_1\right)\mathbf{1}^T\Tilde{\mathbf{D}}^{-1}\mathbf{p}\Tilde{\mathbf{D}}_{mm} \notag \\
    & \stackrel{(g)}{\leq} \left(\frac{c\gamma_1}{\lambda}+c_1\right)\Tilde{\mathbf{D}}_{mm}.
\end{align}
Inequality (a) follows from the fact that $\mathbf{X}^\prime$ is identical to $\mathbf{X}$ except for the last row and column, which are set to all-zeros. Thus, $\mathbf{X}-\mathbf{X}^\prime$ is a matrix with rows equal to zero-vectors, except for the $m^{th}$ which equals the  $m^{th}$ row of $\mathbf{X}$. Inequality (b) follows from the Cauchy-Schwartz inequality. Inequality (c) is a result of (3) in Assumption~\ref{asp:guo}, while (d) is a consequence of the fact that $\mathbf{e}_i^T\mathbf{P}^K\Tilde{\mathbf{D}}^{-1}\mathbf{e}_m$ is the value in  the $i^{th}$ row and $m^{th}$ column of the matrix $\mathbf{P}^K\Tilde{\mathbf{D}}^{-1}$. Also, it is obvious that this matrix is entry-wise nonnegative. Inequality (e) is due to the fact that $\mathbf{P}^K\Tilde{\mathbf{D}}^{-1}$ is entry-wise nonnegative. In (f), $\mathbf{p}$ stands for a probability vector and (f) holds since 
\begin{align}
    \mathbf{P}^K\Tilde{\mathbf{D}}^{-1} = \left(\Tilde{\mathbf{D}}^{-1}\Tilde{\mathbf{A}}\right)^{K}\Tilde{\mathbf{D}}^{-1} = \Tilde{\mathbf{D}}^{-1}(\Tilde{\mathbf{A}}\Tilde{\mathbf{D}}^{-1})^{K},
\end{align}
and $\Tilde{\mathbf{A}}\Tilde{\mathbf{D}}^{-1}$ is a left stochastic matrix. Inequality (g) is a consequence of the observation that the maximum entry in $\Tilde{\mathbf{D}}^{-1}$ is at most 1 and that the latter is a diagonal matrix. Hence, $\mathbf{1}^T\Tilde{\mathbf{D}}^{-1}\mathbf{p} \leq \mathbf{1}^T\mathbf{p}$. Also, $\mathbf{1}^T\mathbf{p} = 1$ by the definition of the probability vector.

Combining the bounds, we obtain
\begin{align}\label{eq:Delta_norm_bound}
    \|\Delta\| \leq c + c + \left(\frac{c\gamma_1}{\lambda}+c_1\right)\Tilde{\mathbf{D}}_{mm} = \frac{2c\lambda+(c\gamma_1+\lambda c_1)\Tilde{\mathbf{D}}_{mm}}{\lambda}.
\end{align}

Including the bound on $\|\mathbf{H}_{\mathbf{w}^\star}^{-1}\|$ and~\eqref{eq:H_bound_1}, we then obtain
\begin{align}
    & \|G(\mathbf{w}^-)\| \leq \gamma_2(m-1)\|\mathbf{H}_{\mathbf{w}^\star}^{-1}\Delta\|^2 \leq \gamma_2(m-1)\left(\frac{\frac{2c\lambda+(c\gamma_1+\lambda c_1)\Tilde{\mathbf{D}}_{mm}}{\lambda}}{\lambda (m-1)}\right)^2 \notag \\
    & = \frac{\gamma_2(2c\lambda+(c\gamma_1+\lambda c_1)\Tilde{\mathbf{D}}_{mm})^2}{\lambda^4(m-1)}.
\end{align}
This completes the proof.
\end{proof}

\subsection{Proof of Theorem~\ref{thm:ER_SGCcase}}\label{apx:pfER_SGC}
\begin{theorem*}
    For the edge unlearning case, we have $\mathcal{D}=(\mathbf{X},\mathbf{Y}_{T_r},\mathbf{P})$ and $\mathcal{D}^\prime=(\mathbf{X},\mathbf{Y}_{T_r},\mathbf{P}^\prime)$. If $\mathbf{P} =\Tilde{\mathbf{D}}^{-1}\Tilde{\mathbf{A}}$ and $\mathbf{Z}=\mathbf{P}^K \mathbf{X}$, then we have
    \begin{align}
        \|\nabla L(\mathbf{w}^-,\mathcal{D}^\prime)\| = \|(\mathbf{H}_{\mathbf{w}_\eta}-\mathbf{H}_{\mathbf{w}^\star})\mathbf{H}_{\mathbf{w}^\star}^{-1}\Delta\| \leq  \frac{16\gamma_2K^2\left(c\gamma_1+c_1\lambda\right)^2}{\lambda^4 m}.
    \end{align}
\end{theorem*}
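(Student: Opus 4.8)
The plan is to follow the skeleton of the proof of Theorem~\ref{thm:NFR_SGCcase}, changing only the two places where edge deletion behaves differently from feature deletion. First I would reproduce the Taylor-expansion identity: setting $G(\mathbf{w})=\nabla L(\mathbf{w},\mathcal{D}^\prime)$ and expanding $G$ to first order about $\mathbf{w}^\star$ along $\mathbf{H}_{\mathbf{w}^\star}^{-1}\Delta$, one gets $G(\mathbf{w}^-)=(\mathbf{H}_{\mathbf{w}_\eta}-\mathbf{H}_{\mathbf{w}^\star})\mathbf{H}_{\mathbf{w}^\star}^{-1}\Delta$, using $G(\mathbf{w}^\star)=\Delta$ (since $\mathbf{w}^\star$ minimizes $L(\cdot,\mathcal{D})$ and $\Delta=\nabla L(\mathbf{w}^\star,\mathcal{D})-\nabla L(\mathbf{w}^\star,\mathcal{D}^\prime)$) and $\mathbf{H}_{\mathbf{w}^\star}\mathbf{H}_{\mathbf{w}^\star}^{-1}\Delta=\Delta$. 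This step is verbatim the one already carried out and does not depend on the unlearning type. Cauchy--Schwarz then gives $\|\nabla L(\mathbf{w}^-,\mathcal{D}^\prime)\|\le\|\mathbf{H}_{\mathbf{w}_\eta}-\mathbf{H}_{\mathbf{w}^\star}\|\,\|\mathbf{H}_{\mathbf{w}^\star}^{-1}\Delta\|$, and I would bound the two factors separately.

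For the Hessian-difference factor: since removing an edge leaves the training set $T_r=[m]$ unchanged, $\nabla^2 L(\cdot,\mathcal{D}^\prime)$ is a sum of $m$ data terms built from $\mathbf{Z}^\prime=(\mathbf{P}^\prime)^K\mathbf{X}$ plus the common term $\lambda m\mathbf{I}$. Because deleting edge $(1,m)$ does not disturb any self-loop, $\mathbf{P}^\prime=\tilde{\mathbf{D}}^{\prime-1}\tilde{\mathbf{A}}^\prime$ is still right stochastic, so Lemma~\ref{lma:Z_norm_bound_1} applies and $\|\mathbf{e}_i^T\mathbf{Z}^\prime\|\le 1$; together with the $\gamma_2$-Lipschitzness of $\ell''$ this yields $\|\mathbf{H}_{\mathbf{w}_\eta}-\mathbf{H}_{\mathbf{w}^\star}\|\le\gamma_2 m\|\mathbf{H}_{\mathbf{w}^\star}^{-1}\Delta\|$, and $\lambda m$-strong convexity gives $\|\mathbf{H}_{\mathbf{w}^\star}^{-1}\|\le 1/(\lambda m)$, so everything reduces to bounding $\|\Delta\|$. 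For $\Delta$ itself, the regularizer contributions cancel (both losses have $m$ copies of $\tfrac{\lambda}{2}\|\mathbf{w}\|^2$) and, unlike node-feature unlearning, there is no leftover single-node gradient term, so $\Delta=\sum_{i=1}^{m}\big[\nabla\ell(\mathbf{e}_i^T\mathbf{Z}\mathbf{w}^\star,\cdot)-\nabla\ell(\mathbf{e}_i^T\mathbf{Z}^\prime\mathbf{w}^\star,\cdot)\big]$. Splitting each summand by adding and subtracting $\ell'(\mathbf{e}_i^T\mathbf{Z}^\prime\mathbf{w}^\star,\cdot)(\mathbf{e}_i^T\mathbf{Z})^T$ and invoking Assumption~\ref{asp:guo}(4)--(5), Lemma~\ref{lma:Z_norm_bound_1}, and $\|\mathbf{w}^\star\|\le c/\lambda$, exactly as in Theorem~\ref{thm:NFR_SGCcase}, gives $\|\Delta\|\le\big(\tfrac{c\gamma_1}{\lambda}+c_1\big)\sum_{i=1}^{m}\|\mathbf{e}_i^T(\mathbf{P}^K-(\mathbf{P}^\prime)^K)\mathbf{X}\|$, and one more Cauchy--Schwarz with $\|\mathbf{e}_j^T\mathbf{X}\|\le1$ turns this into $\big(\tfrac{c\gamma_1}{\lambda}+c_1\big)\sum_{i=1}^m\sum_{j=1}^n|\mathbf{e}_i^T(\mathbf{P}^K-(\mathbf{P}^\prime)^K)\mathbf{e}_j|$.

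The crux, and the step I expect to be the main obstacle, is showing this double sum is $O(K)$ and not growing with $n$. Here I would enlarge the index range $i\in[m]$ to $i\in[n]$ (valid since $m\le n$ and every summand is nonnegative), obtaining $\mathbf{1}^T|\mathbf{P}^K-(\mathbf{P}^\prime)^K|\mathbf{1}$; then apply Lemma~\ref{lma:Pdiff_nonneg} (telescoping $\mathbf{P}^K-(\mathbf{P}^\prime)^K=\sum_{k=1}^K(\mathbf{P}^\prime)^{k-1}(\mathbf{P}-\mathbf{P}^\prime)\mathbf{P}^{K-k}$, entrywise triangle inequality, nonnegativity of $(\mathbf{P}^\prime)^{k-1}$ and $\mathbf{P}^{K-k}$) to reduce to $\sum_{k=1}^K\mathbf{1}^T(\mathbf{P}^\prime)^{k-1}|\mathbf{P}-\mathbf{P}^\prime|\mathbf{P}^{K-k}\mathbf{1}$, and finally Lemma~\ref{lma:four_terms} to bound each of the $K$ terms by $4$. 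The content behind Lemma~\ref{lma:four_terms} is the same left-/right-stochastic bookkeeping used for $\mathbf{1}^T\mathbf{P}^K\tilde{\mathbf{D}}^{-1}\mathbf{e}_m$ in Theorem~\ref{thm:NFR_SGCcase}, now applied on both sides of $|\mathbf{P}-\mathbf{P}^\prime|$: $\mathbf{P}^{K-k}\mathbf{1}=\mathbf{1}$ kills the forward factor; $|\mathbf{P}-\mathbf{P}^\prime|$ has only the two rows $1$ and $m$ nonzero, with $\ell_1$-norms $2/\tilde{\mathbf{D}}_{11}$ and $2/\tilde{\mathbf{D}}_{mm}$; and rewriting $(\mathbf{P}^\prime)^{k-1}=\tilde{\mathbf{D}}^{\prime-1}(\tilde{\mathbf{A}}^\prime\tilde{\mathbf{D}}^{\prime-1})^{k-1}\tilde{\mathbf{D}}^\prime$ shows $\mathbf{1}^T(\mathbf{P}^\prime)^{k-1}\mathbf{e}_1\le\tilde{\mathbf{D}}^\prime_{11}$ (a probability vector passed through the diagonal $\tilde{\mathbf{D}}^{\prime-1}$ whose entries are $\le1$), so the $\tilde{\mathbf{D}}$-factors cancel and each term is $\le2+2=4$. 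This gives $\|\Delta\|\le 4K\big(\tfrac{c\gamma_1}{\lambda}+c_1\big)=\tfrac{4K(c\gamma_1+c_1\lambda)}{\lambda}$, and plugging into $\|\nabla L(\mathbf{w}^-,\mathcal{D}^\prime)\|\le\gamma_2 m\cdot\tfrac{1}{(\lambda m)^2}\|\Delta\|^2=\tfrac{\gamma_2}{\lambda^2 m}\|\Delta\|^2$ produces the claimed $\tfrac{16\gamma_2 K^2(c\gamma_1+c_1\lambda)^2}{\lambda^4 m}$. The reason $K$ appears here but not in Theorem~\ref{thm:NFR_SGCcase} is exactly that the propagation operator itself changes: each of the $K$ telescoping layers contributes a constant, whereas for feature-only removal a single right-stochastic cancellation sufficed.
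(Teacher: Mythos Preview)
Your proposal is correct and follows essentially the same approach as the paper: the Taylor identity, the $\gamma_2 m\|\mathbf{H}_{\mathbf{w}^\star}^{-1}\Delta\|$ Hessian-difference bound via Lemma~\ref{lma:Z_norm_bound_1} applied to $\mathbf{P}^\prime$, the $\lambda m$-strong convexity, the reduction of $\|\Delta\|$ to $\sum_{i,j}|\mathbf{e}_i^T(\mathbf{P}^K-(\mathbf{P}^\prime)^K)\mathbf{e}_j|$, and then Lemmas~\ref{lma:Pdiff_nonneg} and~\ref{lma:four_terms} to get $\|\Delta\|\le 4K(c\gamma_1/\lambda+c_1)$. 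Your sketch of Lemma~\ref{lma:four_terms} is a slight streamlining of the paper's (you use $\mathbf{P}^{K-k}\mathbf{1}=\mathbf{1}$ and compute the row $\ell_1$-norms $2/\tilde{\mathbf{D}}_{11}$, $2/\tilde{\mathbf{D}}_{mm}$ directly, whereas the paper splits each row into two pieces yielding four terms each bounded by~$1$), but the substance is identical.
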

Similar to what holds for the node feature unlearning case, Theorem~\ref{thm:ER_SGCcase} still holds when neither of the two end nodes of the removed edge belongs to $T_r$. Since $P^\prime$ is a right stochastic matrix, Lemma~\ref{lma:Z_norm_bound_1} still applies. Thus, we only need to describe how to bound $\|\Delta\|$. Following an approach similar to the previously described one, we have
$\|\Delta\| \leq \left(\frac{c\gamma_1}{\lambda}+c_1\right)\sum_{i=1}^{m}\sum_{j=1}^{n}\|\ei^T (\mathbf{P}^K - {\Ppmat}^K) \ej\|.$ We also need the following technical lemmas.
\begin{lemma}\label{lma:Pdiff_nonneg}
For both edge and node unlearning, we have $|\mathbf{e}_i^T\left[\mathbf{P}^K-(\mathbf{P}^\prime)^K\right ]\mathbf{e}_j| \leq \sum_{k=1}^K  \mathbf{e}_i^T(\mathbf{P}^\prime)^{k-1}\left|\mathbf{P}-\mathbf{P}^\prime\right|\mathbf{P}^{K-k}\mathbf{e}_j,$ $\forall i,j\in [n]$, $K\geq 1$.
\end{lemma}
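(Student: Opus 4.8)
The plan is to establish the telescoping identity
\[
\mathbf{P}^K - (\mathbf{P}^\prime)^K = \sum_{k=1}^K (\mathbf{P}^\prime)^{k-1}\left(\mathbf{P}-\mathbf{P}^\prime\right)\mathbf{P}^{K-k},
\]
which holds for any two square matrices and is verified by induction on $K$ (the base case $K=1$ is trivial, and the inductive step writes $\mathbf{P}^{K+1}-(\mathbf{P}^\prime)^{K+1} = \mathbf{P}^K(\mathbf{P}-\mathbf{P}^\prime) + (\mathbf{P}^K-(\mathbf{P}^\prime)^K)\mathbf{P}^\prime$ and applies the hypothesis to the second summand, re-indexing appropriately). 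Taking the $(i,j)$ entry, applying the triangle inequality over the $K$ summands, and then pushing absolute values inside each matrix product via $|\mathbf{e}_i^T \mathbf{M}_1\mathbf{M}_2\mathbf{M}_3\mathbf{e}_j| \leq \mathbf{e}_i^T |\mathbf{M}_1||\mathbf{M}_2||\mathbf{M}_3|\mathbf{e}_j$ yields the claimed bound, provided we know that $\mathbf{P}$ and $\mathbf{P}^\prime$ are entry-wise nonnegative so that $|\mathbf{P}^{K-k}| = \mathbf{P}^{K-k}$ and $|(\mathbf{P}^\prime)^{k-1}| = (\mathbf{P}^\prime)^{k-1}$.

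First I would record that $\mathbf{P} = \Tilde{\mathbf{D}}^{-1}\Tilde{\mathbf{A}}$ and $\mathbf{P}^\prime = (\Tilde{\mathbf{D}}^\prime)^{-1}\Tilde{\mathbf{A}}^\prime$ are both componentwise nonnegative (products and powers of nonnegative matrices remain nonnegative), which is what legitimizes replacing $|(\mathbf{P}^\prime)^{k-1}|$ and $|\mathbf{P}^{K-k}|$ by the matrices themselves. Then I would state and prove the telescoping identity by induction, then take entries and apply the triangle inequality to get $|\mathbf{e}_i^T[\mathbf{P}^K-(\mathbf{P}^\prime)^K]\mathbf{e}_j| \leq \sum_{k=1}^K |\mathbf{e}_i^T (\mathbf{P}^\prime)^{k-1}(\mathbf{P}-\mathbf{P}^\prime)\mathbf{P}^{K-k}\mathbf{e}_j|$, and finally, for each $k$, expand the inner product as a sum over intermediate indices and apply the triangle inequality one more time, using nonnegativity of the outer factors, to land on $\mathbf{e}_i^T (\mathbf{P}^\prime)^{k-1}|\mathbf{P}-\mathbf{P}^\prime|\mathbf{P}^{K-k}\mathbf{e}_j$.

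There is no serious obstacle here; the lemma is essentially the standard matrix-difference telescoping estimate specialized to nonnegative transition-type matrices, and the only thing to be careful about is the bookkeeping in the inductive step (getting the range of $k$ and the exponents to match) and the observation that only the \emph{middle} factor $\mathbf{P}-\mathbf{P}^\prime$ may have negative entries, so it is the one factor that must retain the absolute value $|\mathbf{P}-\mathbf{P}^\prime|$ in the final bound. The statement is phrased uniformly for both edge and node unlearning because in both cases $\mathbf{P}^\prime$ arises from a nonnegative modified adjacency matrix, so nonnegativity — the only structural property used — is available in either scenario.
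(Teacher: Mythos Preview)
Your approach is exactly the paper's: establish the telescoping identity $\mathbf{P}^K-(\mathbf{P}^\prime)^K=\sum_{k=1}^K(\mathbf{P}^\prime)^{k-1}(\mathbf{P}-\mathbf{P}^\prime)\mathbf{P}^{K-k}$, then take the $(i,j)$ entry, apply the triangle inequality, and use entrywise nonnegativity of $\mathbf{P}$ and $\mathbf{P}^\prime$ (hence of all their powers) so that only the middle factor keeps the absolute value.

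One bookkeeping slip to fix: the inductive split you wrote, $\mathbf{P}^{K+1}-(\mathbf{P}^\prime)^{K+1}=\mathbf{P}^K(\mathbf{P}-\mathbf{P}^\prime)+(\mathbf{P}^K-(\mathbf{P}^\prime)^K)\mathbf{P}^\prime$, does not reproduce the stated identity after applying the hypothesis, because the second summand acquires a trailing $\mathbf{P}^\prime$ rather than a $\mathbf{P}$ (no amount of re-indexing repairs this since the matrices do not commute). Either use the mirror split $(\mathbf{P}-\mathbf{P}^\prime)\mathbf{P}^K+\mathbf{P}^\prime(\mathbf{P}^K-(\mathbf{P}^\prime)^K)$, which does yield the correct form, or do as the paper does and verify the identity directly by noting that each summand equals $(\mathbf{P}^\prime)^{k-1}\mathbf{P}^{K-k+1}-(\mathbf{P}^\prime)^{k}\mathbf{P}^{K-k}$ and the sum telescopes.
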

\begin{lemma}\label{lma:four_terms}
For edge unlearning, we have $\mathbf{1}^T{\Ppmat}^{k-1}|\mathbf{P} - {\Ppmat}|\mathbf{P}^{K-k}\mathbf{1} \leq 4,$ $\forall k\in [K]$. 
\end{lemma}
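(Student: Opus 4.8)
The plan is to bound $\mathbf{1}^T{\Ppmat}^{k-1}|\mathbf{P} - {\Ppmat}|\mathbf{P}^{K-k}\mathbf{1}$ by first isolating the sparsity structure of $|\mathbf{P} - \Ppmat|$. Since the only edge removed is $(1,m)$, the matrices $\Tilde{\mathbf{A}}$ and $\Tilde{\mathbf{A}}^\prime$ differ only in positions $(1,m)$ and $(m,1)$, and $\Tilde{\mathbf{D}}$ and $\Tilde{\mathbf{D}}^\prime$ differ only in the diagonal entries $(1,1)$ and $(m,m)$ (each decreasing by $1$). Consequently $|\mathbf{P}-\Ppmat| = |\Tilde{\mathbf{D}}^{-1}\Tilde{\mathbf{A}} - (\Tilde{\mathbf{D}}^\prime)^{-1}\Tilde{\mathbf{A}}^\prime|$ has nonzero entries confined to rows $1$ and $m$ only. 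So I would write $|\mathbf{P}-\Ppmat| \le \mathbf{e}_1 \mathbf{u}_1^T + \mathbf{e}_m \mathbf{u}_m^T$ (componentwise), where $\mathbf{u}_1^T = \mathbf{e}_1^T|\mathbf{P}-\Ppmat|$ and $\mathbf{u}_m^T = \mathbf{e}_m^T|\mathbf{P}-\Ppmat|$ are the two nonzero rows.

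The next step is to control the $\ell_1$-norm of each of these two rows, i.e. $\mathbf{u}_1^T\mathbf{1}$ and $\mathbf{u}_m^T\mathbf{1}$. Consider row $1$: the entry $\mathbf{e}_1^T\mathbf{P}\mathbf{e}_j = \Tilde{\mathbf{A}}_{1j}/\Tilde{\mathbf{D}}_{11}$ while $\mathbf{e}_1^T\Ppmat\mathbf{e}_j = \Tilde{\mathbf{A}}^\prime_{1j}/\Tilde{\mathbf{D}}^\prime_{11} = \Tilde{\mathbf{A}}^\prime_{1j}/(\Tilde{\mathbf{D}}_{11}-1)$. For $j \ne m$ these numerators agree, so the difference is $\Tilde{\mathbf{A}}_{1j}\left(\frac{1}{\Tilde{\mathbf{D}}_{11}} - \frac{1}{\Tilde{\mathbf{D}}_{11}-1}\right)$ in absolute value, and for $j=m$ the term is just $1/\Tilde{\mathbf{D}}_{11}$ (the whole edge weight disappears). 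Summing over $j$: the $j=m$ contribution is $1/\Tilde{\mathbf{D}}_{11}$, and the sum over $j\ne m$ of $\Tilde{\mathbf{A}}_{1j}$ is $\Tilde{\mathbf{D}}_{11}-1$, so that block contributes $(\Tilde{\mathbf{D}}_{11}-1)\left(\frac{1}{\Tilde{\mathbf{D}}_{11}-1} - \frac{1}{\Tilde{\mathbf{D}}_{11}}\right) = 1 - \frac{\Tilde{\mathbf{D}}_{11}-1}{\Tilde{\mathbf{D}}_{11}} = \frac{1}{\Tilde{\mathbf{D}}_{11}}$. Hence $\mathbf{u}_1^T\mathbf{1} = 2/\Tilde{\mathbf{D}}_{11} \le 2$ (using $\Tilde{\mathbf{D}}_{11}\ge 1$ from the self-loop; in fact $\ge 2$ since the removed edge must have existed, giving $\le 1$, but $\le 2$ is all we need). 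By the identical computation, $\mathbf{u}_m^T\mathbf{1} \le 2$.

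Now I would assemble the bound. We have $\mathbf{1}^T{\Ppmat}^{k-1}(\mathbf{e}_1\mathbf{u}_1^T + \mathbf{e}_m\mathbf{u}_m^T)\mathbf{P}^{K-k}\mathbf{1} = (\mathbf{1}^T{\Ppmat}^{k-1}\mathbf{e}_1)(\mathbf{u}_1^T\mathbf{P}^{K-k}\mathbf{1}) + (\mathbf{1}^T{\Ppmat}^{k-1}\mathbf{e}_m)(\mathbf{u}_m^T\mathbf{P}^{K-k}\mathbf{1})$. Since $\mathbf{P}$ is right stochastic, $\mathbf{P}^{K-k}\mathbf{1} = \mathbf{1}$, so $\mathbf{u}_1^T\mathbf{P}^{K-k}\mathbf{1} = \mathbf{u}_1^T\mathbf{1} \le 2$ and likewise for $\mathbf{u}_m$. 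For the left factors I use, as in the Section~4.2 sketch, that $\Ppmat = (\Tilde{\mathbf{D}}^\prime)^{-1}\Tilde{\mathbf{A}}^\prime$ so $\mathbf{1}^T{\Ppmat}^{k-1} = \mathbf{1}^T(\Tilde{\mathbf{D}}^\prime)^{-1}(\Tilde{\mathbf{A}}^\prime(\Tilde{\mathbf{D}}^\prime)^{-1})^{k-1}\Tilde{\mathbf{D}}^\prime$ — wait, more cleanly: $\mathbf{1}^T{\Ppmat}^{k-1}\mathbf{e}_1 \le \mathbf{1}^T\mathbf{e}_1 \cdot(\text{something})$ is not immediate since $\Ppmat$ is not left stochastic. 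Instead, bound $\mathbf{1}^T{\Ppmat}^{k-1}\mathbf{e}_1 \le \Tilde{\mathbf{D}}^\prime_{11} \le \Tilde{\mathbf{D}}_{11}$ by the same manipulation used for $\mathbf{1}^T\mathbf{P}^K\mathbf{e}_m$ in the node-feature sketch (write $\mathbf{e}_1 = (\Tilde{\mathbf{D}}^\prime)^{-1}\mathbf{e}_1\,\Tilde{\mathbf{D}}^\prime_{11}$, push the $(\Tilde{\mathbf{D}}^\prime)^{-1}$ through to get a left-stochastic power acting on a probability vector, then $\mathbf{1}^T(\Tilde{\mathbf{D}}^\prime)^{-1}\mathbf{p}\le 1$). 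This gives $\mathbf{1}^T{\Ppmat}^{k-1}\mathbf{e}_1 \le \Tilde{\mathbf{D}}_{11}$, and $\mathbf{u}_1^T\mathbf{1} \le 2/\Tilde{\mathbf{D}}_{11}$, so the product is $\le 2$; same for the $m$ term, and the total is $\le 4$, as claimed. The main obstacle is the bookkeeping in the two-row $\ell_1$ estimate — getting the cancellation $\Tilde{\mathbf{D}}_{11}^{-1}+\Tilde{\mathbf{D}}_{11}^{-1}$ right and noting the crucial point that $\mathbf{u}_1^T\mathbf{1}$ scales as $1/\Tilde{\mathbf{D}}_{11}$, which exactly cancels the $\Tilde{\mathbf{D}}_{11}$ growth from the left factor. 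This cancellation is why the final edge-unlearning bound is degree-free (unlike node-feature unlearning) but pays a factor of $K$: summing Lemma~\ref{lma:four_terms} over $k\in[K]$ via Lemma~\ref{lma:Pdiff_nonneg} contributes the $K$, and the double sum over $i\in[m]$ in $\|\Delta\|$ needs a separate, symmetric argument bounding $\sum_i$ rather than $\mathbf{1}^T$ on the left — but that is handled by the same left-stochastic/probability-vector trick applied to $\Tilde{\mathbf{A}}(\Tilde{\mathbf{D}})^{-1}$ rather than to all of $\mathbf{1}$.
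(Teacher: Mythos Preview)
Your argument is correct and follows essentially the same strategy as the paper: identify that $|\mathbf{P}-\Ppmat|$ has only rows $1$ and $m$ nonzero, bound the left factor $\mathbf{1}^T(\Ppmat)^{k-1}\mathbf{e}_1$ via the $\Tilde{\mathbf{A}}^\prime(\Tilde{\mathbf{D}}^\prime)^{-1}$ left-stochastic trick (yielding $\le \Tilde{\mathbf{D}}^\prime_{11}$), and handle the right factor using that $\mathbf{P}$ is right stochastic so $\mathbf{P}^{K-k}\mathbf{1}=\mathbf{1}$. Your bookkeeping is in fact a bit tidier than the paper's: you compute the row sum $\mathbf{u}_1^T\mathbf{1} = 2/\Tilde{\mathbf{D}}_{11}$ directly and pair it with $\mathbf{1}^T(\Ppmat)^{k-1}\mathbf{e}_1 \le \Tilde{\mathbf{D}}^\prime_{11}\le \Tilde{\mathbf{D}}_{11}$ to get each of the two products $\le 2$ in one shot, whereas the paper further splits each row as $\mathbf{e}_1^T|\mathbf{P}-\Ppmat| = \mathbf{e}_1^T(\Tilde{\mathbf{D}}^\prime)^{-1}\Tilde{\mathbf{D}}^{-1}\Tilde{\mathbf{A}} + \frac{d_1-2}{d_1(d_1-1)}\mathbf{e}_m^T$ and bounds the resulting four scalar terms by $1$ each. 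Both routes land at $4$; yours makes the degree cancellation you flag (the $1/\Tilde{\mathbf{D}}_{11}$ in $\mathbf{u}_1^T\mathbf{1}$ killing the $\Tilde{\mathbf{D}}_{11}$ from the column sum) more transparent.
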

Combining the two lemmas and after some algebraic manipulation, we arrive at the desired result. It is not hard to see that $|\mathbf{P}-\mathbf{P}^\prime|$ has only two nonzero rows, which correspond to the unlearned edge. One can again construct a left stochastic matrix $\Tilde{\mathbf{A}^\prime}\Tilde{\mathbf{D}^\prime}^{-1}$ and a right stochastic matrix $\mathbf{P}$ which lead to the result of Lemma~\ref{lma:four_terms}.

\begin{proof}
The theorem can be proved as follows. From the previous proof we have 
\begin{align}\label{eq:ER_eq1}
    \|G(\mathbf{w}^-)\| \leq \|\mathbf{H}_{\mathbf{w}_\eta}-\mathbf{H}_{\mathbf{w}^\star}\| \|\mathbf{H}_{\mathbf{w}^\star}^{-1}\|\|\Delta\| \leq \gamma_2 \frac{\|\Delta\|^2}{\lambda^2 m}.
\end{align}

Since the first term $\|\mathbf{H}_{\mathbf{w}_\eta}-\mathbf{H}_{\mathbf{w}^\star}\|$ only involved the updated dataset, the upper bound for this term proved for node feature unlearning still holds. The term $\|\mathbf{H}_{\mathbf{w}^\star}^{-1}\|$ can again be bounded using the fact that $L(\cdot,\mathcal{D}^\prime)$ is $\lambda m$-strongly convex. The main difference between node feature and edge unlearning lies in the bound for $\Delta$. By definition,
\begin{align}
    \Delta = & \nabla L(\mathbf{w}^\star,\mathcal{D})-\nabla L(\mathbf{w}^\star,\mathcal{D}^\prime) \notag \\
    = & \sum_{i=1}^{m}\left[\nabla\ell(\mathbf{e}_i^T\mathbf{Z}\mathbf{w}^\star,\mathbf{e}_i^T\mathbf{Y}_{T_r})- \nabla\ell(\mathbf{e}_i^T\mathbf{Z}^\prime \mathbf{w}^\star,\mathbf{e}_i^T\mathbf{Y}_{T_r}) \right], \text{ and } \notag \\
    \|\Delta\| \leq & \left(\frac{c\gamma_1}{\lambda}+c_1\right)\sum_{i=1}^{m}\|(\mathbf{Z}-\mathbf{Z}^\prime)^T\mathbf{e}_i\| \notag \\
    = & \left(\frac{c\gamma_1}{\lambda}+c_1\right)\sum_{i=1}^{m}\|(\mathbf{P}^K \mathbf{X} - {\Ppmat}^K \Xmat)^T\ei\| \notag \\
    = & \left(\frac{c\gamma_1}{\lambda}+c_1\right)\sum_{i=1}^{m}\|\ei^T (\mathbf{P}^K - {\Ppmat}^K) \Xmat\| \notag \\
    = & \left(\frac{c\gamma_1}{\lambda}+c_1\right)\sum_{i=1}^{m}\|\ei^T (\mathbf{P}^K - {\Ppmat}^K) \sum_{j=1}^n \ej\ej^T \Xmat\| \notag \\
    \leq & \left(\frac{c\gamma_1}{\lambda}+c_1\right)\sum_{i=1}^{m}\sum_{j=1}^{n}\|\ei^T (\mathbf{P}^K - {\Ppmat}^K) \ej\ej^T \Xmat\| \notag \\
    \leq & \left(\frac{c\gamma_1}{\lambda}+c_1\right)\sum_{i=1}^{m}\sum_{j=1}^{n}\|\ei^T (\mathbf{P}^K - {\Ppmat}^K) \ej\|\|\ej^T \Xmat\| \notag \\
    \leq & \left(\frac{c\gamma_1}{\lambda}+c_1\right)\sum_{i=1}^{m}\sum_{j=1}^{n}\|\ei^T (\mathbf{P}^K - {\Ppmat}^K) \ej\| 
\end{align}
By Lemma~\ref{lma:Pdiff_nonneg} we have
\begin{align}
    & \left(\frac{c\gamma_1}{\lambda}+c_1\right)\sum_{i=1}^{m}\sum_{j=1}^{n}\|\ei^T (\mathbf{P}^K - {\Ppmat}^K) \ej\| \notag \\
    & \leq \left(\frac{c\gamma_1}{\lambda}+c_1\right)\sum_{i=1}^{m}\sum_{j=1}^{n}\sum_{k=1}^K\ei^T{\Ppmat}^{k-1}|\mathbf{P} - {\Ppmat}|\mathbf{P}^{K-k}\ej \notag \\
    & \leq \left(\frac{c\gamma_1}{\lambda}+c_1\right)\sum_{k=1}^K\mathbf{1}^T{\Ppmat}^{k-1}|\mathbf{P} - {\Ppmat}|\mathbf{P}^{K-k}\mathbf{1}. 
\end{align}
Using Lemma~\ref{lma:four_terms} we arrive at $\|\Delta\|\leq \left(\frac{c\gamma_1}{\lambda}+c_1\right) 4K$. Plugging this expression into~\eqref{eq:ER_eq1} completes the proof.
\end{proof}

\subsection{Proof of Theorem~\ref{thm:NR_SGCcase}}\label{apx:pfNR_SGC}

\begin{theorem*}
    Under the node unlearning scenario, we have $\mathcal{D}=(\mathbf{X},\mathbf{Y}_{T_r},\mathbf{P})$ and $\mathcal{D}=(\mathbf{X}^\prime,\mathbf{Y}_{T_r}^\prime,\mathbf{P}^\prime)$. Suppose also that Assumption~\ref{asp:guo} holds. For $\mathbf{Z}=\mathbf{P}^K\mathbf{X}$ and $\mathbf{P} =\Tilde{\mathbf{D}}^{-1}\Tilde{\mathbf{A}}$, we have
    \begin{align}
        \|\nabla L(\mathbf{w}^-,\mathcal{D}^\prime)\| = \|(\mathbf{H}_{\mathbf{w}_\eta}-\mathbf{H}_{\mathbf{w}^\star})\mathbf{H}_{\mathbf{w}^\star}^{-1}\Delta\| \leq \frac{\gamma_2\left(2c\lambda+K\left(c\gamma_1+c_1\lambda\right)\left(2\Tilde{\mathbf{D}}_{mm}-1\right)\right)^2}{\lambda^4 (m-1)}.
    \end{align}
\end{theorem*}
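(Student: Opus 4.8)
The plan is to follow the same three–part scaffold used for the node–feature and edge cases. First I would invoke Taylor's theorem exactly as in the proof of Theorem~\ref{thm:NFR_SGCcase} to write $\nabla L(\mathbf{w}^-,\mathcal{D}^\prime) = (\mathbf{H}_{\mathbf{w}_\eta}-\mathbf{H}_{\mathbf{w}^\star})\mathbf{H}_{\mathbf{w}^\star}^{-1}\Delta$ for some $\eta\in[0,1]$, and then split by Cauchy--Schwarz into $\|G(\mathbf{w}^-)\|\le \|\mathbf{H}_{\mathbf{w}_\eta}-\mathbf{H}_{\mathbf{w}^\star}\|\,\|\mathbf{H}_{\mathbf{w}^\star}^{-1}\|\,\|\Delta\|$. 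Two of the three factors need no new work: the Hessian–difference bound $\|\mathbf{H}_{\mathbf{w}_\eta}-\mathbf{H}_{\mathbf{w}^\star}\|\le \gamma_2(m-1)\|\mathbf{H}_{\mathbf{w}^\star}^{-1}\Delta\|$ carries over verbatim because it only involves the updated dataset $\mathcal{D}^\prime$ and because $\mathbf{P}^\prime$ is still entrywise nonnegative with row sums bounded by one (row $m$ now being all zero), so Lemma~\ref{lma:Z_norm_bound_1} still gives $\|\mathbf{e}_i^T\mathbf{Z}^\prime\|\le 1$; and $\|\mathbf{H}_{\mathbf{w}^\star}^{-1}\|\le \tfrac{1}{\lambda(m-1)}$ follows from the $\lambda(m-1)$–strong convexity of $L(\cdot,\mathcal{D}^\prime)$ (one training node deleted). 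Hence $\|G(\mathbf{w}^-)\|\le \gamma_2\|\Delta\|^2/(\lambda^2(m-1))$, and everything reduces to bounding $\|\Delta\|$.

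For $\|\Delta\|$ I would expand $\Delta = \lambda\mathbf{w}^\star + \nabla\ell(\mathbf{e}_m^T\mathbf{Z}\mathbf{w}^\star,\mathbf{e}_m^T\mathbf{Y}_{T_r}) + \sum_{i=1}^{m-1}\bigl[\nabla\ell(\mathbf{e}_i^T\mathbf{Z}\mathbf{w}^\star,\cdot)-\nabla\ell(\mathbf{e}_i^T\mathbf{Z}^\prime\mathbf{w}^\star,\cdot)\bigr]$ and use the triangle inequality. The first term is bounded by $c$ via $\|\mathbf{w}^\star\|\le c/\lambda$ (from the optimality equation $\nabla L(\mathbf{w}^\star,\mathcal{D})=0$ and assumption~(1)), and the second by $c$ directly from assumption~(1). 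For the sum I would repeat the two–step estimate from the node–feature proof (splitting via assumption~(4), then assumption~(5), and using Lemma~\ref{lma:Z_norm_bound_1}) to reduce it to $\bigl(\tfrac{c\gamma_1}{\lambda}+c_1\bigr)\sum_{i=1}^{m-1}\|\mathbf{e}_i^T(\mathbf{Z}-\mathbf{Z}^\prime)\|$. The key structural observation here is that $(\mathbf{P}^\prime)^K\mathbf{X}^\prime=(\mathbf{P}^\prime)^K\mathbf{X}$: deleting node $m$ zeroes out column $m$ of $\mathbf{P}^\prime$ and hence of $(\mathbf{P}^\prime)^K$, so the modified feature row is never propagated. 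Thus $\mathbf{Z}-\mathbf{Z}^\prime=(\mathbf{P}^K-(\mathbf{P}^\prime)^K)\mathbf{X}$, and inserting $\mathbf{X}=\sum_{j}\mathbf{e}_j\mathbf{e}_j^T\mathbf{X}$ together with $\|\mathbf{e}_j^T\mathbf{X}\|\le1$ (assumption~(3)) bounds the sum by $\sum_{i=1}^{m-1}\sum_{j=1}^{n}\bigl|\mathbf{e}_i^T(\mathbf{P}^K-(\mathbf{P}^\prime)^K)\mathbf{e}_j\bigr|$.

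From there I would apply Lemma~\ref{lma:Pdiff_nonneg} to dominate each entry by $\sum_{k=1}^{K}\mathbf{e}_i^T(\mathbf{P}^\prime)^{k-1}|\mathbf{P}-\mathbf{P}^\prime|\mathbf{P}^{K-k}\mathbf{e}_j$; since all summands are nonnegative and $m\le n$, extending the $i$–sum to $\mathbf{1}^T$ and the $j$–sum to $\mathbf{1}$ gives $\sum_{k=1}^{K}\mathbf{1}^T(\mathbf{P}^\prime)^{k-1}|\mathbf{P}-\mathbf{P}^\prime|\mathbf{P}^{K-k}\mathbf{1}$. For a fixed $k$, I would decompose over the index $l$ via $\mathbf{1}^T(\mathbf{P}^\prime)^{k-1}\mathbf{e}_l=\bigl[\mathbf{1}^T(\mathbf{P}^\prime)^{k-1}(\Tilde{\mathbf{D}}^\prime)^{-1}\mathbf{e}_l\bigr]\,\Tilde{\mathbf{D}}^\prime_{ll}$, bound the bracket by $1$ using Lemma~\ref{lma:tech_1}, and then invoke Lemma~\ref{lma:tech_2} to get $\sum_{l}\mathbf{e}_l^T\Tilde{\mathbf{D}}^\prime|\mathbf{P}-\mathbf{P}^\prime|\mathbf{P}^{K-k}\mathbf{1}\le 2\Tilde{\mathbf{D}}_{mm}-1$. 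Summing over $k\in[K]$ yields $K(2\Tilde{\mathbf{D}}_{mm}-1)$, hence $\|\Delta\|\le 2c+\bigl(\tfrac{c\gamma_1}{\lambda}+c_1\bigr)K(2\Tilde{\mathbf{D}}_{mm}-1)=\tfrac{1}{\lambda}\bigl(2c\lambda+K(c\gamma_1+c_1\lambda)(2\Tilde{\mathbf{D}}_{mm}-1)\bigr)$; plugging this into $\|G(\mathbf{w}^-)\|\le\gamma_2\|\Delta\|^2/(\lambda^2(m-1))$ gives the claimed bound. I expect the main obstacle to lie in the $\Tilde{\mathbf{D}}^\prime$–insertion bookkeeping and in establishing Lemmas~\ref{lma:tech_1}--\ref{lma:tech_2} (which rely on the entrywise description of $|\mathbf{P}-\mathbf{P}^\prime|$ in Proposition~\ref{prop:Pdiff_abs_stucture}): unlike the edge case, removing a whole node changes an entire row and column of the normalized adjacency and forces the convention $\Tilde{\mathbf{D}}^\prime_{mm}=1$, so one must check carefully that the left/right stochasticity structure still delivers the constant $2\Tilde{\mathbf{D}}_{mm}-1$ rather than something growing with $m$ or $n$.
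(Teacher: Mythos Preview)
Your proposal is correct and follows essentially the same approach as the paper's proof, including the key observation $(\mathbf{P}^\prime)^K\mathbf{X}^\prime=(\mathbf{P}^\prime)^K\mathbf{X}$, the telescoping via Lemma~\ref{lma:Pdiff_nonneg}, the $\Tilde{\mathbf{D}}^\prime$ insertion, and the combination of Lemmas~\ref{lma:tech_1}--\ref{lma:tech_2} to obtain the $K(2\Tilde{\mathbf{D}}_{mm}-1)$ factor. The only cosmetic difference is that the paper records the bound $\|\mathbf{e}_i^T\mathbf{Z}^\prime\|\le 1$ as a separate lemma (Lemma~\ref{lma:Z_norm_bound_3}) rather than arguing directly that the proof of Lemma~\ref{lma:Z_norm_bound_1} carries over to $\mathbf{P}^\prime$, but your justification for the latter is exactly the content of that lemma.
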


Again, the main challenge is to bound $\Delta$. First we observe that $(\mathbf{P}^\prime)^K\mathbf{X}^\prime = (\mathbf{P}^\prime)^K\mathbf{X}$. This holds because node $m$ is removed from the graph in $\mathcal{D}^\prime$, and thus its corresponding node features do not affect $\mathbf{Z}^\prime$. Similarly to the proof Lemma~\ref{lma:Pdiff_nonneg}, we first derive the bound $\sum_{k=1}^K  \mathbf{1}^T(\mathbf{P}^\prime)^{k-1}\left|\mathbf{P}-\mathbf{P}^\prime\right|\mathbf{P}^{K-k}\mathbf{1}.$ For each term, $\mathbf{1}^T(\mathbf{P}^\prime)^{k-1}\left|\mathbf{P}-\mathbf{P}^\prime\right|\mathbf{P}^{K-k}\mathbf{1}=\sum_{l=1}^n\mathbf{1}^T(\mathbf{P}^\prime)^{k-1}\mathbf{e}_l\mathbf{e}_l^T\left|\mathbf{P}-\mathbf{P}^\prime\right|\mathbf{P}^{K-k}\mathbf{1}$. To proceed, we need the following two lemmas.

\begin{lemma}\label{lma:tech_1}
For node unlearning and $\forall k\in [K]$ and $\forall l\in [n]$, $\mathbf{1}^T(\mathbf{P}^\prime)^{k-1}(\Tilde{\mathbf{D}}^\prime)^{-1}\mathbf{e}_l \leq 1.$
\end{lemma}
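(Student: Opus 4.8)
The plan is to move the trailing $(\Tilde{\mathbf{D}}^\prime)^{-1}$ to the front of the product, which exposes a column-substochastic matrix. Writing $\mathbf{P}^\prime = (\Tilde{\mathbf{D}}^\prime)^{-1}\Tilde{\mathbf{A}}^\prime$ and using associativity of matrix multiplication, one has the identity $(\mathbf{P}^\prime)^{k-1}(\Tilde{\mathbf{D}}^\prime)^{-1} = (\Tilde{\mathbf{D}}^\prime)^{-1}\bigl(\Tilde{\mathbf{A}}^\prime(\Tilde{\mathbf{D}}^\prime)^{-1}\bigr)^{k-1}$, valid for every $k\geq 1$ (the case $k=1$ being trivial). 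Hence $\mathbf{1}^T(\mathbf{P}^\prime)^{k-1}(\Tilde{\mathbf{D}}^\prime)^{-1}\mathbf{e}_l = \mathbf{1}^T(\Tilde{\mathbf{D}}^\prime)^{-1}\mathbf{q}$ with $\mathbf{q} \triangleq \bigl(\Tilde{\mathbf{A}}^\prime(\Tilde{\mathbf{D}}^\prime)^{-1}\bigr)^{k-1}\mathbf{e}_l$, and it suffices to control $\mathbf{1}^T(\Tilde{\mathbf{D}}^\prime)^{-1}\mathbf{q}$.

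Next I would argue that $\Tilde{\mathbf{A}}^\prime(\Tilde{\mathbf{D}}^\prime)^{-1}$ is entrywise nonnegative with all column sums at most $1$. Since node $m$ is isolated in $\mathcal{D}^\prime$ and $\Tilde{\mathbf{A}}^\prime_{mm}=0$, both the $m$-th row and $m$-th column of $\Tilde{\mathbf{A}}^\prime$ vanish; for $j\neq m$, symmetry of $\Tilde{\mathbf{A}}^\prime$ and the definition of the degree matrix give that the $j$-th column of $\Tilde{\mathbf{A}}^\prime$ sums to exactly $\Tilde{\mathbf{D}}^\prime_{jj}$, so the $j$-th column of $\Tilde{\mathbf{A}}^\prime(\Tilde{\mathbf{D}}^\prime)^{-1}$ sums to $1$, while the $m$-th column sums to $0$ (using the convention $\Tilde{\mathbf{D}}^\prime_{mm}=1$). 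Consequently $\Tilde{\mathbf{A}}^\prime(\Tilde{\mathbf{D}}^\prime)^{-1}$ maps any vector with nonnegative entries summing to at most $1$ to another such vector. As $\mathbf{e}_l$ is a probability vector, a short induction on the exponent yields that $\mathbf{q}$ is nonnegative and $\mathbf{1}^T\mathbf{q}\leq 1$.

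Finally, every node carries a self-loop in $\Tilde{\mathbf{A}}^\prime$, and $\Tilde{\mathbf{D}}^\prime_{mm}=1$ by convention, so $\Tilde{\mathbf{D}}^\prime_{jj}\geq 1$ for all $j$, i.e.\ all diagonal entries of $(\Tilde{\mathbf{D}}^\prime)^{-1}$ lie in $(0,1]$. Combining with nonnegativity of $\mathbf{q}$ gives $\mathbf{1}^T(\Tilde{\mathbf{D}}^\prime)^{-1}\mathbf{q}\leq \mathbf{1}^T\mathbf{q}\leq 1$, which is the claimed bound. I do not expect a genuine obstacle here: once the product is rearranged, the argument is a bookkeeping exercise, and the only points requiring care are the degenerate $m$-th row/column of $\Tilde{\mathbf{A}}^\prime$ and the convention $\Tilde{\mathbf{D}}^\prime_{mm}=1$, which together make $\Tilde{\mathbf{A}}^\prime(\Tilde{\mathbf{D}}^\prime)^{-1}$ lose — rather than gain — total mass under propagation. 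Conceptually this is the node-unlearning counterpart of the left/right stochastic trick already used for node-feature and edge unlearning, the new feature being that deleting a node makes the relevant matrix strictly substochastic on the $m$-th coordinate.
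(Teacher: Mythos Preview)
Your proposal is correct and follows essentially the same approach as the paper: both rearrange $(\mathbf{P}^\prime)^{k-1}(\Tilde{\mathbf{D}}^\prime)^{-1}$ into $(\Tilde{\mathbf{D}}^\prime)^{-1}\bigl(\Tilde{\mathbf{A}}^\prime(\Tilde{\mathbf{D}}^\prime)^{-1}\bigr)^{k-1}$, then use that $\Tilde{\mathbf{A}}^\prime(\Tilde{\mathbf{D}}^\prime)^{-1}$ preserves (sub)probability vectors and that the diagonal entries of $(\Tilde{\mathbf{D}}^\prime)^{-1}$ are at most $1$. The paper handles $k=1$, $l\neq m$, and $l=m$ as separate cases (calling the matrix ``left-stochastic if one ignores node $m$''), whereas your uniform column-substochastic argument absorbs all three at once; this is a presentational difference, not a different proof idea. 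One minor slip: not \emph{every} node has a self-loop in $\Tilde{\mathbf{A}}^\prime$ (node $m$ does not), but since you immediately invoke the convention $\Tilde{\mathbf{D}}^\prime_{mm}=1$, the conclusion $\Tilde{\mathbf{D}}^\prime_{jj}\geq 1$ for all $j$ stands.
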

\begin{lemma}\label{lma:tech_2}
For node unlearning and $\forall k\in [K]$, $\sum_{l=1}^n\mathbf{e}_l^T\Tilde{\mathbf{D}}^\prime\left|\mathbf{P}-\mathbf{P}^\prime\right|\mathbf{P}^{K-k}\mathbf{1}\leq 2\Tilde{\mathbf{D}}_{mm}-1.$
\end{lemma}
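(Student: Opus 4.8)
To prove this, the plan is to first strip the expression of its dependence on $k$ and $K$, reducing it to a single fixed matrix quantity, and then bound that quantity by a direct, sparsity-aware count of the rows of $|\mathbf{P}-\mathbf{P}^\prime|$ that change under node unlearning.

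First I would observe that, since $\tilde{\mathbf{D}}^\prime$ is diagonal, $\sum_{l=1}^n \mathbf{e}_l^T\tilde{\mathbf{D}}^\prime\,(\cdot)=\mathbf{1}^T\tilde{\mathbf{D}}^\prime\,(\cdot)$, so the left-hand side equals $\mathbf{1}^T\tilde{\mathbf{D}}^\prime|\mathbf{P}-\mathbf{P}^\prime|\mathbf{P}^{K-k}\mathbf{1}$. The unprimed matrix $\mathbf{P}=\tilde{\mathbf{D}}^{-1}\tilde{\mathbf{A}}$ is right stochastic, hence $\mathbf{P}^{K-k}\mathbf{1}=\mathbf{1}$ for every $k\le K$, so it suffices to prove the $k$-free inequality $\mathbf{1}^T\tilde{\mathbf{D}}^\prime|\mathbf{P}-\mathbf{P}^\prime|\mathbf{1}\le 2\tilde{\mathbf{D}}_{mm}-1$. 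I would then write $\mathbf{1}^T\tilde{\mathbf{D}}^\prime|\mathbf{P}-\mathbf{P}^\prime|\mathbf{1}=\sum_{l=1}^n \tilde{\mathbf{D}}^\prime_{ll}\,s_l$, where $s_l$ is the sum of the $l$-th row of the nonnegative matrix $|\mathbf{P}-\mathbf{P}^\prime|$, and note that under node unlearning only the $m$-th row and column of $\tilde{\mathbf{A}}$ are zeroed, so $s_l=0$ unless $l=m$ or $l$ is adjacent to $m$ in $\tilde{\mathbf{A}}$; since $\tilde{\mathbf{A}}=\mathbf{A}+\mathbf{I}$, the $m$-th row of $\tilde{\mathbf{A}}$ has exactly $\tilde{\mathbf{D}}_{mm}-1$ off-diagonal nonzeros, i.e. $m$ has $\tilde{\mathbf{D}}_{mm}-1$ genuine neighbors.

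Next I would bound the surviving terms row by row. For $l=m$ the $m$-th row of $\mathbf{P}^\prime$ is all zero, so the $m$-th row of $|\mathbf{P}-\mathbf{P}^\prime|$ is the $m$-th row of $\mathbf{P}$, giving $s_m=1$; weighted by $\tilde{\mathbf{D}}^\prime_{mm}=1$ this contributes $1$. For a genuine neighbor $j$ of $m$, set $d_j=\tilde{\mathbf{D}}_{jj}\ge 2$ and note $\tilde{\mathbf{D}}^\prime_{jj}=d_j-1$: the $j$-th row of $\mathbf{P}$ puts mass $1/d_j$ on $d_j$ coordinates (including $m$ and $j$), while the $j$-th row of $\mathbf{P}^\prime$ puts mass $1/(d_j-1)$ on the $d_j-1$ coordinates other than $m$, so $s_j=\tfrac1{d_j}+(d_j-1)\big(\tfrac1{d_j-1}-\tfrac1{d_j}\big)=\tfrac2{d_j}$ and the weighted contribution $\tilde{\mathbf{D}}^\prime_{jj}\,s_j=2(d_j-1)/d_j<2$. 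Summing the $1$ from $l=m$ with the at-most-$2$ contribution from each of the $\tilde{\mathbf{D}}_{mm}-1$ neighbors yields $\mathbf{1}^T\tilde{\mathbf{D}}^\prime|\mathbf{P}-\mathbf{P}^\prime|\mathbf{1}\le 1+2(\tilde{\mathbf{D}}_{mm}-1)=2\tilde{\mathbf{D}}_{mm}-1$, which is the claim.

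The computation is short once the first step isolates the $k$-independent matrix, so I do not expect a genuine obstacle; the only points needing care are bookkeeping ones: correctly counting that exactly $\tilde{\mathbf{D}}_{mm}-1$ off-target rows are perturbed (this is where the self-loops in $\tilde{\mathbf{A}}=\mathbf{A}+\mathbf{I}$ shift the degree by one), verifying the telescoping identity $\tfrac1{d_j}+(d_j-1)(\tfrac1{d_j-1}-\tfrac1{d_j})=\tfrac2{d_j}$ so that the compensating left multiplication by $\tilde{\mathbf{D}}^\prime$ makes each neighbor contribute a constant rather than a degree-dependent amount, and not dropping the $l=m$ term, which is exactly what produces the ``$-1$'' in the stated bound.
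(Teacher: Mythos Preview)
Your proposal is correct and follows essentially the same row-by-row case analysis as the paper's proof (the $l=m$ case contributing $1$, each of the $\tilde{\mathbf{D}}_{mm}-1$ genuine neighbors contributing at most $2$, and all other rows vanishing). The one minor streamlining is that you exploit $\mathbf{P}^{K-k}\mathbf{1}=\mathbf{1}$ at the outset to drop the $k$-dependence entirely, whereas the paper carries $\mathbf{P}^{K-k}\mathbf{1}$ through and invokes right-stochasticity only at the end when bounding each row-vector product; the underlying computations and bounds are otherwise identical.
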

These two lemmas give rise to the term $K(2\Tilde{\mathbf{D}}_{mm}-1)$ in the bound of Theorem~\ref{thm:NR_SGCcase} and the rest of the analysis is similar to that of the previous cases. Lemma~\ref{lma:tech_2} is rather technical, and relies on the following proposition that exploits the structure of $\left|\mathbf{P}-\mathbf{P}^\prime\right|$.
\begin{proposition}\label{prop:Pdiff_abs_stucture}
For node unlearning and $\forall i,j\neq m,\;\mathbf{e}_i^T\left|\mathbf{P}-\mathbf{P}^\prime\right|\mathbf{e}_j = \mathbf{e}_i^T\left(\mathbf{P}^\prime-\mathbf{P}\right)\mathbf{e}_j.$ $\text{For }i=m\text{ or }j=m,\;\mathbf{e}_i^T\left|\mathbf{P}-\mathbf{P}^\prime\right|\mathbf{e}_j = \mathbf{e}_i^T\mathbf{P}\mathbf{e}_j.$
\end{proposition}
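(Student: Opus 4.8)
The plan is to verify both identities entrywise, exploiting that $\mathbf{P}$ and $\mathbf{P}^\prime$ are componentwise nonnegative. First I would write $\mathbf{e}_i^T\mathbf{P}\mathbf{e}_j = \Tilde{\mathbf{A}}_{ij}/\Tilde{\mathbf{D}}_{ii}$ and $\mathbf{e}_i^T\mathbf{P}^\prime\mathbf{e}_j = \Tilde{\mathbf{A}}^\prime_{ij}/\Tilde{\mathbf{D}}^\prime_{ii}$, and record the only structural facts that are needed. Since $\Tilde{\mathbf{A}}^\prime$ is obtained from $\Tilde{\mathbf{A}}$ by zeroing row $m$ and column $m$, we have: (i) $\Tilde{\mathbf{A}}^\prime_{ij} = \Tilde{\mathbf{A}}_{ij}$ for $i,j\neq m$, while $\Tilde{\mathbf{A}}^\prime_{ij}=0$ as soon as $i=m$ or $j=m$; (ii) for $i\neq m$, $\Tilde{\mathbf{D}}^\prime_{ii} = \sum_{k\neq m}\Tilde{\mathbf{A}}_{ik} = \Tilde{\mathbf{D}}_{ii} - \mathbf{A}_{im}$, which together with $\mathbf{A}_{im}\ge 0$ gives $0<\Tilde{\mathbf{D}}^\prime_{ii}\le \Tilde{\mathbf{D}}_{ii}$ (positivity because the self-loop at $i$ is retained); and (iii) $\Tilde{\mathbf{D}}^\prime_{mm}=1>0$ by the stated convention, so all denominators above are strictly positive.

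For the case $i,j\neq m$, I would compute $\mathbf{e}_i^T(\mathbf{P}-\mathbf{P}^\prime)\mathbf{e}_j = \Tilde{\mathbf{A}}_{ij}\bigl(\tfrac{1}{\Tilde{\mathbf{D}}_{ii}}-\tfrac{1}{\Tilde{\mathbf{D}}^\prime_{ii}}\bigr)$, which is $\le 0$ by fact (ii) since $\Tilde{\mathbf{A}}_{ij}\ge 0$; hence $|\mathbf{e}_i^T(\mathbf{P}-\mathbf{P}^\prime)\mathbf{e}_j| = \mathbf{e}_i^T(\mathbf{P}^\prime-\mathbf{P})\mathbf{e}_j$, the first claimed identity. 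For the case $i=m$ or $j=m$, fact (i) (with (ii)--(iii) ensuring the denominator is positive) gives $\mathbf{e}_i^T\mathbf{P}^\prime\mathbf{e}_j=0$, so $\mathbf{e}_i^T(\mathbf{P}-\mathbf{P}^\prime)\mathbf{e}_j = \mathbf{e}_i^T\mathbf{P}\mathbf{e}_j\ge 0$ and therefore $|\mathbf{e}_i^T(\mathbf{P}-\mathbf{P}^\prime)\mathbf{e}_j| = \mathbf{e}_i^T\mathbf{P}\mathbf{e}_j$, the second claimed identity.

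There is essentially no analytical obstacle here: the proposition is a bookkeeping lemma. The one point requiring care is tracking how the node deletion rescales the normalization on the former neighbors of node $m$, i.e.\ establishing $\Tilde{\mathbf{D}}^\prime_{ii}\le\Tilde{\mathbf{D}}_{ii}$ and confirming that no denominator degenerates (which uses that self-loops are kept for every node $i\neq m$, together with the convention $\Tilde{\mathbf{D}}^\prime_{mm}=1$). Once the sign of each entry of $\mathbf{P}-\mathbf{P}^\prime$ is pinned down, the proposition is used as a black box to split the sum $\sum_{l}\mathbf{e}_l^T\Tilde{\mathbf{D}}^\prime|\mathbf{P}-\mathbf{P}^\prime|\mathbf{P}^{K-k}\mathbf{1}$ into its $l=m$ and $l\neq m$ parts in the proof of Lemma~\ref{lma:tech_2}.
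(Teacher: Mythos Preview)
Your proof is correct and follows essentially the same approach as the paper's: both arguments compute the $(i,j)$ entry of $\mathbf{P}-\mathbf{P}^\prime$ explicitly, use $\Tilde{\mathbf{A}}^\prime_{ij}=\Tilde{\mathbf{A}}_{ij}$ for $i,j\neq m$ together with $\Tilde{\mathbf{D}}^\prime_{ii}\le \Tilde{\mathbf{D}}_{ii}$ to conclude the entry is nonpositive, and use the vanishing of the $m$th row and column of $\mathbf{P}^\prime$ (plus nonnegativity of $\mathbf{P}$) for the second identity. The only cosmetic difference is that the paper splits the first case into ``$i$ is a neighbor of $m$'' versus ``$i$ is not a neighbor of $m$,'' whereas you handle both at once via the single inequality $\Tilde{\mathbf{D}}^\prime_{ii}=\Tilde{\mathbf{D}}_{ii}-\mathbf{A}_{im}\le \Tilde{\mathbf{D}}_{ii}$; this is a slight streamlining but not a different idea.
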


\begin{proof}
  The proof is similar to the proof of Theorem~\ref{thm:NFR_SGCcase}, although several parts need modifications. 
  
  First, the result of Lemma~\ref{lma:Z_norm_bound_1} needs to be replaced by the following claim.
  \begin{lemma}\label{lma:Z_norm_bound_3}
    Assume that $\|\mathbf{e}_i^T\mathbf{S}\| \leq 1,\;\forall i\neq m$ and that $\mathbf{e}_m^T\mathbf{S} = \mathbf{0}^T$. Then $\forall i\in [n],\; K\geq 0$, we have $\|\mathbf{e}_i^T(\mathbf{P}^\prime)^K\mathbf{S}\|\leq 1$, where $\mathbf{P}=\Tilde{\mathbf{D}}^{-1}\Tilde{\mathbf{A}}$ and $\mathbf{P}^\prime = (\Tilde{\mathbf{D}}^\prime)^{-1}\Tilde{\mathbf{A}}^\prime$.
    \end{lemma}

Next we have to modify the proof regarding the bound of $\|\Delta\|$. Following a proof similar to that of Theorem~\ref{thm:NFR_SGCcase}, we have
\begin{align}
    \|\Delta\| \leq 2c+\left(\frac{c\gamma_1}{\lambda}+c_1\right)\sum_{i=1}^{m-1}\|(\mathbf{Z}-\mathbf{Z}^\prime)^T\mathbf{e}_i\|.
\end{align}
Plugging in the expressions for $\mathbf{Z}$ and $\mathbf{Z}^\prime$ leads to
\begin{align}
    & \sum_{i=1}^{m-1}\|(\mathbf{Z}-\mathbf{Z}^\prime)^T\mathbf{e}_i\| = \sum_{i=1}^{m-1}\|(\mathbf{P}^K\mathbf{X}-(\mathbf{P}^\prime)^K\mathbf{X}^\prime)^T\mathbf{e}_i\| \notag \\
    & \stackrel{(a)}{=}\sum_{i=1}^{m-1}\|(\mathbf{P}^K\mathbf{X}-(\mathbf{P}^\prime)^K\mathbf{X})^T\mathbf{e}_i\| = \sum_{i=1}^{m-1}\|(\left [\mathbf{P}^K-(\mathbf{P}^\prime)^K\right ]\mathbf{X})^T\mathbf{e}_i\| \notag \\
    & \stackrel{(b)}{=}\sum_{i=1}^{m-1}\|\mathbf{e}_i^T\left[\mathbf{P}^K-(\mathbf{P}^\prime)^K\right ]\sum_{j=1}^n\mathbf{e}_j\mathbf{e}_j^T\mathbf{X}\| \notag \\
    & \stackrel{(c)}{\leq} \sum_{i=1}^{m-1}\sum_{j=1}^n\|\mathbf{e}_i^T\left[\mathbf{P}^K-(\mathbf{P}^\prime)^K\right ]\mathbf{e}_j\mathbf{e}_j^T\mathbf{X}\| \notag \\
    & \stackrel{(d)}{\leq}\sum_{i=1}^{m-1}\sum_{j=1}^n\|\mathbf{e}_i^T\left[\mathbf{P}^K-(\mathbf{P}^\prime)^K\right ]\mathbf{e}_j\|\|\mathbf{e}_j^T\mathbf{X}\| \notag \\
    & \stackrel{(e)}{\leq}\sum_{i=1}^{m-1}\sum_{j=1}^n\|\mathbf{e}_i^T\left[\mathbf{P}^K-(\mathbf{P}^\prime)^K\right ]\mathbf{e}_j\|.
\end{align}
The equality (a) is due to the fact that $(\mathbf{P}^\prime)^K\mathbf{X}^\prime=(\mathbf{P}^\prime)^K\mathbf{X}$, as the $m^{th}$ row and column of $(\mathbf{P}^\prime)^K$ are all-zeros. Thus, changing the last row of $\mathbf{X}^\prime$ makes no difference of $(\mathbf{P}^\prime)^K\mathbf{X}^\prime$. Equation (b) is a consequence of the fact that $\mathbf{I} = \sum_{j=1}^n\mathbf{e}_j\mathbf{e}_j^T$. Inequality (c) follows from Minkowski's inequality, while (d) follows from the Cauchy-Schwartz inequality. Inequality (e) holds based on (3) in Assumption~\ref{asp:guo}.

By Lemma~\ref{lma:Pdiff_nonneg}, we can proceed with our analysis as follows:
\begin{align}
    & \sum_{i=1}^{m-1}\|(\mathbf{Z}-\mathbf{Z}^\prime)^T\mathbf{e}_i\| \notag \\
    & \leq \sum_{i=1}^{m-1}\sum_{j=1}^n\|\mathbf{e}_i^T\left[\mathbf{P}^K-(\mathbf{P}^\prime)^K\right ]\mathbf{e}_j\| \notag \\
    & \stackrel{(a)}{\leq }\sum_{i=1}^{m-1}\sum_{j=1}^n\sum_{k=1}^K  \mathbf{e}_i^T(\mathbf{P}^\prime)^{k-1}\left|\mathbf{P}-\mathbf{P}^\prime\right|\mathbf{P}^{K-k}\mathbf{e}_j \notag \\
    & \leq \sum_{k=1}^K  \mathbf{1}^T(\mathbf{P}^\prime)^{k-1}\left|\mathbf{P}-\mathbf{P}^\prime\right|\mathbf{P}^{K-k}\mathbf{1}, \label{eq:continue_2}
\end{align}
where (a) is due to Lemma~\ref{lma:Pdiff_nonneg} and the fact that $\mathbf{e}_i^T \mathbf{P} \mathbf{e}_j$ is a scalar, equal to the $i^{th}$ row $j^{th}$ column of the matrix $\mathbf{P}$. 

Next, we bound each term $\mathbf{1}^T(\mathbf{P}^\prime)^{k-1}\left|\mathbf{P}-\mathbf{P}^\prime\right|\mathbf{P}^{K-k}\mathbf{1}$ separately. For $k\in [K]$, we have
\begin{align}
    & \mathbf{1}^T(\mathbf{P}^\prime)^{k-1}\left|\mathbf{P}-\mathbf{P}^\prime\right|\mathbf{P}^{K-k}\mathbf{1} \notag \\
    & = \mathbf{1}^T(\mathbf{P}^\prime)^{k-1}(\Tilde{\mathbf{D}}^\prime)^{-1}\Tilde{\mathbf{D}}^\prime\left|\mathbf{P}-\mathbf{P}^\prime\right|\mathbf{P}^{K-k}\mathbf{1} \notag \\
    & = \mathbf{1}^T(\mathbf{P}^\prime)^{k-1}(\Tilde{\mathbf{D}}^\prime)^{-1}\sum_{l=1}^n\mathbf{e}_l\mathbf{e}_l^T\Tilde{\mathbf{D}}^\prime\left|\mathbf{P}-\mathbf{P}^\prime\right|\mathbf{P}^{K-k}\mathbf{1} \notag \\
    & = \sum_{l=1}^n\left(\mathbf{1}^T(\mathbf{P}^\prime)^{k-1}(\Tilde{\mathbf{D}}^\prime)^{-1}\mathbf{e}_l\right)\left(\mathbf{e}_l^T\Tilde{\mathbf{D}}^\prime\left|\mathbf{P}-\mathbf{P}^\prime\right|\mathbf{P}^{K-k}\mathbf{1}\right).
\end{align}

Note that for each index $l$, the corresponding term in the sum is just a product of two scalars. Let first analyze $\mathbf{1}^T(\mathbf{P}^\prime)^{k-1}(\Tilde{\mathbf{D}}^\prime)^{-1}\mathbf{e}_l$. This term can be bounded as
\begin{align}
    & \mathbf{1}^T(\mathbf{P}^\prime)^{k-1}\left|\mathbf{P}-\mathbf{P}^\prime\right|\mathbf{P}^{K-k}\mathbf{1} \notag \\
    & =\sum_{j=1}^n\left(\mathbf{1}^T(\mathbf{P}^\prime)^{k-1}(\Tilde{\mathbf{D}}^\prime)^{-1}\mathbf{e}_j\right)\left(\mathbf{e}_j^T\Tilde{\mathbf{D}}^\prime\left|\mathbf{P}-\mathbf{P}^\prime\right|\mathbf{P}^{K-k}\mathbf{1}\right) \notag \\
    & \stackrel{(a)}{\leq} \sum_{j=1}^n\mathbf{e}_j^T\Tilde{\mathbf{D}}^\prime\left|\mathbf{P}-\mathbf{P}^\prime\right|\mathbf{P}^{K-k}\mathbf{1}. \label{eq:continue_1}
\end{align}
where (a) follows from Lemma~\ref{lma:tech_1}. 

We now turn our attention to the term $\mathbf{e}_l^T\Tilde{\mathbf{D}}^\prime\left|\mathbf{P}-\mathbf{P}^\prime\right|\mathbf{P}^{K-k}\mathbf{1}$, which can be bounded by Lemma~\ref{lma:tech_2} as
\begin{align}
    & \mathbf{1}^T(\mathbf{P}^\prime)^{k-1}\left|\mathbf{P}-\mathbf{P}^\prime\right|\mathbf{P}^{K-k}\mathbf{1} \notag \\
    & \leq \sum_{l=1}^n\mathbf{e}_l^T\Tilde{\mathbf{D}}^\prime\left|\mathbf{P}-\mathbf{P}^\prime\right|\mathbf{P}^{K-k}\mathbf{1} \notag \\
    & \leq 2\Tilde{\mathbf{D}}_{mm}-1.
\end{align}

Using these two bounds in~\eqref{eq:continue_2} gives
\begin{align}
    & \sum_{i=1}^{m-1}\|(\mathbf{Z}-\mathbf{Z}^\prime)^T\mathbf{e}_i\| \notag \\
    & \leq \sum_{i=1}^{m-1}\sum_{j=1}^n\|\mathbf{e}_i^T\left[\mathbf{P}^K-(\mathbf{P}^\prime)^K\right ]\mathbf{e}_j\| \notag \\
    & \leq \sum_{k=1}^K  \mathbf{1}^T(\mathbf{P}^\prime)^{k-1}\left|\mathbf{P}-\mathbf{P}^\prime\right|\mathbf{P}^{K-k}\mathbf{1} \notag \\
    & \leq \sum_{k=1}^K(2\Tilde{\mathbf{D}}_{mm}+1) = K(2\Tilde{\mathbf{D}}_{mm}-1).
\end{align}

Using this bound in the expression for $\|\Delta\|$ we obtain
\begin{align}
    & \|\Delta\| \leq 2c+\left(\frac{c\gamma_1}{\lambda}+c_1\right)\sum_{i=1}^{m-1}\|(\mathbf{Z}-\mathbf{Z}^\prime)^T\mathbf{e}_i\| \notag \\
    & \leq 2c+\left(\frac{c\gamma_1}{\lambda}+c_1\right)K\left(2\Tilde{\mathbf{D}}_{mm}-1\right) \notag \\
    & \Rightarrow \|G(\mathbf{w}^-)\| \leq \|\mathbf{H}_{\mathbf{w}_\eta}-\mathbf{H}_{\mathbf{w}^\star}\| \|\mathbf{H}_{\mathbf{w}^\star}^{-1}\Delta\| \notag \\
    & \leq \gamma_2(m-1)\|\mathbf{H}_{\mathbf{w}^\star}^{-1}\Delta\|^2 \notag \\
    & \leq \gamma_2(m-1)\left(\frac{2c+\left(\frac{c\gamma_1}{\lambda}+c_1\right)K\left(2\Tilde{\mathbf{D}}_{mm}-1\right)}{\lambda (m-1)}\right)^2 \notag \\
    & = \frac{\gamma_2\left(2c\lambda+K\left(c\gamma_1+c_1\lambda\right)\left(2\Tilde{\mathbf{D}}_{mm}-1\right)\right)^2}{\lambda^4 (m-1)}.
\end{align}
This completes the proof.
\end{proof}

\subsection{Proof of Theorem~\ref{thm:NFR_GPRcase}}\label{apx:pfGPR}
\begin{theorem*}
    In the node feature unlearning scenario, we are given $\mathcal{D}=(\mathbf{X},\mathbf{Y}_{T_r},\mathbf{A})$ and $\mathcal{D}=(\mathbf{X}^\prime,\mathbf{Y}_{T_r}^\prime,\mathbf{A})$. Suppose that Assumption~\ref{asp:guo} holds. For $\mathbf{Z}=\frac{1}{K+1}\left[\mathbf{X},\mathbf{P}\mathbf{X},\mathbf{P}^2\mathbf{X},\cdots,\mathbf{P}^K\mathbf{X} \right]$ and $\mathbf{P} =\Tilde{\mathbf{D}}^{-1}\Tilde{\mathbf{A}}$, we have
    \begin{align}
        \|\nabla L(\mathbf{w}^-,\mathcal{D}^\prime)\| = \|(\mathbf{H}_{\mathbf{w}_\eta}-\mathbf{H}_{\mathbf{w}^\star})\mathbf{H}_{\mathbf{w}^\star}^{-1}\Delta\| \leq \frac{\gamma_2(2c\lambda+(c\gamma_1+\lambda c_1)\Tilde{\mathbf{D}}_{mm})^2}{\lambda^4(m-1)}.
    \end{align}
\end{theorem*}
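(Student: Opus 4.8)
The plan is to re-run the proof of Theorem~\ref{thm:NFR_SGCcase} essentially verbatim, isolating the two steps where the concatenated GPR embedding $\mathbf{Z}=\frac{1}{K+1}\left[\mathbf{X},\mathbf{P}\mathbf{X},\dots,\mathbf{P}^K\mathbf{X}\right]$ (and its unlearned counterpart $\mathbf{Z}^\prime$ built from $\mathbf{X}^\prime$) behaves differently from the single-block embedding $\mathbf{Z}=\mathbf{P}^K\mathbf{X}$. First I would reproduce the Taylor-expansion step, which does not use the form of $\mathbf{Z}$ at all: it yields $\nabla L(\mathbf{w}^-,\mathcal{D}^\prime)=(\mathbf{H}_{\mathbf{w}_\eta}-\mathbf{H}_{\mathbf{w}^\star})\mathbf{H}_{\mathbf{w}^\star}^{-1}\Delta$ and hence, by sub-multiplicativity, $\|\nabla L(\mathbf{w}^-,\mathcal{D}^\prime)\|\leq\|\mathbf{H}_{\mathbf{w}_\eta}-\mathbf{H}_{\mathbf{w}^\star}\|\,\|\mathbf{H}_{\mathbf{w}^\star}^{-1}\Delta\|$. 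The two nontrivial ingredients are then (i) the per-row bound $\|\mathbf{e}_i^T\mathbf{Z}^\prime\|\leq 1$ and (ii) the bound on $\|\Delta\|$.

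For (i), I would use the block structure: $\|\mathbf{e}_i^T\mathbf{Z}^\prime\|^2=\frac{1}{(K+1)^2}\sum_{k=0}^{K}\|\mathbf{e}_i^T\mathbf{P}^k\mathbf{X}^\prime\|^2$, and Lemma~\ref{lma:Z_norm_bound_1} applied to $\mathbf{S}=\mathbf{X}^\prime$ (whose rows have norm $\leq 1$ by (3) of Assumption~\ref{asp:guo}) bounds each summand by $1$, giving $\|\mathbf{e}_i^T\mathbf{Z}^\prime\|\leq 1/\sqrt{K+1}\leq 1$. With this in hand, the bound $\|\mathbf{H}_{\mathbf{w}_\eta}-\mathbf{H}_{\mathbf{w}^\star}\|\leq\gamma_2(m-1)\|\mathbf{H}_{\mathbf{w}^\star}^{-1}\Delta\|$ follows exactly as before (the Lipschitz-Hessian estimate only uses $\|\mathbf{e}_i^T\mathbf{Z}^\prime\|^3\leq 1$), and $\|\mathbf{H}_{\mathbf{w}^\star}^{-1}\|\leq\frac{1}{\lambda(m-1)}$ follows from $\lambda(m-1)$-strong convexity of $L(\cdot,\mathcal{D}^\prime)$. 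Likewise, in the decomposition $\Delta=\lambda\mathbf{w}^\star+\nabla\ell(\mathbf{e}_m^T\mathbf{Z}\mathbf{w}^\star,\mathbf{e}_m^T\mathbf{Y}_{T_r})+\sum_{i=1}^{m-1}\left[\nabla\ell(\mathbf{e}_i^T\mathbf{Z}\mathbf{w}^\star,\mathbf{e}_i^T\mathbf{Y}_{T_r})-\nabla\ell(\mathbf{e}_i^T\mathbf{Z}^\prime\mathbf{w}^\star,\mathbf{e}_i^T\mathbf{Y}_{T_r})\right]$, the first two terms are bounded by $c$ each (via $\|\mathbf{w}^\star\|\leq c/\lambda$ and (1) of Assumption~\ref{asp:guo}), and the split of the third term into a Lipschitz-in-$\ell'$ piece and a bounded-$\ell'$ piece (using (4),(5) of Assumption~\ref{asp:guo}, Lemma~\ref{lma:Z_norm_bound_1}, and $\|\mathbf{w}^\star\|\leq c/\lambda$) reduces it to $\left(\frac{c\gamma_1}{\lambda}+c_1\right)\sum_{i=1}^{m-1}\|\mathbf{e}_i^T(\mathbf{Z}-\mathbf{Z}^\prime)\|$, exactly as in the SGC proof.

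The one genuinely new calculation is bounding $\sum_{i=1}^{m-1}\|\mathbf{e}_i^T(\mathbf{Z}-\mathbf{Z}^\prime)\|$. Since $\mathbf{Z}-\mathbf{Z}^\prime=\frac{1}{K+1}\left[\mathbf{X}-\mathbf{X}^\prime,\mathbf{P}(\mathbf{X}-\mathbf{X}^\prime),\dots,\mathbf{P}^K(\mathbf{X}-\mathbf{X}^\prime)\right]$, I would apply $\|v\|_2\leq\|v\|_1$ to the vector of block norms to get $\|\mathbf{e}_i^T(\mathbf{Z}-\mathbf{Z}^\prime)\|\leq\frac{1}{K+1}\sum_{k=0}^{K}\|\mathbf{e}_i^T\mathbf{P}^k(\mathbf{X}-\mathbf{X}^\prime)\|$. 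As in the sketch following Theorem~\ref{thm:NFR_SGCcase}, $\mathbf{X}-\mathbf{X}^\prime=\mathbf{e}_m\mathbf{e}_m^T\mathbf{X}$, so Cauchy--Schwarz, (3) of Assumption~\ref{asp:guo}, and nonnegativity of $\mathbf{P}^k$ give $\|\mathbf{e}_i^T\mathbf{P}^k(\mathbf{X}-\mathbf{X}^\prime)\|\leq\mathbf{e}_i^T\mathbf{P}^k\mathbf{e}_m$; summing over $i\in[m-1]$ gives $\mathbf{1}^T\mathbf{P}^k\mathbf{e}_m$, and the left-stochasticity argument for $\Tilde{\mathbf{A}}\Tilde{\mathbf{D}}^{-1}$ from the SGC proof shows $\mathbf{1}^T\mathbf{P}^k\mathbf{e}_m=\mathbf{1}^T\Tilde{\mathbf{D}}^{-1}(\Tilde{\mathbf{A}}\Tilde{\mathbf{D}}^{-1})^k\mathbf{e}_m\,\Tilde{\mathbf{D}}_{mm}\leq\Tilde{\mathbf{D}}_{mm}$ for every $k\geq 0$ (the $k=0$ term reading $1\leq\Tilde{\mathbf{D}}_{mm}$, which holds because of the self-loop). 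Hence $\sum_{i=1}^{m-1}\|\mathbf{e}_i^T(\mathbf{Z}-\mathbf{Z}^\prime)\|\leq\frac{1}{K+1}\sum_{k=0}^{K}\Tilde{\mathbf{D}}_{mm}=\Tilde{\mathbf{D}}_{mm}$: the normalization factor $\frac{1}{K+1}$ exactly cancels the $K+1$ terms of the sum, and we recover $\|\Delta\|\leq\frac{2c\lambda+(c\gamma_1+\lambda c_1)\Tilde{\mathbf{D}}_{mm}}{\lambda}$, identical to the SGC case. Combining with $\|\nabla L(\mathbf{w}^-,\mathcal{D}^\prime)\|\leq\gamma_2(m-1)\|\mathbf{H}_{\mathbf{w}^\star}^{-1}\Delta\|^2\leq\frac{\gamma_2\|\Delta\|^2}{\lambda^2(m-1)}$ yields the claimed bound.

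The main obstacle is cosmetic rather than deep: one must recognize that the $\ell_2$-over-blocks structure of the GPR embedding has to be relaxed via $\|\cdot\|_2\leq\|\cdot\|_1$ \emph{before} the per-power estimate from the SGC analysis can be reused, and then verify that the resulting telescoped sum over $k=0,\dots,K$ cancels against $\frac{1}{K+1}$. The only fine point to double-check is that the $k=0$ block is handled correctly (it contributes the trivial bound $1\leq\Tilde{\mathbf{D}}_{mm}$, not requiring the stochastic-matrix argument), both in step (i) and in the $\|\Delta\|$ computation.
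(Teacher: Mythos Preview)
Your proposal is correct and follows essentially the same route as the paper's proof: both reduce to the SGC argument modulo (i) the blockwise row-norm bound $\|\mathbf{e}_i^T\mathbf{Z}^\prime\|\leq 1$ via Lemma~\ref{lma:Z_norm_bound_1} on each block, and (ii) applying $\|\cdot\|_2\leq\|\cdot\|_1$ across blocks before invoking the $\mathbf{1}^T\mathbf{P}^k\mathbf{e}_m\leq\Tilde{\mathbf{D}}_{mm}$ estimate, with the $\frac{1}{K+1}$ cancelling the sum over $k$. The only cosmetic difference is that the paper drops the $k=0$ block exactly (since $\mathbf{e}_i^T\mathbf{e}_m=0$ for $i\neq m$) to get $\frac{K}{K+1}\Tilde{\mathbf{D}}_{mm}\leq\Tilde{\mathbf{D}}_{mm}$, whereas you bound it by $1\leq\Tilde{\mathbf{D}}_{mm}$; both yield the same final bound.
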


\begin{proof}
The proof is almost identical to the proof of Theorem~\ref{thm:NFR_SGCcase}. We only need to bound the norms of the terms in $\mathbf{Z}$. We start by modifying Lemma~\ref{lma:Z_norm_bound_1} for the GPR case. 

\begin{lemma}\label{lma:Z_norm_bound_2}
Assume that $\|\mathbf{e}_i^T\mathbf{S}\| \leq 1,\;\forall i\in [n]$. Then $\forall i\in [n],\; K\geq 0$, we have $\|\frac{1}{\sqrt{K+1}}\mathbf{e}_i^T\left[\mathbf{S},\mathbf{P}\mathbf{S},\mathbf{P}^2\mathbf{S},\cdots,\mathbf{P}^K\mathbf{S}\right]\|\leq 1$, where $\mathbf{P}=\Tilde{\mathbf{D}}^{-1}\Tilde{\mathbf{A}}$.
\end{lemma}

Another part of the proof that needs to be changed is to establish a bound on
\begin{align}
    \sum_{i=1}^{m-1}\left[\nabla\ell(\mathbf{e}_i^T\mathbf{Z}\mathbf{w}^\star,\mathbf{e}_i^T\mathbf{Y}_{T_r})- \nabla\ell(\mathbf{e}_i^T\mathbf{Z}^\prime \mathbf{w}^\star,\mathbf{e}_i^T\mathbf{Y}_{T_r}) \right].
\end{align}

Following a proof similar to that of Theorem~\ref{thm:NFR_SGCcase}, we have
\begin{align}
    & \|\sum_{i=1}^{m-1}\left[\nabla\ell(\mathbf{e}_i^T\mathbf{Z}\mathbf{w}^\star,\mathbf{e}_i^T\mathbf{Y}_{T_r})- \nabla\ell(\mathbf{e}_i^T\mathbf{Z}^\prime \mathbf{w}^\star,\mathbf{e}_i^T\mathbf{Y}_{T_r}) \right]\| \notag \\
    & \leq \sum_{i=1}^{m-1}\left[\left(\frac{c\gamma_1}{\lambda}+c_1\right)\|(\mathbf{e}_i^T\mathbf{Z})^T-(\mathbf{e}_i^T\mathbf{Z}^\prime)^T\|\right] \notag \\
    & = \left(\frac{c\gamma_1}{\lambda}+c_1\right)\sum_{i=1}^{m-1}\|(\mathbf{Z}-\mathbf{Z}^\prime)^T\mathbf{e}_i\| \notag \\
    & = \left(\frac{c\gamma_1}{\lambda}+c_1\right)\sum_{i=1}^{m-1}\|(\frac{1}{K+1}\left[\mathbf{X}-\mathbf{X}^\prime,\mathbf{P}(\mathbf{X}-\mathbf{X}^\prime),\cdots,\mathbf{P}^K(\mathbf{X}-\mathbf{X}^\prime)\right])^T\mathbf{e}_i\| \notag \\
    & = \left(\frac{c\gamma_1}{\lambda}+c_1\right)\sum_{i=1}^{m-1}\|(\frac{1}{K+1}\left[\mathbf{e}_m\mathbf{e}_m^T\mathbf{X},\mathbf{P}\mathbf{e}_m\mathbf{e}_m^T\mathbf{X},\cdots,\mathbf{P}^K\mathbf{e}_m\mathbf{e}_m^T\mathbf{X}\right])^T\mathbf{e}_i\| \notag \\
    & = \left(\frac{c\gamma_1}{\lambda}+c_1\right)\sum_{i=1}^{m-1}\|\frac{1}{K+1}\left[\mathbf{e}_i^T\mathbf{e}_m\mathbf{e}_m^T\mathbf{X},\mathbf{e}_i^T\mathbf{P}\mathbf{e}_m\mathbf{e}_m^T\mathbf{X},\cdots,\mathbf{e}_i^T\mathbf{P}^K\mathbf{e}_m\mathbf{e}_m^T\mathbf{X}\right]^T\| \notag \\
    & \leq \left(\frac{c\gamma_1}{\lambda}+c_1\right)\sum_{i=1}^{m-1}\|\frac{1}{K+1}\left[\mathbf{e}_i^T\mathbf{e}_m,\mathbf{e}_i^T\mathbf{P}\mathbf{e}_m,\cdots,\mathbf{e}_i^T\mathbf{P}^K\mathbf{e}_m\right]^T\|\|(\mathbf{e}_m^T\mathbf{X})^T\| \notag \\
    & \leq \left(\frac{c\gamma_1}{\lambda}+c_1\right)\sum_{i=1}^{m-1}\|\frac{1}{K+1}\left[\mathbf{e}_i^T\mathbf{e}_m,\mathbf{e}_i^T\mathbf{P}\mathbf{e}_m,\cdots,\mathbf{e}_i^T\mathbf{P}^K\mathbf{e}_m\right]^T\| \notag \\
    & \stackrel{(a)}{\leq } \left(\frac{c\gamma_1}{\lambda}+c_1\right)\sum_{i=1}^{m-1}\frac{1}{K+1}\sum_{k=1}^{K}\mathbf{e}_i^T\mathbf{P}^k\mathbf{e}_m \notag \\
    & \leq \frac{\frac{c\gamma_1}{\lambda}+c_1}{K+1}\sum_{k=1}^{K}\mathbf{1}^T\mathbf{P}^k\mathbf{e}_m = \frac{\frac{c\gamma_1}{\lambda}+c_1}{K+1}\sum_{k=1}^{K}\mathbf{1}^T\mathbf{P}^k\Tilde{\mathbf{D}}^{-1}\Tilde{\mathbf{D}}\mathbf{e}_m = \frac{\frac{c\gamma_1}{\lambda}+c_1}{K+1}\sum_{k=1}^{K}\mathbf{1}^T\mathbf{P}^k\Tilde{\mathbf{D}}^{-1}\mathbf{e}_m\Tilde{\mathbf{D}}_{mm} \notag\\
    & \stackrel{(b)}{=} \frac{\frac{c\gamma_1}{\lambda}+c_1}{K+1}\sum_{k=1}^{K}\mathbf{1}^T\Tilde{\mathbf{D}}^{-1}\mathbf{p}^{(k)}\Tilde{\mathbf{D}}_{mm} \notag\\
    & \leq \frac{\frac{c\gamma_1}{\lambda}+c_1}{K+1}\sum_{k=1}^{K}\mathbf{1}^T\mathbf{p}^{(k)}\Tilde{\mathbf{D}_{mm}} = \frac{\frac{c\gamma_1}{\lambda}+c_1}{K+1}\times K\Tilde{\mathbf{D}}_{mm} \notag\\
    & \leq (\frac{c\gamma_1}{\lambda}+c_1)\Tilde{\mathbf{D}}_{mm},
\end{align}
where (a) is due to the fact that the $\ell_1$ norm is an upper bound for the $\ell_2$ norm. Also note that $\mathbf{e}_i^T\mathbf{e}_m=0,\forall i\neq m$. In (b), $\forall k\in[K],\;\mathbf{p}^{(k)}$ are probability vectors. This completes the proof.
\end{proof}

\textit{Remark. }Note that the GPR extension for the edge and node unlearning cases can be derived through a similar analysis. One can also see that the key step is inequality (a), which still holds for the edge and node unlearning cases. The results are similar to Theorem~\ref{thm:ER_SGCcase} and Theorem~\ref{thm:NR_SGCcase}, except that the definition of $\mathbf{Z}$ is replaced by one corresponding to the GPR case, as in Theorem~\ref{thm:NFR_GPRcase}.

\subsection{Proof of Lemma~\ref{lma:Z_norm_bound_1}}\label{apx:lemma_Z_1}
\begin{lemma*}
Assume that $\|\mathbf{e}_i^T\mathbf{S}\| \leq 1,\;\forall i\in [n]$. Then $\forall i\in [n],\; K\geq 0$, we have $\|\mathbf{e}_i^T\mathbf{P}^K\mathbf{S}\|\leq 1$, where $\mathbf{P}=\Tilde{\mathbf{D}}^{-1}\Tilde{\mathbf{A}}$.
\end{lemma*}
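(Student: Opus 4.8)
The plan is to exploit the fact that, with the asymmetric normalization $\mathbf{P}=\Tilde{\mathbf{D}}^{-1}\Tilde{\mathbf{A}}$, the matrix $\mathbf{P}$ is \emph{right stochastic}: every entry is nonnegative and every row sums to one, since $\mathbf{P}\mathbf{1} = \Tilde{\mathbf{D}}^{-1}\Tilde{\mathbf{A}}\mathbf{1} = \Tilde{\mathbf{D}}^{-1}\Tilde{\mathbf{D}}\mathbf{1} = \mathbf{1}$ (the self-loops guarantee that $\Tilde{\mathbf{D}}$ is invertible and that $\Tilde{\mathbf{A}}\mathbf{1}$ is exactly the vector of diagonal entries of $\Tilde{\mathbf{D}}$). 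Consequently each row $\mathbf{e}_i^T\mathbf{P}$ is a probability vector, and the same holds for $\mathbf{e}_i^T\mathbf{P}^K$ for every $K\geq 0$, because a product of right stochastic matrices is again right stochastic (nonnegativity is preserved under multiplication, and $\mathbf{P}^K\mathbf{1}=\mathbf{1}$).

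Given this, I would argue by induction on $K$. The base case $K=0$ is just the hypothesis $\|\mathbf{e}_i^T\mathbf{S}\|\leq 1$ for all $i$. For the inductive step, write $\mathbf{e}_i^T\mathbf{P}^{K}\mathbf{S} = \mathbf{e}_i^T\mathbf{P}\,(\mathbf{P}^{K-1}\mathbf{S}) = \sum_{j=1}^n \mathbf{P}_{ij}\,\mathbf{e}_j^T(\mathbf{P}^{K-1}\mathbf{S})$, a convex combination — nonnegative weights $\mathbf{P}_{ij}$ summing to one — of the rows of $\mathbf{P}^{K-1}\mathbf{S}$. Applying the triangle inequality (convexity of the $\ell_2$ norm) together with the induction hypothesis $\|\mathbf{e}_j^T\mathbf{P}^{K-1}\mathbf{S}\|\leq 1$ yields $\|\mathbf{e}_i^T\mathbf{P}^K\mathbf{S}\| \le \sum_j \mathbf{P}_{ij}\,\|\mathbf{e}_j^T\mathbf{P}^{K-1}\mathbf{S}\| \le \sum_j \mathbf{P}_{ij} = 1$, which closes the induction. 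Equivalently, and more directly, one can observe that $\mathbf{e}_i^T\mathbf{P}^K$ is itself a probability vector $\mathbf{p}$ on $[n]$, so $\mathbf{e}_i^T\mathbf{P}^K\mathbf{S} = \sum_j p_j\,\mathbf{e}_j^T\mathbf{S}$ and $\|\sum_j p_j\,\mathbf{e}_j^T\mathbf{S}\| \le \sum_j p_j\,\|\mathbf{e}_j^T\mathbf{S}\| \le \sum_j p_j = 1$.

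There is essentially no hard technical obstacle here; the one point requiring care is the choice of normalization, and this is where the lemma is genuinely using the structure of $\mathbf{P}$. With the symmetric normalization $\Tilde{\mathbf{D}}^{-1/2}\Tilde{\mathbf{A}}\Tilde{\mathbf{D}}^{-1/2}$ or the column-normalized variant $\Tilde{\mathbf{A}}\Tilde{\mathbf{D}}^{-1}$, the rows of $\mathbf{P}^K$ are no longer probability vectors, the convex-combination step fails, and one is forced into a weaker bound involving $\|\mathbf{S}\|_{op}$ or a factor that grows with the node degrees (and hence potentially with $n$). I would therefore include a brief remark recording that the choice $\mathbf{P}=\Tilde{\mathbf{D}}^{-1}\Tilde{\mathbf{A}}$ is essential for the unit bound, matching the discussion in Appendix~\ref{apx:lemma_Z_1}.
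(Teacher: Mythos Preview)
Your proposal is correct and essentially identical to the paper's proof: both argue by induction on $K$, writing $\mathbf{e}_i^T\mathbf{P}^K\mathbf{S}$ as a convex combination of the rows of $\mathbf{P}^{K-1}\mathbf{S}$ (the paper phrases this as $\frac{1}{\Tilde{\mathbf{D}}_{ii}}\sum_{j:\Tilde{\mathbf{A}}_{ij}=1}\mathbf{e}_j^T\mathbf{Z}^{(K-1)}$, which is exactly your $\sum_j \mathbf{P}_{ij}\,\mathbf{e}_j^T(\mathbf{P}^{K-1}\mathbf{S})$), then applying the triangle inequality and the induction hypothesis. Your remark on the necessity of the row-stochastic normalization also mirrors the paper's post-proof discussion.
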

\begin{proof}
We prove this lemma by induction. Let $\mathbf{Z}^{(k)}=\mathbf{P}^k\mathbf{S}$. For the base case $k=0$ it is true by assumption that $\|\mathbf{e}_i^T\mathbf{S}\| \leq 1\;\forall i\in [n]$. Assume next that the claim is true for the case $k=K-1$. Then we have
\begin{align}
    & \|\mathbf{e}_i^T\mathbf{P}^K\mathbf{S}\| = \|\mathbf{e}_i^T\mathbf{P}\mathbf{Z}^{(K-1)}\| = \|\frac{1}{\Tilde{\mathbf{D}}_{ii}}\sum_{j: \Tilde{\mathbf{A}}_{ij}=1}\mathbf{e}_j^T \mathbf{Z}^{(K-1)}\| \leq \frac{1}{\Tilde{\mathbf{D}}_{ii}}\sum_{j: \Tilde{\mathbf{A}}_{ij}=1}\|\mathbf{e}_j^T \mathbf{Z}^{(K-1)}\| \notag \\
    & \stackrel{(a)}{\leq } \frac{1}{\Tilde{\mathbf{D}}_{ii}}\sum_{j: \Tilde{\mathbf{A}}_{ij}=1} 1 = \frac{1}{\Tilde{\mathbf{D}}_{ii}} \times \Tilde{\mathbf{D}}_{ii} = 1,
\end{align}
where (a) is based on the induction hypothesis for $k=K-1$. 
\end{proof}

\textit{Remark: }Note that if we choose another propagation matrix $\mathbf{P}$ compared to the one used in the SGC analysis, the above expression for $K=1$ becomes
\begin{align}
    & \|\mathbf{e}_i^T\mathbf{P}\mathbf{S}\| = \|\frac{1}{\sqrt{\Tilde{\mathbf{D}}_{ii}}}\sum_{j: \Tilde{\mathbf{A}}_{ij}=1} \frac{\mathbf{e}_j^T\mathbf{S}}{\sqrt{\Tilde{\mathbf{D}}_{jj}}}\| \leq \frac{1}{\sqrt{\Tilde{\mathbf{D}}_{ii}}}\sum_{j: \Tilde{\mathbf{A}}_{ij}=1}\frac{\|\mathbf{e}_j^T \mathbf{S}\|}{\sqrt{\Tilde{\mathbf{D}}_{jj}}} \notag \\
    &\leq \frac{1}{\sqrt{\Tilde{\mathbf{D}}_{ii}}}\sum_{j: \Tilde{\mathbf{A}}_{ij}=1}\frac{1}{\sqrt{\Tilde{\mathbf{D}}_{jj}}}.
\end{align}
We cannot easily simplify the sum $\sum_{j: \Tilde{\mathbf{A}}_{ij}=1}\frac{1}{\sqrt{\Tilde{\mathbf{D}}_{jj}}}$. One way to approach the problem is to simply use the fact that the degree of a node is at least $1$ and can thusbe further upper bounded by $\Tilde{\mathbf{D}}_{ii}$. This leads to the bound 
\begin{align}
    \|\mathbf{e}_i^T\mathbf{P}\mathbf{S}\|\leq \sqrt{\mathbf{D}_{ii}}.
\end{align}
Obviously, this bound is worse than the one in Lemma~\ref{lma:Z_norm_bound_1} even when $K=1$. For general $K$, there will be an additional exponent $K/2$ for the maximal degree, which is undesirable. Nevertheless, our bound is tight since for the worst case of a star graph with a center at node $i$, so that $\mathbf{D}_{jj}=2$ for all $j\neq i$. The same argument applies for other degree normalizations. Thus it is critical to choose $\mathbf{P}=\Tilde{\mathbf{D}}^{-1}\Tilde{\mathbf{A}}$ to obtained the desired bound in Lemma~\ref{lma:Z_norm_bound_1}.

\subsection{Proof of Lemma~\ref{lma:Pdiff_nonneg}}
\begin{lemma*}
For either the edge or node unlearning case, and $\forall i,j\in [n]$, $K\geq 1,$ we have
\begin{align}
    |\mathbf{e}_i^T\left[\mathbf{P}^K-(\mathbf{P}^\prime)^K\right ]\mathbf{e}_j| \leq \sum_{k=1}^K  \mathbf{e}_i^T(\mathbf{P}^\prime)^{k-1}\left|\mathbf{P}-\mathbf{P}^\prime\right|\mathbf{P}^{K-k}\mathbf{e}_j.
\end{align}
\end{lemma*}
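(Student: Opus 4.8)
The plan is to reduce the statement to the standard telescoping expansion of a difference of matrix powers, and then to pass the absolute value inside the product by two applications of the triangle inequality, using crucially that both $\mathbf{P}$ and $\mathbf{P}^\prime$ are entrywise nonnegative in the edge and node unlearning regimes.

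First I would record the algebraic identity
\[
\mathbf{P}^K-(\mathbf{P}^\prime)^K \;=\; \sum_{k=1}^K (\mathbf{P}^\prime)^{k-1}\,(\mathbf{P}-\mathbf{P}^\prime)\,\mathbf{P}^{K-k}.
\]
To see this, expand the $k$-th summand as $(\mathbf{P}^\prime)^{k-1}\mathbf{P}^{K-k+1}-(\mathbf{P}^\prime)^{k}\mathbf{P}^{K-k}$; writing $\mathbf{M}_k \triangleq (\mathbf{P}^\prime)^{k-1}\mathbf{P}^{K-k+1}$ this summand equals $\mathbf{M}_k-\mathbf{M}_{k+1}$, so the sum collapses to $\mathbf{M}_1-\mathbf{M}_{K+1}=\mathbf{P}^K-(\mathbf{P}^\prime)^K$ (a one-line induction on $K$ gives the same conclusion). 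Multiplying the identity on the left by $\mathbf{e}_i^T$ and on the right by $\mathbf{e}_j$ and applying the triangle inequality over the $K$ terms yields
\[
\bigl|\mathbf{e}_i^T\bigl[\mathbf{P}^K-(\mathbf{P}^\prime)^K\bigr]\mathbf{e}_j\bigr| \;\leq\; \sum_{k=1}^K \bigl|\mathbf{e}_i^T(\mathbf{P}^\prime)^{k-1}(\mathbf{P}-\mathbf{P}^\prime)\mathbf{P}^{K-k}\mathbf{e}_j\bigr|.
\]

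It then remains to bound each scalar $\bigl|\mathbf{e}_i^T(\mathbf{P}^\prime)^{k-1}(\mathbf{P}-\mathbf{P}^\prime)\mathbf{P}^{K-k}\mathbf{e}_j\bigr|$ by $\mathbf{e}_i^T(\mathbf{P}^\prime)^{k-1}|\mathbf{P}-\mathbf{P}^\prime|\mathbf{P}^{K-k}\mathbf{e}_j$. Put $\mathbf{u}^T \triangleq \mathbf{e}_i^T(\mathbf{P}^\prime)^{k-1}$ and $\mathbf{v} \triangleq \mathbf{P}^{K-k}\mathbf{e}_j$. In both unlearning scenarios $\mathbf{P}=\Tilde{\mathbf{D}}^{-1}\Tilde{\mathbf{A}}$ and its counterpart $\mathbf{P}^\prime$ have nonnegative entries (in the edge case $\Tilde{\mathbf{A}}^\prime$ is $\Tilde{\mathbf{A}}$ with two entries zeroed; in the node case a row and column of $\Tilde{\mathbf{A}}$ are zeroed), hence so are all of their nonnegative integer powers, and therefore $\mathbf{u}\geq \mathbf{0}$ and $\mathbf{v}\geq\mathbf{0}$ componentwise. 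Consequently
\[
\bigl|\mathbf{u}^T(\mathbf{P}-\mathbf{P}^\prime)\mathbf{v}\bigr| = \Bigl|\sum_{a,b} u_a (\mathbf{P}-\mathbf{P}^\prime)_{ab} v_b\Bigr| \leq \sum_{a,b} u_a \bigl|(\mathbf{P}-\mathbf{P}^\prime)_{ab}\bigr| v_b = \mathbf{u}^T\bigl|\mathbf{P}-\mathbf{P}^\prime\bigr|\mathbf{v},
\]
where the inequality combines the triangle inequality with $u_a,v_b\geq 0$. Substituting back into the previous display and summing over $k\in[K]$ gives the claim.

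There is no substantial obstacle here: the content is the recognition that the ordinary telescoping identity already has exactly the form appearing in the lemma, and the observation that nonnegativity of the (sub)stochastic propagation matrices is precisely what allows the absolute value to be pulled inside past the factors $(\mathbf{P}^\prime)^{k-1}$ and $\mathbf{P}^{K-k}$. The only step deserving an explicit sentence is verifying that $\mathbf{P}^\prime$ is entrywise nonnegative in each of the edge and node unlearning cases, which is immediate from the corresponding definitions of $\mathbf{A}^\prime$ and $\Tilde{\mathbf{D}}^\prime$.
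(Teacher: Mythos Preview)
Your proof is correct and follows essentially the same approach as the paper: both establish the telescoping identity $\mathbf{P}^K-(\mathbf{P}^\prime)^K=\sum_{k=1}^K(\mathbf{P}^\prime)^{k-1}(\mathbf{P}-\mathbf{P}^\prime)\mathbf{P}^{K-k}$ and then use the entrywise nonnegativity of $\mathbf{P}$, $\mathbf{P}^\prime$, and their powers to pass the absolute value inside. Your write-up is in fact slightly more explicit than the paper's in justifying the inequality $|\mathbf{u}^T(\mathbf{P}-\mathbf{P}^\prime)\mathbf{v}|\leq \mathbf{u}^T|\mathbf{P}-\mathbf{P}^\prime|\mathbf{v}$ via the componentwise expansion.
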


\begin{proof}
The proof consist of two parts. We first show that $$\mathbf{P}^K-(\mathbf{P}^\prime)^K = \sum_{k=1}^K(\mathbf{P}^\prime)^{k-1}\left(\mathbf{P}-\mathbf{P}^\prime\right)\mathbf{P}^{K-k}.$$ Then we proceed to analyze the absolute values of all terms in the sum.

The proof of the first part follows from a telescoping property for the sum, 
\begin{align}
    & \sum_{k=1}^K(\mathbf{P}^\prime)^{k-1}\left(\mathbf{P}-\mathbf{P}^\prime\right)\mathbf{P}^{K-k} = \sum_{k=1}^K(\mathbf{P}^\prime)^{k-1}\mathbf{P}^{K-k+1}-(\mathbf{P}^\prime)^{k}\mathbf{P}^{K-k} \notag \\
    & = (\mathbf{P}^\prime)^{0}\mathbf{P}^{K}-(\mathbf{P}^\prime)^{1}\mathbf{P}^{K-1} + (\mathbf{P}^\prime)^{1}\mathbf{P}^{K-1}-(\mathbf{P}^\prime)^{2}\mathbf{P}^{K-2} + \cdots + (\mathbf{P}^\prime)^{K-1}\mathbf{P}^{1}-(\mathbf{P}^\prime)^{K}\mathbf{P}^{0} \notag \\
    & = \mathbf{P}^K-(\mathbf{P}^\prime)^K.
\end{align}
Next, note that both $\mathbf{P}^\prime$ and $\mathbf{P}$ are nonnegative matrices, and the same is true of their $k^{th}$ powers, $k\geq 2$. Thus,
\begin{align}
    & \left|\mathbf{e}_i^T\left[\mathbf{P}^K-(\mathbf{P}^\prime)^K\right ]\mathbf{e}_j\right| = \left|\sum_{k=1}^K  \mathbf{e}_i^T(\mathbf{P}^\prime)^{k-1}\left(\mathbf{P}-\mathbf{P}^\prime\right)\mathbf{P}^{K-k}\mathbf{e}_j\right| \notag \\
    & \leq \sum_{k=1}^K  \mathbf{e}_i^T(\mathbf{P}^\prime)^{k-1}\left|\mathbf{P}-\mathbf{P}^\prime\right|\mathbf{P}^{K-k}\mathbf{e}_j.
\end{align}
This completes the proof.
\end{proof}

\subsection{Proof of Lemma~\ref{lma:four_terms}}
\begin{lemma*}
For the edge unlearning scenario, and $\forall k\in [K]$, we have
\begin{align}
    \mathbf{1}^T{\Ppmat}^{k-1}|\mathbf{P} - {\Ppmat}|\mathbf{P}^{K-k}\mathbf{1} \leq 4.
\end{align}
\end{lemma*}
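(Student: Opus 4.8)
The plan is to reduce the expression to a bound on the row sums of $|\mathbf{P}-\mathbf{P}^\prime|$ by stripping off the two stochastic factors, $(\mathbf{P}^\prime)^{k-1}$ on the left and $\mathbf{P}^{K-k}$ on the right. The right factor is immediate: since $\mathbf{P}=\tilde{\mathbf{D}}^{-1}\tilde{\mathbf{A}}$ is right stochastic, $\mathbf{P}^{K-k}\mathbf{1}=\mathbf{1}$, so it suffices to bound $\mathbf{1}^T(\mathbf{P}^\prime)^{k-1}|\mathbf{P}-\mathbf{P}^\prime|\mathbf{1}$.

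For the left factor I would show that $\mathbf{1}^T(\mathbf{P}^\prime)^{k-1}$ is entrywise dominated by $\mathbf{1}^T$. For $k=1$ this is trivial. For $k\ge 2$, write $(\mathbf{P}^\prime)^{k-1}=(\tilde{\mathbf{D}}^\prime)^{-1}\bigl(\tilde{\mathbf{A}}^\prime(\tilde{\mathbf{D}}^\prime)^{-1}\bigr)^{k-1}$, i.e. the same ``conjugation shuffle'' used in the proof of Lemma~\ref{lma:Z_norm_bound_1}. Now $\tilde{\mathbf{A}}^\prime(\tilde{\mathbf{D}}^\prime)^{-1}$ is left stochastic (its columns sum to $1$), so $\mathbf{1}^T\bigl(\tilde{\mathbf{A}}^\prime(\tilde{\mathbf{D}}^\prime)^{-1}\bigr)^{k-1}=\mathbf{1}^T$; and every diagonal entry of $(\tilde{\mathbf{D}}^\prime)^{-1}$ is at most $1$ because each node keeps its self-loop (so all degrees remain $\ge 1$, in particular $\mathbf{P}^\prime$ is well defined — deleting one edge cannot isolate a node), hence $\mathbf{1}^T(\tilde{\mathbf{D}}^\prime)^{-1}\le\mathbf{1}^T$ componentwise. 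Multiplying on the right by the nonnegative matrix $\bigl(\tilde{\mathbf{A}}^\prime(\tilde{\mathbf{D}}^\prime)^{-1}\bigr)^{k-1}$ preserves this inequality, so $\mathbf{1}^T(\mathbf{P}^\prime)^{k-1}\le\mathbf{1}^T$. Since $|\mathbf{P}-\mathbf{P}^\prime|\mathbf{1}$ is a nonnegative vector, we conclude $\mathbf{1}^T(\mathbf{P}^\prime)^{k-1}|\mathbf{P}-\mathbf{P}^\prime|\mathbf{1}\le\mathbf{1}^T|\mathbf{P}-\mathbf{P}^\prime|\mathbf{1}$, uniformly in $k\in[K]$.

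It then remains to bound $\mathbf{1}^T|\mathbf{P}-\mathbf{P}^\prime|\mathbf{1}$, the sum of all entries of $|\mathbf{P}-\mathbf{P}^\prime|$. Removing the edge $(1,m)$ only alters rows/columns $1$ and $m$ of $\tilde{\mathbf{A}}$ and only the degrees $\tilde{\mathbf{D}}_{11},\tilde{\mathbf{D}}_{mm}$, so $\mathbf{P}$ and $\mathbf{P}^\prime$ coincide on every row $i\notin\{1,m\}$; thus $|\mathbf{P}-\mathbf{P}^\prime|$ has at most two nonzero rows. For each $r\in\{1,m\}$, the triangle inequality and right-stochasticity give $\sum_j|\mathbf{P}_{rj}-\mathbf{P}^\prime_{rj}|\le\sum_j\mathbf{P}_{rj}+\sum_j\mathbf{P}^\prime_{rj}=1+1=2$, so summing over the two rows yields $\mathbf{1}^T|\mathbf{P}-\mathbf{P}^\prime|\mathbf{1}\le 4$, which finishes the proof.

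I expect the only delicate point is the left-factor step: one must avoid a Cauchy–Schwartz or operator-norm estimate (which would reintroduce degree-dependent factors — exactly the pitfall noted in the remark following Lemma~\ref{lma:Z_norm_bound_1}) and instead exploit the left-stochasticity of $\tilde{\mathbf{A}}^\prime(\tilde{\mathbf{D}}^\prime)^{-1}$ together with the entrywise bound $(\tilde{\mathbf{D}}^\prime)^{-1}\le\mathbf{I}$. A slightly more careful computation of the two nonzero row sums in fact gives $2/\tilde{\mathbf{D}}_{11}+2/\tilde{\mathbf{D}}_{mm}\le 2$, so the constant $4$ is not tight, but the crude bound above already suffices for the theorem.
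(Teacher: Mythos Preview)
Your left-factor step contains a genuine error. The ``conjugation shuffle'' you wrote is off by one degree factor: expanding the product shows
\[
(\tilde{\mathbf{D}}^\prime)^{-1}\bigl(\tilde{\mathbf{A}}^\prime(\tilde{\mathbf{D}}^\prime)^{-1}\bigr)^{k-1}
=\bigl((\tilde{\mathbf{D}}^\prime)^{-1}\tilde{\mathbf{A}}^\prime\bigr)^{k-1}(\tilde{\mathbf{D}}^\prime)^{-1}
=(\mathbf{P}^\prime)^{k-1}(\tilde{\mathbf{D}}^\prime)^{-1},
\]
not $(\mathbf{P}^\prime)^{k-1}$. Consequently the conclusion $\mathbf{1}^T(\mathbf{P}^\prime)^{k-1}\le\mathbf{1}^T$ is false in general: $\mathbf{P}^\prime$ is right stochastic, not column substochastic. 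A star with center $1$ and leaves $2,\dots,n$ (self-loops included) already gives $\mathbf{1}^T\mathbf{P}^\prime\mathbf{e}_1\approx n/2$. What your computation actually proves is the correct statement $\mathbf{1}^T(\mathbf{P}^\prime)^{k-1}(\tilde{\mathbf{D}}^\prime)^{-1}\le\mathbf{1}^T$, i.e.\ $\mathbf{1}^T(\mathbf{P}^\prime)^{k-1}\mathbf{e}_l\le\tilde{\mathbf{D}}^\prime_{ll}$ for each $l$ --- exactly the degree-dependent blow-up you warned against.

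The fix is the very computation you relegated to a parenthetical: the two nonzero row sums of $|\mathbf{P}-\mathbf{P}^\prime|$ are precisely $2/\tilde{\mathbf{D}}_{11}$ and $2/\tilde{\mathbf{D}}_{mm}$, and these inverse-degree factors are what cancel the column-sum bound $\tilde{\mathbf{D}}^\prime_{ll}$ to give $\tilde{\mathbf{D}}^\prime_{11}\cdot 2/\tilde{\mathbf{D}}_{11}+\tilde{\mathbf{D}}^\prime_{mm}\cdot 2/\tilde{\mathbf{D}}_{mm}<4$. So your ``crude bound'' (row sums $\le 2$) does \emph{not} suffice; the sharper row sums are essential. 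This is also what the paper does, though organized differently: it writes each nonzero row of $|\mathbf{P}-\mathbf{P}^\prime|$ explicitly, pulls out a leading factor $(\tilde{\mathbf{D}}^\prime)^{-1}$, and then uses $\mathbf{1}^T(\mathbf{P}^\prime)^{k-1}(\tilde{\mathbf{D}}^\prime)^{-1}\mathbf{e}_l\le 1$ together with right-stochasticity of $\mathbf{P}$ to bound four resulting scalar terms by $1$ each.
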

\begin{proof}
Let us start by analyzing the matrix $|\mathbf{P} - {\Ppmat}| = |\Dtmat^{-1}\Atmat - {\Dtpmat}^{-1}\Atpmat|$. Note that all its rows are zeros except for the $1^{st}$ and $m^{th}$ row. The first row of the matrix equals
\begin{align}
 & \mathbf{e}_1^T|\mathbf{P} - {\Ppmat}| \notag \\
 & = \left[\left(\frac{1}{d_1-1}-\frac{1}{d_1}\right)\Atmat_{11},\dots,\left(\frac{1}{d_1-1}-\frac{1}{d_1}\right)\Atmat_{1(m-1)},\frac{1}{d_1},\left(\frac{1}{d_1-1}-\frac{1}{d_1}\right)\Atmat_{1(m+1)},\dots\right] \notag \\
 & = \left[\left(\frac{1}{d_1(d_1-1)}\right)\Atmat_{11},\dots,\left(\frac{1}{d_1(d_1-1)}\right)\Atmat_{1(m-1)},\frac{1}{d_1},\left(\frac{1}{d_1(d_1-1)}\right)\Atmat_{1(m+1)},\dots\right] \notag \\
 & = \left[\left(\frac{1}{d_1(d_1-1)}\right)\Atmat_{11},\dots,\left(\frac{1}{d_1(d_1-1)}\right)\Atmat_{1(m-1)},\frac{1}{d_1(d_1-1)},\left(\frac{1}{d_1(d_1-1)}\right)\Atmat_{1(m+1)},\dots\right] \notag \\
 & +\frac{d_1-2}{d_1(d_1-1)}\mathbf{e}_m^T = \mathbf{e}_1^T{\Dtpmat}^{-1}\Dtmat^{-1}\Atmat+\frac{d_1-2}{d_1(d_1-1)}\mathbf{e}_m^T,
\end{align}
where the last equality holds since $\Atmat_{1m}=1$. Similar arguments apply for the $m^{th}$ row, for which we have
\begin{align}
 \mathbf{e}_m^T|\mathbf{P} - {\Ppmat}| = \mathbf{e}_m^T{\Dtpmat}^{-1}\Dtmat^{-1}\Atmat+\frac{d_m-2}{d_m(d_m-1)}\mathbf{e}_1^T.
\end{align}
For a fixed $k\in[K]$, 
\begin{align}
     \mathbf{1}^T{\Ppmat}^{k-1}|\mathbf{P} - {\Ppmat}|\mathbf{P}^{K-k}\mathbf{1} &= \mathbf{1}^T{\Ppmat}^{k-1}\mathbf{e}_1\mathbf{e}_1^T{\Dtpmat}^{-1}\Dtmat^{-1}\Atmat\mathbf{P}^{K-k}\mathbf{1} \notag \\
    & +\mathbf{1}^T{\Ppmat}^{k-1}\frac{d_1-2}{d_1(d_1-1)}\mathbf{e}_1\mathbf{e}_m^T\mathbf{P}^{K-k}\mathbf{1} \notag \\
    & +\mathbf{1}^T{\Ppmat}^{k-1}\mathbf{e}_m\mathbf{e}_m^T{\Dtpmat}^{-1}\Dtmat^{-1}\Atmat\mathbf{P}^{K-k}\mathbf{1} \notag \\
    & +\mathbf{1}^T{\Ppmat}^{k-1}\frac{d_m-2}{d_m(d_m-1)}\mathbf{e}_m\mathbf{e}_1^T\mathbf{P}^{K-k}\mathbf{1}.
\end{align}
We analyze these four terms separately. For the first term, we have
\begin{align}
    & \mathbf{1}^T{\Ppmat}^{k-1}\mathbf{e}_1\mathbf{e}_1^T{\Dtpmat}^{-1}\Dtmat^{-1}\Atmat\mathbf{P}^{K-k}\mathbf{1} \notag\\
    & =\mathbf{1}^T{\Ppmat}^{k-1}{\Dtpmat}^{-1}\mathbf{e}_1\mathbf{e}_1^T\Dtmat^{-1}\Atmat\mathbf{P}^{K-k}\mathbf{1} \notag\\
    & =\mathbf{1}^T{\Ppmat}^{k-1}{\Dtpmat}^{-1}\mathbf{e}_1\mathbf{e}_1^T\mathbf{P}^{K-k+1}\mathbf{1} \label{eq:four_terms}
\end{align}
By the same argument as used in the proof for node feature unlearning, $\mathbf{1}^T{\Ppmat}^{k-1}{\Dtpmat}^{-1}\mathbf{e}_1=\mathbf{1}^T{\Dtpmat}^{-1}\mathbf{p}\leq1,$ for some probability vector $\mathbf{p}$. Also, $\mathbf{e}_1^T\mathbf{P}^{K-k+1}\mathbf{1}\leq 1,$ which holds due to the fact that $\mathbf{P}$ is a right-stochastic matrix. We have hence shown that the first term in~\eqref{eq:four_terms} is bounded by $1$. For the second term, note that $\frac{d_1-2}{d_1(d_1-1)}\leq \frac{1}{(d_1-1)}$. Hence,
\begin{align}
    & \mathbf{1}^T{\Ppmat}^{k-1}\frac{d_1-2}{d_1(d_1-1)}\mathbf{e}_1\mathbf{e}_m^T\mathbf{P}^{K-k}\mathbf{1} \notag \\
    & \leq \mathbf{1}^T{\Ppmat}^{k-1}\frac{1}{d_1-1}\mathbf{e}_1\mathbf{e}_m^T\mathbf{P}^{K-k}\mathbf{1} \notag \\
    & = \mathbf{1}^T{\Ppmat}^{k-1}{\Dtpmat}^{-1}\mathbf{e}_1\mathbf{e}_m^T\mathbf{P}^{K-k}\mathbf{1} \leq 1,
\end{align}
where the final inequality follows the same argument as the one used for bounding the first term. For the third and fourth term, the analysis is similar to these two cases and both terms can be shown to be bounded by $1$. Hence, we have
\begin{align}
    \mathbf{1}^T{\Ppmat}^{k-1}|\mathbf{P} - {\Ppmat}|\mathbf{P}^{K-k}\mathbf{1}\leq 4.
\end{align}
This completes the proof.
\end{proof}

\subsection{Proof of Lemma~\ref{lma:tech_1}}
\begin{lemma*}
For all $k\in [K]$ and $l\in [n]$,
\begin{align}
    \mathbf{1}^T(\mathbf{P}^\prime)^{k-1}(\Tilde{\mathbf{D}}^\prime)^{-1}\mathbf{e}_l \leq 1.
\end{align}
\end{lemma*}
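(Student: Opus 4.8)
The plan is to reduce the claim to the same (sub)stochastic--matrix manipulation already exploited in the proof of Theorem~\ref{thm:NFR_SGCcase} (where one rewrites $\mathbf{P}^K\Tilde{\mathbf{D}}^{-1} = \Tilde{\mathbf{D}}^{-1}(\Tilde{\mathbf{A}}\Tilde{\mathbf{D}}^{-1})^{K}$). First I would push the trailing inverse--degree factor to the left: since $\mathbf{P}^\prime = (\Tilde{\mathbf{D}}^\prime)^{-1}\Tilde{\mathbf{A}}^\prime$, collapsing the diagonal factors gives
\[
(\mathbf{P}^\prime)^{k-1}(\Tilde{\mathbf{D}}^\prime)^{-1} \;=\; (\Tilde{\mathbf{D}}^\prime)^{-1}\bigl(\Tilde{\mathbf{A}}^\prime(\Tilde{\mathbf{D}}^\prime)^{-1}\bigr)^{k-1},
\]
so that, writing $\mathbf{Q} \triangleq \Tilde{\mathbf{A}}^\prime(\Tilde{\mathbf{D}}^\prime)^{-1}$, the quantity to bound becomes $\mathbf{1}^T(\Tilde{\mathbf{D}}^\prime)^{-1}\mathbf{Q}^{k-1}\mathbf{e}_l$.

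Next I would show that $\mathbf{Q}$ is entrywise nonnegative with every column sum at most $1$, i.e. $\mathbf{1}^T\mathbf{Q}\le\mathbf{1}^T$ entrywise. For a column $j\neq m$ the sum is $(\sum_i\Tilde{\mathbf{A}}^\prime_{ij})/\Tilde{\mathbf{D}}^\prime_{jj}=1$, because $\Tilde{\mathbf{A}}^\prime$ is symmetric and its $j$th column sums to the $\Tilde{\mathbf{A}}^\prime$--degree $\Tilde{\mathbf{D}}^\prime_{jj}$; for column $m$ the numerator is $0$ (the $m$th row and column of $\Tilde{\mathbf{A}}^\prime$ vanish) while $\Tilde{\mathbf{D}}^\prime_{mm}=1$ by the convention adopted before Theorem~\ref{thm:NR_SGCcase}, so that column sum is $0$. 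A one--line induction on $k$ then gives $\mathbf{1}^T\mathbf{Q}^{k-1}\le\mathbf{1}^T$ entrywise for every $k\ge1$: the base case $k=1$ is trivial, and $\mathbf{1}^T\mathbf{Q}^{k-1}=(\mathbf{1}^T\mathbf{Q}^{k-2})\mathbf{Q}\le\mathbf{1}^T\mathbf{Q}\le\mathbf{1}^T$, using the induction hypothesis together with $\mathbf{Q}\ge0$.

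Finally I would put the pieces together: every diagonal entry of $(\Tilde{\mathbf{D}}^\prime)^{-1}$ lies in $(0,1]$ (every node, including $m$ by convention, has $\Tilde{\mathbf{A}}^\prime$--degree at least $1$ because of the self-loops), and $\mathbf{Q}^{k-1}\mathbf{e}_l\ge0$, hence
\[
\mathbf{1}^T(\Tilde{\mathbf{D}}^\prime)^{-1}\mathbf{Q}^{k-1}\mathbf{e}_l \;\le\; \mathbf{1}^T\mathbf{Q}^{k-1}\mathbf{e}_l \;\le\; \mathbf{1}^T\mathbf{e}_l \;=\; 1 .
\]
The proof is short; the one place I would be careful is the bookkeeping around the removed node $m$ --- one must check that replacing $\Tilde{\mathbf{D}}^\prime_{mm}$ by $1$ breaks neither the column--substochasticity of $\mathbf{Q}$ (column $m$ simply has sum $0$ instead of $1$, which only helps) nor the bound $(\Tilde{\mathbf{D}}^\prime)^{-1}_{mm}\le1$.
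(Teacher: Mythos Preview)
Your proof is correct and follows essentially the same route as the paper: both rewrite $(\mathbf{P}^\prime)^{k-1}(\Tilde{\mathbf{D}}^\prime)^{-1}=(\Tilde{\mathbf{D}}^\prime)^{-1}\bigl(\Tilde{\mathbf{A}}^\prime(\Tilde{\mathbf{D}}^\prime)^{-1}\bigr)^{k-1}$, use that $\Tilde{\mathbf{A}}^\prime(\Tilde{\mathbf{D}}^\prime)^{-1}$ is (column sub)stochastic, and finish with the fact that the diagonal entries of $(\Tilde{\mathbf{D}}^\prime)^{-1}$ are at most $1$. Your uniform treatment via substochasticity is a slightly cleaner packaging than the paper's case split on $l=m$ versus $l\neq m$, but the underlying argument is the same.
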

\begin{proof}
    For $k=1$, the claim is obviously true for all $l\in [n]$, as the largest entry in $\Tilde{\mathbf{D}}^{-1}$ is upper bounded by $1$. 
    For $k\geq 2$ and $l\neq m$ we have
\begin{align}
    & \mathbf{1}^T(\mathbf{P}^\prime)^{k-1}(\Tilde{\mathbf{D}}^\prime)^{-1}\mathbf{e}_l = \mathbf{1}^T((\Tilde{\mathbf{D}}^\prime)^{-1}\Tilde{\mathbf{A}}^\prime)^{k-1}(\Tilde{\mathbf{D}}^\prime)^{-1}\mathbf{e}_l  \notag\\
    & = \mathbf{1}^T(\Tilde{\mathbf{D}}^\prime)^{-1}(\Tilde{\mathbf{A}}^\prime(\Tilde{\mathbf{D}}^\prime)^{-1})^{k-1}\mathbf{e}_l \notag \\
    & \stackrel{(a)}{=} \mathbf{1}^T(\Tilde{\mathbf{D}}^\prime)^{-1}\mathbf{p}\leq 1.
\end{align}
In (a), $\mathbf{p}$ stands for a probability vector and the result follows since $\Tilde{\mathbf{A}}^\prime(\Tilde{\mathbf{D}}^\prime)^{-1}$ is a left-stochastic matrix if one ignores the node $m$. For $l=m$, it is easy to see that $\Tilde{\mathbf{A}}^\prime\Tilde{\mathbf{D}}^\prime\mathbf{e}_m = 0$ by the fact that the $m^{th}$ row and column of $\Tilde{\mathbf{A}}^\prime$ are all-zeros. This completes the proof.
\end{proof}

\subsection{Proof of Lemma~\ref{lma:tech_2}}

\begin{lemma*}
For node unlearning, and $\forall k\in [K]$, $\sum_{l=1}^n\mathbf{e}_l^T\Tilde{\mathbf{D}}^\prime\left|\mathbf{P}-\mathbf{P}^\prime\right|\mathbf{P}^{K-k}\mathbf{1}\leq 2\Tilde{\mathbf{D}}_{mm}-1.$
\end{lemma*}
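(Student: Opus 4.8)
The plan is to first collapse the left-hand side using right stochasticity of $\mathbf{P}$, then exploit the sparse structure of $\left|\mathbf{P}-\mathbf{P}^\prime\right|$ provided by Proposition~\ref{prop:Pdiff_abs_stucture}, and finally bound the weighted row sums of $\left|\mathbf{P}-\mathbf{P}^\prime\right|$ one nonzero row at a time.

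Since $\mathbf{P}=\Tilde{\mathbf{D}}^{-1}\Tilde{\mathbf{A}}$ satisfies $\mathbf{P}\mathbf{1}=\mathbf{1}$, we have $\mathbf{P}^{K-k}\mathbf{1}=\mathbf{1}$ for all $k\in[K]$, so the quantity to be bounded equals $\sum_{l=1}^n \Tilde{\mathbf{D}}^\prime_{ll}\,\mathbf{e}_l^T\left|\mathbf{P}-\mathbf{P}^\prime\right|\mathbf{1}$, a sum of weighted row sums of the entry-wise nonnegative matrix $\left|\mathbf{P}-\mathbf{P}^\prime\right|$. It therefore suffices to control $\mathbf{e}_l^T\left|\mathbf{P}-\mathbf{P}^\prime\right|\mathbf{1}$ for each $l$ and multiply by $\Tilde{\mathbf{D}}^\prime_{ll}$.

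Next I would determine which rows of $\left|\mathbf{P}-\mathbf{P}^\prime\right|$ are nonzero. If $i\neq m$ and $i$ is not adjacent to $m$, then deleting node $m$ changes neither the $i$-th row of $\Tilde{\mathbf{A}}$ nor the degree $\Tilde{\mathbf{D}}_{ii}$, so $\mathbf{e}_i^T\mathbf{P}=\mathbf{e}_i^T\mathbf{P}^\prime$ and that row vanishes. The only surviving rows are row $m$ and the rows indexed by the $\Tilde{\mathbf{D}}_{mm}-1$ neighbors of $m$. For row $m$, Proposition~\ref{prop:Pdiff_abs_stucture} gives $\mathbf{e}_m^T\left|\mathbf{P}-\mathbf{P}^\prime\right|=\mathbf{e}_m^T\mathbf{P}$, whose entries sum to $1$; together with $\Tilde{\mathbf{D}}^\prime_{mm}=1$ this row contributes exactly $1$. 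For a neighbor $i$ of $m$, write $d_i=\Tilde{\mathbf{D}}_{ii}$ and note $\Tilde{\mathbf{D}}^\prime_{ii}=d_i-1$ (the self-loop-free degree loses one unit). The column-$m$ entry is $\mathbf{e}_i^T\left|\mathbf{P}-\mathbf{P}^\prime\right|\mathbf{e}_m=\mathbf{e}_i^T\mathbf{P}\mathbf{e}_m=1/d_i$ by the proposition, while for $j\neq m$ the entries equal the signed difference $\mathbf{P}^\prime_{ij}-\mathbf{P}_{ij}=\Tilde{\mathbf{A}}_{ij}\bigl(\tfrac{1}{d_i-1}-\tfrac{1}{d_i}\bigr)$ and, using $\sum_{j\neq m}\Tilde{\mathbf{A}}_{ij}=d_i-1$, they sum to $\tfrac{d_i-1}{d_i(d_i-1)}=1/d_i$. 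Hence $\mathbf{e}_i^T\left|\mathbf{P}-\mathbf{P}^\prime\right|\mathbf{1}=2/d_i$ and the contribution of row $i$ is $(d_i-1)\cdot\tfrac{2}{d_i}\le 2$.

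Summing the $\Tilde{\mathbf{D}}_{mm}-1$ neighbor contributions together with the row-$m$ contribution then yields $1+2(\Tilde{\mathbf{D}}_{mm}-1)=2\Tilde{\mathbf{D}}_{mm}-1$, which is the claimed bound. The main obstacle is the neighbor-row computation: one must invoke Proposition~\ref{prop:Pdiff_abs_stucture} to replace $\left|\mathbf{P}-\mathbf{P}^\prime\right|$ by a signed difference in the columns $j\neq m$ (the entries genuinely increase there, since the effective degree drops from $d_i$ to $d_i-1$), and then notice that multiplying the resulting row sum $2/d_i$ by the reduced weight $d_i-1$ gives a quantity uniformly bounded by $2$. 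This cancellation between the degree weight $\Tilde{\mathbf{D}}^\prime_{ii}$ and the $1/d_i$ factors in $\left|\mathbf{P}-\mathbf{P}^\prime\right|$ is exactly what keeps the bound from scaling with the degrees of the neighbors of $m$.
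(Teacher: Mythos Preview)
Your proof is correct and follows essentially the same approach as the paper: both identify that only row $m$ and the rows indexed by neighbors of $m$ survive, invoke Proposition~\ref{prop:Pdiff_abs_stucture} to compute those rows, and bound each neighbor's contribution by $2$ and row $m$'s contribution by $1$. Your version is slightly more streamlined because you collapse $\mathbf{P}^{K-k}\mathbf{1}=\mathbf{1}$ at the outset, whereas the paper carries $\mathbf{P}^{K-k}\mathbf{1}$ through and only uses right stochasticity at the end; this is a cosmetic difference, not a substantive one.
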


\begin{proof}
First, note that
\begin{align}
    & \sum_{l=1}^n\mathbf{e}_l^T\Tilde{\mathbf{D}}^\prime\left|\mathbf{P}-\mathbf{P}^\prime\right|  =\sum_{l=1}^n\mathbf{e}_l^T\Tilde{\mathbf{D}}^\prime\left|\mathbf{P}-\mathbf{P}^\prime\right|\sum_{r=1}^n\mathbf{e}_r\mathbf{e}_r^T = \sum_{r=1}^n\sum_{l=1}^n\mathbf{e}_l^T\Tilde{\mathbf{D}}^\prime\left|\mathbf{P}-\mathbf{P}^\prime\right|\mathbf{e}_r\mathbf{e}_r^T.
\end{align}
Then for $i,j\neq m$, by Proposition~\ref{prop:Pdiff_abs_stucture}, we have
\begin{align}
    & \mathbf{e}_l^T\Tilde{\mathbf{D}}^\prime\left|\mathbf{P}-\mathbf{P}^\prime\right|\mathbf{e}_r\mathbf{e}_r^T \notag \\
    & \stackrel{(a)}{=} \mathbf{e}_l^T\Tilde{\mathbf{D}}^\prime\left(\mathbf{P}^\prime-\mathbf{P}\right)\mathbf{e}_r\mathbf{e}_r^T \notag \\
    & = \mathbf{e}_l^T\left(\Tilde{\mathbf{A}}^\prime-\Tilde{\mathbf{D}}^\prime\Tilde{\mathbf{D}}^{-1}\Tilde{\mathbf{A}}\right)\mathbf{e}_r\mathbf{e}_r^T \notag \\
    & = \left(\Tilde{\mathbf{A}}^\prime_{lr} - \frac{\Tilde{\mathbf{D}}^\prime_{ll}}{\Tilde{\mathbf{D}}_{ll}}\Tilde{\mathbf{A}}_{lr}\right)\mathbf{e}_r^T \notag \\
    & \stackrel{(b)}{=} \left(\Tilde{\mathbf{A}}_{lr} - \frac{\Tilde{\mathbf{D}}^\prime_{ll}}{\Tilde{\mathbf{D}}_{ll}}\Tilde{\mathbf{A}}_{lr}\right)\mathbf{e}_r^T
\end{align}
We used Proposition~\ref{prop:Pdiff_abs_stucture} in (a) since $\mathbf{e}_l^T\Tilde{\mathbf{D}}^\prime = \Tilde{\mathbf{D}}^\prime_{ll}\mathbf{e}_l^T$. The equality (b) is due to the fact that for $i,j\neq m$, $\Tilde{\mathbf{A}}^\prime_{lr}=\Tilde{\mathbf{A}}_{lr}$. 
Recall that $\Tilde{\mathbf{A}}^\prime$ and $\Tilde{\mathbf{A}}$ only differ in the $m^{th}$ row and column.

We consider next the only two possible scenarios, (1) $l$ is a neighbor of $m$; (2) $l$ is not a neighbor of $m$. For (1), we know that $\Tilde{\mathbf{D}}^\prime_{ll}=\Tilde{\mathbf{D}}_{ll}-1\geq 1$. This leads to
\begin{align}
    \Tilde{\mathbf{A}}_{lr} - \frac{\Tilde{\mathbf{D}}^\prime_{ll}}{\Tilde{\mathbf{D}}_{ll}}\Tilde{\mathbf{A}}_{lr} = \Tilde{\mathbf{A}}_{lr}\left(1 - \frac{\Tilde{\mathbf{D}}^\prime_{ll}}{\Tilde{\mathbf{D}}_{ll}}\right) = \Tilde{\mathbf{A}}_{lr}\left(1 - \frac{\Tilde{\mathbf{D}}_{ll}-1}{\Tilde{\mathbf{D}}_{ll}}\right) = \frac{\Tilde{\mathbf{A}}_{lr}}{\Tilde{\mathbf{D}}_{ll}}.
\end{align}
For (2), we know that $\Tilde{\mathbf{D}}^\prime_{rr}=\Tilde{\mathbf{D}}_{rr}$. Thus, $\Tilde{\mathbf{A}}_{lr} - \frac{\Tilde{\mathbf{D}}^\prime_{ll}}{\Tilde{\mathbf{D}}_{ll}}\Tilde{\mathbf{A}}_{lr}=0$.

Next, we consider the case $l\neq m,r=m$. Again, by Proposition~\ref{prop:Pdiff_abs_stucture}, we have
\begin{align}
    & \mathbf{e}_l^T\Tilde{\mathbf{D}}^\prime\left|\mathbf{P}-\mathbf{P}^\prime\right|\mathbf{e}_m\mathbf{e}_m^T = \mathbf{e}_l^T\Tilde{\mathbf{D}}^\prime\mathbf{P}\mathbf{e}_m\mathbf{e}_m^T = \frac{\Tilde{\mathbf{D}}^\prime_{ll}}{\Tilde{\mathbf{D}}_{ll}}\Tilde{\mathbf{A}}_{lm}\mathbf{e}_m^T.
\end{align}

Now, for case (1), we have $\Tilde{\mathbf{A}}_{lm}=1$ and $\Tilde{\mathbf{D}}^\prime_{ll}=\Tilde{\mathbf{D}}_{ll}-1\geq 1$. This leads to 
\begin{align}
    & \mathbf{e}_l^T\Tilde{\mathbf{D}}^\prime\left|\mathbf{P}-\mathbf{P}^\prime\right|\mathbf{e}_m\mathbf{e}_m^T = \frac{\Tilde{\mathbf{D}}^\prime_{ll}}{\Tilde{\mathbf{D}}_{ll}}\Tilde{\mathbf{A}}_{lm}\mathbf{e}_m^T= \frac{\Tilde{\mathbf{D}}_{ll}-1}{\Tilde{\mathbf{D}}_{ll}}\mathbf{e}_m^T.
\end{align}
For case (2), we clearly have $\mathbf{e}_l^T\Tilde{\mathbf{D}}^\prime\left|\mathbf{P}-\mathbf{P}^\prime\right|\mathbf{e}_m\mathbf{e}_m^T=0$ as $\Tilde{\mathbf{A}}_{lm}=0$.

Hence, for each $j\neq m$ and under the setting in case (1), $\mathbf{e}_l^T\Tilde{\mathbf{D}}^\prime\left|\mathbf{P}-\mathbf{P}^\prime\right|$ equals the row vector 
\begin{align}
    \left[\frac{\Tilde{\mathbf{A}}_{l1}}{\Tilde{\mathbf{D}}_{ll}},\frac{\Tilde{\mathbf{A}}_{l2}}{\Tilde{\mathbf{D}}_{ll}},\cdots,\frac{\Tilde{\mathbf{A}}_{l(m-1)}}{\Tilde{\mathbf{D}}_{ll}},0,\frac{\Tilde{\mathbf{A}}_{l(m+1)}}{\Tilde{\mathbf{D}}_{ll}},\cdots \right] + \left[0,\cdots,0,\frac{\Tilde{\mathbf{D}}_{ll}-1}{\Tilde{\mathbf{D}}_{ll}},0,\cdots \right],
\end{align}
where the $m^{th}$ entry of the first row vector equals $0$ and the second row vector is all-zeros except for the $m^{th}$ entry. Note that the first row vector times $\frac{\Tilde{\mathbf{D}}_{ll}}{\Tilde{\mathbf{D}}_{ll}-1}>1$ is a probability vector. Hence, by the property of $\mathbf{P}^{K-k}$ being a right-stochastic matrix, we have
\begin{align}
    \left[\frac{\Tilde{\mathbf{A}}_{l1}}{\Tilde{\mathbf{D}}_{ll}},\frac{\Tilde{\mathbf{A}}_{l2}}{\Tilde{\mathbf{D}}_{ll}},\cdots,\frac{\Tilde{\mathbf{A}}_{l(m-1)}}{\Tilde{\mathbf{D}}_{ll}},0,\frac{\Tilde{\mathbf{A}}_{l(m+1)}}{\Tilde{\mathbf{D}}_{ll}},\cdots \right]\mathbf{P}^{K-k}\mathbf{1} \leq 1.
\end{align}
Since $\frac{\Tilde{\mathbf{D}}_{ll}-1}{\Tilde{\mathbf{D}}_{ll}}<1$, we also have 
\begin{align}
    \left[0,\cdots,0,\frac{\Tilde{\mathbf{D}}_{jj}-1}{\Tilde{\mathbf{D}}_{jj}},0,\cdots \right]\mathbf{P}^{K-k}\mathbf{1} \leq 1.
\end{align}
Together, this shows that for each $j\neq m$ and for the case (1), one has
\begin{align}
    \mathbf{e}_l^T\Tilde{\mathbf{D}}^\prime\left|\mathbf{P}-\mathbf{P}^\prime\right|\mathbf{P}^{K-k}\mathbf{1}\leq 2.
\end{align}
For case (2), note that $\mathbf{e}_l^T\Tilde{\mathbf{D}}^\prime\left|\mathbf{P}-\mathbf{P}^\prime\right|$ is an all-zero row vector. Note also that, excluding self-loops, there are at most $\Tilde{\mathbf{D}}_{mm}-1$ neighbors $l$ of $m$ (case (1)). Thus,
\begin{align}
    \sum_{l\neq m}\mathbf{e}_l^T\Tilde{\mathbf{D}}^\prime\left|\mathbf{P}-\mathbf{P}^\prime\right|\mathbf{P}^{K-k}\mathbf{1}\leq 2\Tilde{\mathbf{D}}_{mm}-2.
\end{align}

To conclude the proof, we analyze the term $l=m$. For any $i\in [n]$, by Proposition~\ref{prop:Pdiff_abs_stucture} we have
\begin{align}
    & \mathbf{e}_m^T\Tilde{\mathbf{D}}^\prime\left|\mathbf{P}-\mathbf{P}^\prime\right|\mathbf{e}_r\mathbf{e}_r^T \notag \\
    & = \mathbf{e}_m^T\Tilde{\mathbf{D}}^\prime\mathbf{P}\mathbf{e}_r\mathbf{e}_r^T = \frac{\Tilde{\mathbf{D}}^\prime_{mm}}{\Tilde{\mathbf{D}}_{mm}}\Tilde{\mathbf{A}}_{mr}\mathbf{e}_r^T \stackrel{(a)}{=} \frac{\Tilde{\mathbf{A}}_{mr}}{\Tilde{\mathbf{D}}_{mm}}\mathbf{e}_r^T = \mathbf{e}_m^T\Tilde{\mathbf{D}}^{-1}\Tilde{\mathbf{A}}\mathbf{e}_r\mathbf{e}_r^T = \mathbf{e}_m^T\mathbf{P}\mathbf{e}_r\mathbf{e}_r^T
\end{align}
where (a) holds by definition, and since $\Tilde{\mathbf{D}}^\prime_{mm}=1$. Thus,
\begin{align}
    \mathbf{e}_m^T\Tilde{\mathbf{D}}^\prime\left|\mathbf{P}-\mathbf{P}^\prime\right|\mathbf{P}^{K-k}\mathbf{1} = \mathbf{e}_m^T\mathbf{P}\mathbf{P}^{K-k}\mathbf{1} = \mathbf{p}^T\mathbf{1}=1,
\end{align}
for some probability vector $\mathbf{p}$. We have hence shown that for any $k\in[K]$,
\begin{align}
    & \sum_{j=1}^{n}\mathbf{e}_j^T\Tilde{\mathbf{D}}^\prime\left|\mathbf{P}-\mathbf{P}^\prime\right|\mathbf{P}^{K-k}\mathbf{1}\leq 2\Tilde{\mathbf{D}}_{mm}-2+1 = 2\Tilde{\mathbf{D}}_{mm}-1.
\end{align}
This completes the proof.
\end{proof}

\subsection{Proof of Lemma~\ref{lma:Z_norm_bound_3}}
\begin{lemma*}
    Assume that $\|\mathbf{e}_i^T\mathbf{S}\| \leq 1,\;\forall i\neq m$ and that $\mathbf{e}_m^T\mathbf{S} = \mathbf{0}^T$. Then $\forall i\in [n],\; K\geq 0$, we have $\|\mathbf{e}_i^T(\mathbf{P}^\prime)^K\mathbf{S}\|\leq 1$, where $\mathbf{P}=\Tilde{\mathbf{D}}^{-1}\Tilde{\mathbf{A}}$ and $\mathbf{P}^\prime = (\Tilde{\mathbf{D}}^\prime)^{-1}\Tilde{\mathbf{A}}^\prime$.
\end{lemma*}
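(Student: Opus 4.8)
The plan is to reproduce the induction used for Lemma~\ref{lma:Z_norm_bound_1}, modified only to account for the fact that $\mathbf{P}^\prime=(\Tilde{\mathbf{D}}^\prime)^{-1}\Tilde{\mathbf{A}}^\prime$ is the propagation matrix of the graph with node $m$ deleted. Writing $\mathbf{Z}^{(k)}=(\mathbf{P}^\prime)^k\mathbf{S}$, I would induct on $K$ and keep separate track of row $m$ and of the rows $i\neq m$, since $\mathbf{S}$ and $\mathbf{P}^\prime$ behave differently on row $m$.

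First the row $i=m$. Because the $m^{th}$ row and column of $\Tilde{\mathbf{A}}^\prime$ are all zeros and $\Tilde{\mathbf{D}}^\prime_{mm}=1$, the $m^{th}$ row of $\mathbf{P}^\prime$ is identically zero; hence for every $K\geq 1$ we have $\mathbf{e}_m^T\mathbf{Z}^{(K)}=(\mathbf{e}_m^T\mathbf{P}^\prime)(\mathbf{P}^\prime)^{K-1}\mathbf{S}=\mathbf{0}^T$, and for $K=0$ we have $\mathbf{e}_m^T\mathbf{Z}^{(0)}=\mathbf{e}_m^T\mathbf{S}=\mathbf{0}^T$ by hypothesis; in both cases the norm is $0\leq 1$. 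For the rows $i\neq m$ I would run the induction: the base case $K=0$ is exactly the hypothesis $\|\mathbf{e}_i^T\mathbf{S}\|\leq 1$. For the inductive step, note that $\Tilde{\mathbf{D}}^\prime_{ii}\geq 1$ thanks to the retained self-loop, and that the $m^{th}$ column of $\Tilde{\mathbf{A}}^\prime$ is zero, so $\mathbf{e}_i^T\mathbf{P}^\prime=\frac{1}{\Tilde{\mathbf{D}}^\prime_{ii}}\sum_{j:\Tilde{\mathbf{A}}^\prime_{ij}=1}\mathbf{e}_j^T$ is an unweighted average of exactly $\Tilde{\mathbf{D}}^\prime_{ii}$ rows indexed by $j\neq m$. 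Applying the triangle inequality and then the induction hypothesis (legitimate since all these $j\neq m$) gives $\|\mathbf{e}_i^T\mathbf{Z}^{(K)}\|=\|\mathbf{e}_i^T\mathbf{P}^\prime\mathbf{Z}^{(K-1)}\|\leq \frac{1}{\Tilde{\mathbf{D}}^\prime_{ii}}\sum_{j:\Tilde{\mathbf{A}}^\prime_{ij}=1}\|\mathbf{e}_j^T\mathbf{Z}^{(K-1)}\|\leq 1$, closing the induction. Combining the two cases yields the claim for all $i\in[n]$.

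I do not expect a genuine obstacle here: the lemma is the node-unlearning analogue of Lemma~\ref{lma:Z_norm_bound_1} and the computation is essentially the same. The only points that need a little care are the bookkeeping around the deleted node — that the convention $\Tilde{\mathbf{D}}^\prime_{mm}=1$ together with the zeroed $m^{th}$ row and column of $\Tilde{\mathbf{A}}^\prime$ makes $\mathbf{P}^\prime$ sub-stochastic with a genuinely vanishing $m^{th}$ row, and that deleting column $m$ ensures the neighbours appearing in the averaging step all avoid $m$ so the induction hypothesis is legitimately invoked. As in the remark after Lemma~\ref{lma:Z_norm_bound_1}, it is crucial that we use the asymmetric normalization $\mathbf{P}^\prime=(\Tilde{\mathbf{D}}^\prime)^{-1}\Tilde{\mathbf{A}}^\prime$: any other degree normalization would turn the clean per-row averaging bound into one carrying a spurious $\sqrt{\Tilde{\mathbf{D}}_{ii}}$-type factor that grows with $K$.
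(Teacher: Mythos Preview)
Your proposal is correct and follows essentially the same induction as the paper's proof: separate treatment of row $m$ (which is identically zero under $\mathbf{P}^\prime$) and the averaging bound for rows $i\neq m$. The only cosmetic difference is that you dispose of the $i=m$ case once and for all at the outset and then run the induction over the remaining rows, whereas the paper carries all $i\in[n]$ through the induction and checks $i=m$ inside each step; both are valid and equivalent in content.
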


\begin{proof}
The proof is similar to the proof of Lemma~\ref{lma:Z_norm_bound_1}, and based on induction. The base case $k=0$ is obviously true by assumption. Now, assume that the claim is true for $k=K-1$ and let $\mathbf{Z}^{(K-1)} = (\mathbf{P}^\prime)^{(K-1)}\mathbf{S}$. 
Then, $\forall i\neq m$,
\begin{align}
    & \|\mathbf{e}_i^T(\mathbf{P}^\prime)^K\mathbf{S}\| = \|\mathbf{e}_i^T\mathbf{P}^\prime\mathbf{Z}^{(K-1)}\| = \|\frac{1}{\Tilde{\mathbf{D}_{ii}}^\prime}\sum_{j:\Tilde{\mathbf{A}}_{ij}^\prime=1}\mathbf{e}_j^T\mathbf{Z}^{(K-1)}\| \notag \\
    & \leq \frac{1}{\Tilde{\mathbf{D}_{ii}}^\prime}\sum_{j:\Tilde{\mathbf{A}}_{ij}^\prime=1}\|\mathbf{e}_j^T\mathbf{Z}^{(K-1)}\| \notag \\
    & \stackrel{(a)}{\leq}\frac{1}{\Tilde{\mathbf{D}_{ii}}^\prime}\sum_{j:\Tilde{\mathbf{A}}^\prime_{ij}=1}1 \notag \\
    & \leq \frac{1}{\Tilde{\mathbf{D}_{ii}}^\prime}\sum_{j:\Tilde{\mathbf{A}}_{ij}^\prime=1}1 = \frac{1}{\Tilde{\mathbf{D}_{ii}}^\prime}\Tilde{\mathbf{D}_{ii}}^\prime =1.
\end{align}
Here, (a) is due to our hypothesis for $k=K-1$. For $i=m$, note that $\Tilde{\mathbf{A}}_{mj}^\prime = 0,\;\forall j\in [n]$. Thus, $\|\mathbf{e}_m^T(\mathbf{P}^\prime)^K\mathbf{S} \| = 0 \leq 1$. This completes the proof.
\end{proof}

\subsection{Proof of Lemma~\ref{lma:Z_norm_bound_2}}
\begin{lemma*}
Assume that $\|\mathbf{e}_i^T\mathbf{S}\| \leq 1,\;\forall i\in [n]$. Then, $\forall i\in [n],\; K\geq 0$, we have $\|\frac{1}{\sqrt{K+1}}\mathbf{e}_i^T\left[\mathbf{S},\mathbf{P}\mathbf{S},\mathbf{P}^2\mathbf{S},\cdots,\mathbf{P}^K\mathbf{S}\right]\|\leq 1$, where $\mathbf{P}=\Tilde{\mathbf{D}}^{-1}\Tilde{\mathbf{A}}$.
\end{lemma*}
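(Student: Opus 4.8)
The plan is to reduce this claim directly to Lemma~\ref{lma:Z_norm_bound_1} by exploiting the block structure of the concatenated matrix. First I would observe that for any fixed $i$, the row vector $\mathbf{e}_i^T\left[\mathbf{S},\mathbf{P}\mathbf{S},\cdots,\mathbf{P}^K\mathbf{S}\right]$ is precisely the horizontal concatenation of the $K+1$ row vectors $\mathbf{e}_i^T\mathbf{P}^k\mathbf{S}$ for $k=0,1,\ldots,K$. Since the $\ell_2$ norm of a concatenation of vectors equals the square root of the sum of the squares of their individual norms, we obtain
\begin{align}
    \left\|\mathbf{e}_i^T\left[\mathbf{S},\mathbf{P}\mathbf{S},\cdots,\mathbf{P}^K\mathbf{S}\right]\right\|^2 = \sum_{k=0}^{K}\left\|\mathbf{e}_i^T\mathbf{P}^k\mathbf{S}\right\|^2.
\end{align}

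Next I would invoke Lemma~\ref{lma:Z_norm_bound_1}: since $\|\mathbf{e}_i^T\mathbf{S}\|\leq 1$ for all $i\in[n]$ by hypothesis, that lemma guarantees $\|\mathbf{e}_i^T\mathbf{P}^k\mathbf{S}\|\leq 1$ for every $k\geq 0$ and every $i\in[n]$. Hence each of the $K+1$ summands above is at most $1$, so the whole sum is at most $K+1$. Taking square roots and dividing by $\sqrt{K+1}$ yields
\begin{align}
    \left\|\frac{1}{\sqrt{K+1}}\mathbf{e}_i^T\left[\mathbf{S},\mathbf{P}\mathbf{S},\cdots,\mathbf{P}^K\mathbf{S}\right]\right\| \leq \frac{\sqrt{K+1}}{\sqrt{K+1}} = 1,
\end{align}
which is the desired bound.

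There is essentially no analytical obstacle here; the only point requiring care is the bookkeeping of the block decomposition of the norm and the verification that the hypothesis needed for Lemma~\ref{lma:Z_norm_bound_1} (the uniform row-norm bound on $\mathbf{S}$) is exactly the hypothesis assumed in the present lemma, so that the earlier result applies verbatim to each power $\mathbf{P}^k$. The normalization factor $1/\sqrt{K+1}$ is tight in the worst case where all $K+1$ blocks simultaneously attain norm $1$ (for instance when every row of $\mathbf{S}$ is the same unit vector, a property preserved by the right-stochastic matrix $\mathbf{P}$), so no smaller constant would work in general.
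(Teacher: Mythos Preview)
Your proof is correct and follows essentially the same approach as the paper: decompose the squared norm of the concatenated row into the sum $\sum_{k=0}^{K}\|\mathbf{e}_i^T\mathbf{P}^k\mathbf{S}\|^2$, invoke Lemma~\ref{lma:Z_norm_bound_1} to bound each summand by $1$, and divide by $K+1$. Your additional remark on the tightness of the $1/\sqrt{K+1}$ factor is a nice observation that the paper does not make explicit.
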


\begin{proof}
By Lemma~\ref{lma:Z_norm_bound_1}, we have $\|\mathbf{e}_i^T\mathbf{P}^k\mathbf{S}\|\leq 1,\;\forall k\in[K]$. Thus,
\begin{align}
    & \|\frac{1}{\sqrt{K+1}}\mathbf{e}_i^T\left[\mathbf{S},\mathbf{P}\mathbf{S},\mathbf{P}^2\mathbf{S},\cdots,\mathbf{P}^K\mathbf{S}\right]\|^2 = \frac{1}{K+1}\left(\sum_{k=0}^{K}\|\mathbf{e}_i^T\mathbf{P}^k\mathbf{S}\|^2 \right) \leq 1,
\end{align}
which complete the proof.
\end{proof}
\textit{Remark. }Using the normalization $\frac{1}{K+1}$ also leads to a norm bounded by $1$. Hence, the norm of each row of $\mathbf{Z}$ is bounded by $1$. We need the normalization $\frac{1}{K+1}$ instead of $\frac{1}{\sqrt{K+1}}$ to accommodate another claim in the proof.

\subsection{Proof of Proposition~\ref{prop:Pdiff_abs_stucture}}
\begin{proposition*}
We have $\mathbf{e}_i^T\left|\mathbf{P}-\mathbf{P}^\prime\right|\mathbf{e}_j = \mathbf{e}_i^T\left(\mathbf{P}^\prime-\mathbf{P}\right)\mathbf{e}_j,$ $\forall i,j\neq m$. For $i=m$ or $j=m$, $\mathbf{e}_i^T\left|\mathbf{P}-\mathbf{P}^\prime\right|\mathbf{e}_j = \mathbf{e}_i^T\mathbf{P}\mathbf{e}_j.$
\end{proposition*}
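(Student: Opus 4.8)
The plan is to prove the proposition by a direct entry-wise case analysis, using two structural facts specific to node unlearning: first, $\Tilde{\mathbf{A}}^\prime$ is obtained from $\Tilde{\mathbf{A}}$ by replacing its $m^{th}$ row and column with zeros (so $\Tilde{\mathbf{A}}^\prime_{ij}=\Tilde{\mathbf{A}}_{ij}$ whenever $i,j\neq m$, and $\Tilde{\mathbf{A}}^\prime_{ij}=0$ whenever $i=m$ or $j=m$); second, the resulting degrees satisfy $\Tilde{\mathbf{D}}^\prime_{ll}\leq \Tilde{\mathbf{D}}_{ll}$ for every $l\neq m$, with $\Tilde{\mathbf{D}}^\prime_{ll}=\Tilde{\mathbf{D}}_{ll}-1\geq 1$ when $l$ is a neighbor of $m$ and $\Tilde{\mathbf{D}}^\prime_{ll}=\Tilde{\mathbf{D}}_{ll}$ otherwise, while $\Tilde{\mathbf{D}}^\prime_{mm}=1$ by the stated convention. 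In particular every diagonal entry of $\Tilde{\mathbf{D}}^\prime$ is strictly positive, so the entry formulas $\mathbf{e}_i^T\mathbf{P}\mathbf{e}_j=\Tilde{\mathbf{A}}_{ij}/\Tilde{\mathbf{D}}_{ii}$ and $\mathbf{e}_i^T\mathbf{P}^\prime\mathbf{e}_j=\Tilde{\mathbf{A}}^\prime_{ij}/\Tilde{\mathbf{D}}^\prime_{ii}$ are well defined, and the claim reduces to comparing two nonnegative scalars.

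First I would handle the case $i,j\neq m$. Here $\Tilde{\mathbf{A}}^\prime_{ij}=\Tilde{\mathbf{A}}_{ij}$ and $1\leq \Tilde{\mathbf{D}}^\prime_{ii}\leq \Tilde{\mathbf{D}}_{ii}$, hence $\mathbf{e}_i^T\mathbf{P}^\prime\mathbf{e}_j=\Tilde{\mathbf{A}}_{ij}/\Tilde{\mathbf{D}}^\prime_{ii}\geq \Tilde{\mathbf{A}}_{ij}/\Tilde{\mathbf{D}}_{ii}=\mathbf{e}_i^T\mathbf{P}\mathbf{e}_j$. Therefore $\mathbf{e}_i^T(\mathbf{P}-\mathbf{P}^\prime)\mathbf{e}_j\leq 0$, so the absolute value flips the sign and $\mathbf{e}_i^T|\mathbf{P}-\mathbf{P}^\prime|\mathbf{e}_j=\mathbf{e}_i^T(\mathbf{P}^\prime-\mathbf{P})\mathbf{e}_j$, which is the first assertion (and it holds trivially when $\Tilde{\mathbf{A}}_{ij}=0$, since then both entries vanish).

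Next I would handle $i=m$ or $j=m$. If $i=m$, then the $m^{th}$ row of $\Tilde{\mathbf{A}}^\prime$ is zero and $\Tilde{\mathbf{D}}^\prime_{mm}=1$, so $\mathbf{e}_m^T\mathbf{P}^\prime=\mathbf{0}^T$ and in particular $\mathbf{e}_m^T\mathbf{P}^\prime\mathbf{e}_j=0$; since $\mathbf{P}$ is entry-wise nonnegative, $\mathbf{e}_m^T|\mathbf{P}-\mathbf{P}^\prime|\mathbf{e}_j=|\mathbf{e}_m^T\mathbf{P}\mathbf{e}_j-0|=\mathbf{e}_m^T\mathbf{P}\mathbf{e}_j$. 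If instead $j=m$ with $i$ arbitrary, then $\Tilde{\mathbf{A}}^\prime_{im}=0$ because the $m^{th}$ column of $\Tilde{\mathbf{A}}^\prime$ is zero, hence $\mathbf{e}_i^T\mathbf{P}^\prime\mathbf{e}_m=0$, and the same nonnegativity argument gives $\mathbf{e}_i^T|\mathbf{P}-\mathbf{P}^\prime|\mathbf{e}_m=\mathbf{e}_i^T\mathbf{P}\mathbf{e}_m$. These two subcases (which overlap consistently at $i=j=m$) exhaust the remaining possibilities, completing the proof.

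I do not expect a genuine obstacle here: the argument is elementary once the sign of $\mathbf{P}_{ij}-\mathbf{P}^\prime_{ij}$ is pinned down. The only care needed is the bookkeeping of $\Tilde{\mathbf{D}}^\prime_{ll}$ versus $\Tilde{\mathbf{D}}_{ll}$ — distinguishing neighbors of $m$ from non-neighbors so that the inequality $\Tilde{\mathbf{D}}^\prime_{ll}\leq \Tilde{\mathbf{D}}_{ll}$ is justified — together with the self-loop convention $\Tilde{\mathbf{D}}^\prime_{mm}=1$, which guarantees all denominators stay strictly positive and hence keeps the entry-level formulas meaningful throughout.
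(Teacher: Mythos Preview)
Your proposal is correct and follows essentially the same approach as the paper: an entry-wise case analysis using $\Tilde{\mathbf{A}}^\prime_{ij}=\Tilde{\mathbf{A}}_{ij}$ for $i,j\neq m$ together with the degree comparison to pin down the sign, and the vanishing of the $m^{th}$ row/column of $\mathbf{P}^\prime$ together with nonnegativity of $\mathbf{P}$ for the remaining case. The only cosmetic difference is that the paper splits the first case explicitly into ``$i$ is a neighbor of $m$'' versus ``$i$ is not a neighbor of $m$'' to compute the sign, whereas you absorb both subcases into the single inequality $\Tilde{\mathbf{D}}^\prime_{ii}\leq \Tilde{\mathbf{D}}_{ii}$; the content is the same.
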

\begin{proof}
  For the first case when $\forall i,j \neq m$, 
  \begin{align}
      \mathbf{e}_i^T\left(\mathbf{P}-\mathbf{P}^\prime\right)\mathbf{e}_j = \frac{\Tilde{\mathbf{A}}_{ij}}{\Tilde{\mathbf{D}}_{ii}} - \frac{\Tilde{\mathbf{A}}^\prime_{ij}}{\Tilde{\mathbf{D}}^\prime_{ii}}.
  \end{align}
  Recall that by definition, in this case we have $\Tilde{\mathbf{A}}_{ij}=\Tilde{\mathbf{A}}^\prime_{ij}$. Now, there are two cases to consider: (1) $i$ is a neighbor of $m$; (2) $i$ is not a neighbor of $m$. For (1), we know that $\Tilde{\mathbf{D}}^\prime_{ii}=\Tilde{\mathbf{D}}_{ii}-1\geq 1$. As a result,
  \begin{align}
      \frac{\Tilde{\mathbf{A}}_{ij}}{\Tilde{\mathbf{D}}_{ii}} - \frac{\Tilde{\mathbf{A}}^\prime_{ij}}{\Tilde{\mathbf{D}}^\prime_{ii}} =  \frac{\Tilde{\mathbf{A}}_{ij}}{\Tilde{\mathbf{D}}_{ii}} - \frac{\Tilde{\mathbf{A}}_{ij}}{\Tilde{\mathbf{D}}_{ii}-1} < 0.
  \end{align}
  This directly implies $\mathbf{e}_i^T\left|\mathbf{P}-\mathbf{P}^\prime\right|\mathbf{e}_j = \mathbf{e}_i^T\left(\mathbf{P}^\prime-\mathbf{P}\right)\mathbf{e}_j$. For (2), we know that $\Tilde{\mathbf{D}}^\prime_{ii}=\Tilde{\mathbf{D}}_{ii}$ and thus $\mathbf{e}_i^T\left|\mathbf{P}-\mathbf{P}^\prime\right|\mathbf{e}_j = 0 = \mathbf{e}_i^T\left(\mathbf{P}^\prime-\mathbf{P}\right)\mathbf{e}_j$. These claims complete the proof for the first part. For the case that $i=m$ or $j=m$, note that since both the $m^{th}$ row and column are all-zeros for $\mathbf{P}^\prime$, we simply have $\mathbf{e}_i^T\left|\mathbf{P}-\mathbf{P}^\prime\right|\mathbf{e}_j = \mathbf{e}_i^T\mathbf{P}\mathbf{e}_j$. Note that in establishing the claim we also used the fact that $\mathbf{P}$ is nonnegative. This completes the proof.
\end{proof}

\subsection{Proof of sequential unlearning}~\label{apx:seq_unlearnig}

The proof is nearly identical to those presented in~\cite{guo2020certified}. We include it here for completeness. The proof is basically induction with triangle inequality.

Let the original training dataset be $\mathcal{D}^{(0)}$ and let $\mathcal{D}^{(t)}$ be the training dataset after $t^{th}$ unlearning request of any kinds. Suppose the weight after $T\geq 1$ unlearning requests is $\mathbf{w}^{(T)}$ and the gradient residual is $\mathbf{u}_{T} = \nabla L(\mathbf{w}^{(T)}, \mathcal{D}^{(T)})$. Our goal is to prove that
\begin{align}\label{eq:seq_goal}
    \|\mathbf{u}_{T}\| \leq T \epsilon^\prime,
\end{align}
where $\epsilon^\prime$ is the gradient residual norm bound for a single unlearning request.

First we note that our update rule of $\mathbf{w}^{(t)}$ is
\begin{align*}
    & \mathbf{w}^{(t+1)} = \mathbf{w}^{(t)} + \left[\mathbf{H}^{(t)}_{\mathbf{w}^{(t)}}\right]^{-1}\Delta^{(t)},\\
    & \text{ where }\Delta^{(t)}=\nabla L(\mathbf{w}^{(t)}, \mathcal{D}^{(t)}) - \nabla L(\mathbf{w}^{(t)}, \mathcal{D}^{(t+1)}) \text{ and }\mathbf{H}^{(t)}_{\mathbf{w}^{(t)}} = \nabla^2 L(\mathbf{w}^{(t)}, \mathcal{D}^{(t+1)}).
\end{align*}

Also note that $\mathbf{w}^{(0)} = \argmin L(\mathbf{w}, \mathcal{D}^{(0)})$, which is the unique minimizer of $L(\mathbf{w}, \mathcal{D}^{(0)})$ due to strong convexity. 

For base case $T=1$ it is trivially true based on our definition of $\epsilon^\prime$.  

Assume that~\eqref{eq:seq_goal} is true for step $T$. Consider the modified loss function 
\begin{align}
    & L^{(T)}(\mathbf{w}, \mathcal{D}) = L(\mathbf{w}, \mathcal{D}) - \mathbf{u}^T_T\mathbf{w}.
\end{align}
Then $\mathbf{w}^{(T)}$ is the exact solution of $L^{(T)}(\mathbf{w}, \mathcal{D}^{(T)})$. Now, consider the update of
\begin{align*}
    \bar{\mathbf{w}}^{(T+1)} = \mathbf{w}^{(T)} + \left[\nabla^2 L^{(T)}(\mathbf{w}^{(T)}, \mathcal{D}^{(T+1)})\right]^{-1} \left(\nabla L^{(T)}(\mathbf{w}^{(T)}, \mathcal{D}^{(T)}) - \nabla L^{(T)}(\mathbf{w}^{(T)}, \mathcal{D}^{(T+1)}) \right)
\end{align*}
Since $\mathbf{w}^{(T)}$ is the exact solution of $L^{(T)}(\mathbf{w}, \mathcal{D}^{(T)})$, the same analysis of gradient residual norm bound we have applies for $\nabla L^{(T)}(\bar{\mathbf{w}}^{(T+1)}, \mathcal{D}^{(T+1)})$, which means
\begin{align}\label{eq:t_step_residual}
    \|\nabla L^{(T)}(\bar{\mathbf{w}}^{(T+1)}, \mathcal{D}^{(T+1)})\| \leq \epsilon^{\prime}.
\end{align}
Next, we show that $\bar{\mathbf{w}}^{(T+1)} = \mathbf{w}^{(T+1)}$. By simple algebra, we have
\begin{align*}
    \nabla^2 L^{(T)}(\mathbf{w}^{(T)}, \mathcal{D}^{(T+1)}) = \left.\nabla^2 L(\mathbf{w}, \mathcal{D}^{(T+1)}) - \nabla^2 \mathbf{u}^T_T\mathbf{w}\right\vert_{\mathbf{w}=\mathbf{w}^{(T)}} = \nabla^2 L(\mathbf{w}^{(T)}, \mathcal{D}^{(T+1)}) = \mathbf{H}^{(T)}_{\mathbf{w}^{(T)}}.
\end{align*}
On the other hand, it is trivial to see that
\begin{align*}
    & \nabla L^{(T)}(\mathbf{w}^{(T)}, \mathcal{D}^{(T)}) - \nabla L^{(T)}(\mathbf{w}^{(T)}, \mathcal{D}^{(T+1)}) = \nabla L(\mathbf{w}^{(T)}, \mathcal{D}^{(T)}) - \nabla(\mathbf{w}^{(T)}, \mathcal{D}^{(T+1)}) = \Delta^{(T)}.
\end{align*}
Thus we have
\begin{align}
    \bar{\mathbf{w}}^{(T+1)} &= \mathbf{w}^{(T)} + \left[\nabla^2 L^{(T)}(\mathbf{w}^{(T)}, \mathcal{D}^{(T+1)})\right]^{-1} \left(\nabla L^{(T)}(\mathbf{w}^{(T)}, \mathcal{D}^{(T)}) - \nabla L^{(T)}(\mathbf{w}^{(T)}, \mathcal{D}^{(T+1)}) \right) \notag \\
    & = \mathbf{w}^{(T)} + \left[\mathbf{H}^{(T)}_{\mathbf{w}^{(T)}}\right]^{-1}\Delta^{(T)} = \mathbf{w}^{(T+1)}.
\end{align}
Then~\eqref{eq:t_step_residual} becomes
\begin{align}
    \|\nabla L^{(T)}(\mathbf{w}^{(T+1)}, \mathcal{D}^{(T+1)})\| \leq \epsilon^{\prime}.
\end{align}
Finally, by the definition of $L^{(T)}$ we have
\begin{align*}
    & \nabla L^{(T)}(\mathbf{w}^{(T+1)}, \mathcal{D}^{(T+1)}) = \nabla \left.\left(L(\mathbf{w}, \mathcal{D}^{(T+1)}) - \mathbf{u}^T_T\mathbf{w}\right)\right\vert_{\mathbf{w} = \mathbf{w}^{(T+1)}} \\
    & = \nabla L(\mathbf{w}^{(T+1)}, \mathcal{D}^{(T+1)}) - \mathbf{u}_T \\
    & \Rightarrow \mathbf{u}_{T+1} = \nabla L(\mathbf{w}^{(T+1)}, \mathcal{D}^{(T+1)}) = \nabla L^{(T)}(\mathbf{w}^{(T+1)}, \mathcal{D}^{(T+1)}) + \mathbf{u}_T.\\
    & \Rightarrow \|\mathbf{u}_{T+1}\| \leq \|\nabla L^{(T)}(\mathbf{w}^{(T+1)}, \mathcal{D}^{(T+1)})\| + \|\mathbf{u}_T\| \leq \epsilon^\prime + T\epsilon^\prime = (T+1)\epsilon^\prime.
\end{align*}
By induction, we have proved that indeed~\eqref{eq:seq_goal} is true and our sequential unlearning is valid.

\textbf{Remark. }Note that the analysis above holds for both unstructured unlearning and graph unlearning scenarios. Once we have the one step results as in Theorem~\ref{thm:NR_SGCcase},~\ref{thm:NFR_SGCcase},~\ref{thm:ER_SGCcase} or~\ref{thm:NFR_GPRcase}, we have the explicit form of $\epsilon^\prime$ in~\eqref{eq:seq_goal}. It is also worth pointing out that our proposed update rule works for both cases where the underlying loss is $L^{(T)}$ or $L$. This is due to the fact that our update rule is invariant to the linear shift of $\mathbf{a}^T\mathbf{w}$ for any vector $\mathbf{a}$, since the update rule only involves Hessian and gradient difference $\Delta$, both of which are apparently invariant to the linear shift of the form $\mathbf{a}^T\mathbf{w}$. One can also notice that the entire analysis still holds for the noisy training loss $L_{\mathbf{b}}$, as it also just introduces the linear shift $\mathbf{b}^T\mathbf{w}$.

\subsection{Additional experimental details}\label{apx:more_exp_details}

\begin{table}[h]
\caption{Statistics of benchmark datasets.}
\label{tab:data_stats}
\centering
\begin{tabular}{@{}cccccc@{}}
\toprule
\textbf{Name} & \textbf{$\#$nodes} & \textbf{$\#$edges} & \textbf{$\#$features} & \textbf{$\#$classes} & \textbf{train/val/test} \\ \midrule
Cora     & 2,708  & 10,556 & 1,433 & 7 & 1,208/500/1,000  \\
Citeseer & 3,327  & 9,104  & 3,703 & 6 & 1,827/500/1,000   \\
PubMed   & 19,717 & 88,648 & 500   & 3 & 18,217/500/1,000 \\ 
Computers& 13,752 & 491,722 & 767 & 10 & 12,252/500/1,000 \\
Photo    & 7,650  & 238,162 & 745 & 8  & 6,150/500/1,000 \\
ogbn-arxiv & 169,343 &  1,166,243 & 128 & 40 & 90,941/29,799/48,603 \\ \bottomrule
\end{tabular}
\end{table}

All our experiments were executed on a Linux machine with $48$ cores, $376$GB of system memory, and
two NVIDIA Tesla P100 GPUs with $12$GB of GPU memory each. Information about all datasets can be found in Table~\ref{tab:data_stats}. The data split is public and obtained from PyTorch Geometric~\cite{fey2019fast}. We used the ``full'' split option for Cora, Citeseer and Pubmed. Since there is no public split for Computers and Photo, we adopted a similar setting as for the citation networks via random splits (i.e., $500$ nodes in the validation set and $1,000$ nodes in the test set). The data split for ogbn-arxiv is the public split provided by the Open Graph Benchmark~\cite{hu2020open}.

\textbf{Dependency on the node degree.} We verified our Theorem~\ref{thm:NR_SGCcase} and Theorem~\ref{thm:NFR_GPRcase} for node degree dependencies on Photo, Cora, Citeseer and Pubmed. The results are presented in Figure~\ref{fig:apx_deg}.

\begin{figure}[h]
    \centering
    \includegraphics[width=0.24\linewidth]{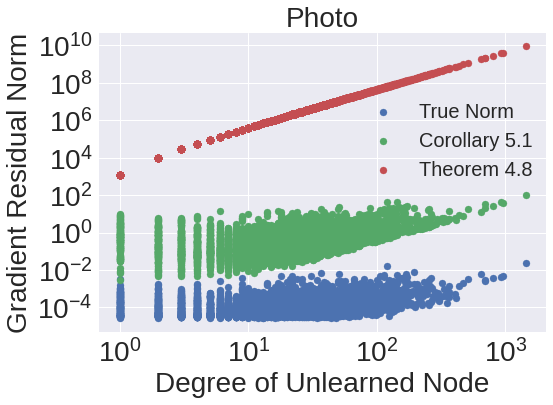}
    \includegraphics[width=0.24\linewidth]{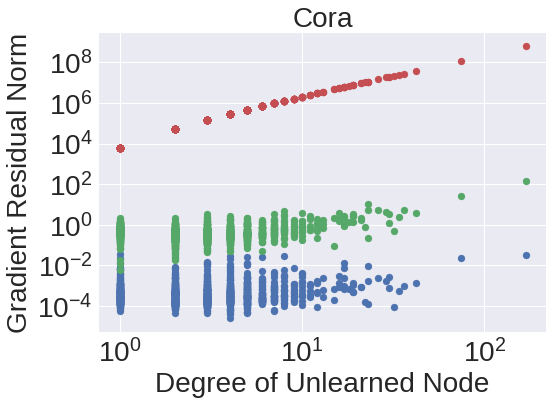}
    \includegraphics[width=0.24\linewidth]{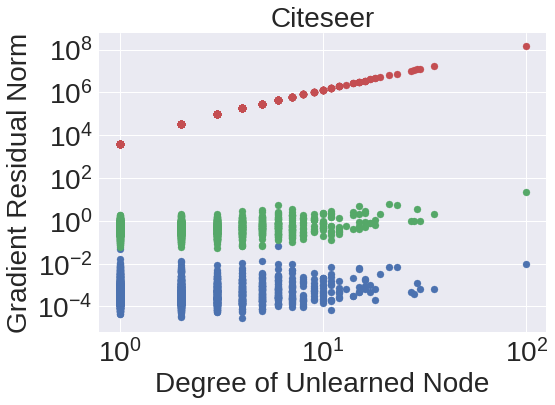}
    \includegraphics[width=0.24\linewidth]{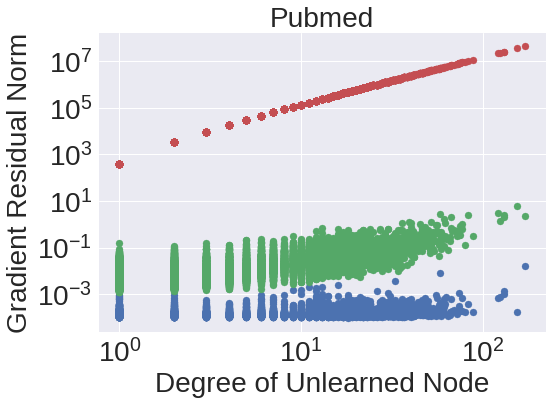}\\
    \includegraphics[width=0.24\linewidth]{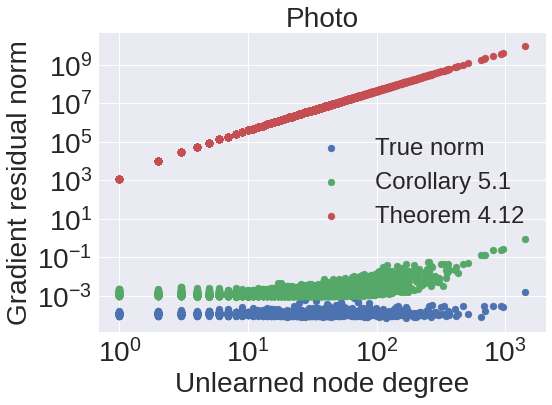}
    \includegraphics[width=0.24\linewidth]{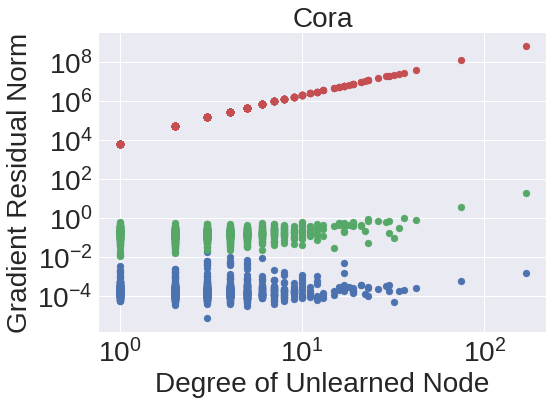}
    \includegraphics[width=0.24\linewidth]{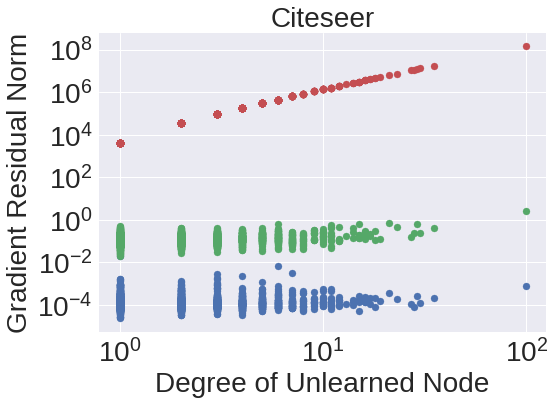}
    \includegraphics[width=0.24\linewidth]{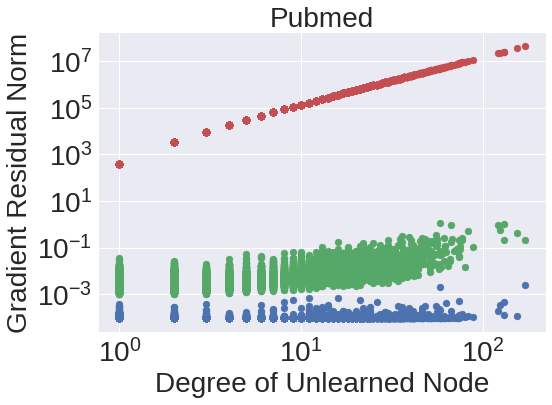}
    \caption{Additional examination of the degree dependency result from Theorem~\ref{thm:NR_SGCcase} (top) and Theorem~\ref{thm:NFR_GPRcase} (bottom).}
    \label{fig:apx_deg}
\end{figure}

\textbf{Nonaccumulative time for each of the unlearning procedures.} Figure~\ref{fig:app_cora} shows the average time complexity for each unlearning step on the Cora dataset. The spikes for certified graph unlearning methods and Algorithm 2 in~\cite{guo2020certified} correspond to retraining after a removal.

\begin{figure}[h]
    \centering
    \includegraphics[width=\linewidth]{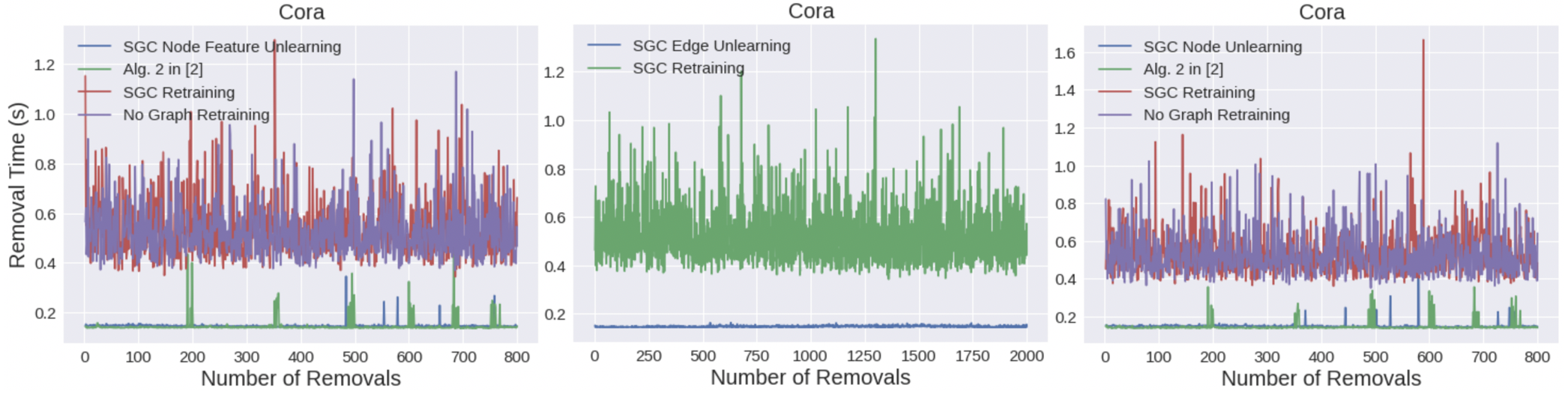}
    \caption{Nonaccumulative time for each removal step on the Cora dataset. The setting is the same as in Figure~\ref{fig:cora_full}.}
    \label{fig:app_cora}
\end{figure}

\textbf{Additional experiments.} The performance of our proposed certified graph unlearning methods on three datasets, including Citeseer, Pubmed and Amazon Photo, is shown in Figure~\ref{fig:app_other_datasets}. It is worth pointing out that our bound on the gradient residual norm in Section~\ref{sec:CGU} does not guarantee the generalization ability of the updated model. Therefore, It could happen that the test accuracy increases as we remove information from the training set, as shown in the second row of Figure~\ref{fig:app_other_datasets}, or that the performance is not very stable, as seen in the third row of Figure~\ref{fig:app_other_datasets}.

We also performed additional experiments on the Cora dataset, with results shown in Figure~\ref{fig:app_cora_rebuttal}. The first row shows the average performance over $10$ repeated trails with random splitting, and the conclusion is the same as the one stated in Section~\ref{sec:exp}. The second row shows the performance on GPR-based models. Note that when the number of removal requests becomes large, the performance of GPR-based models degrades much faster than that of SGC-based models. This observation is consistent with our discussion of GPR-based models. None of the retraining methods involves noise.

\begin{figure}[h]
    \centering
    \includegraphics[width=\linewidth]{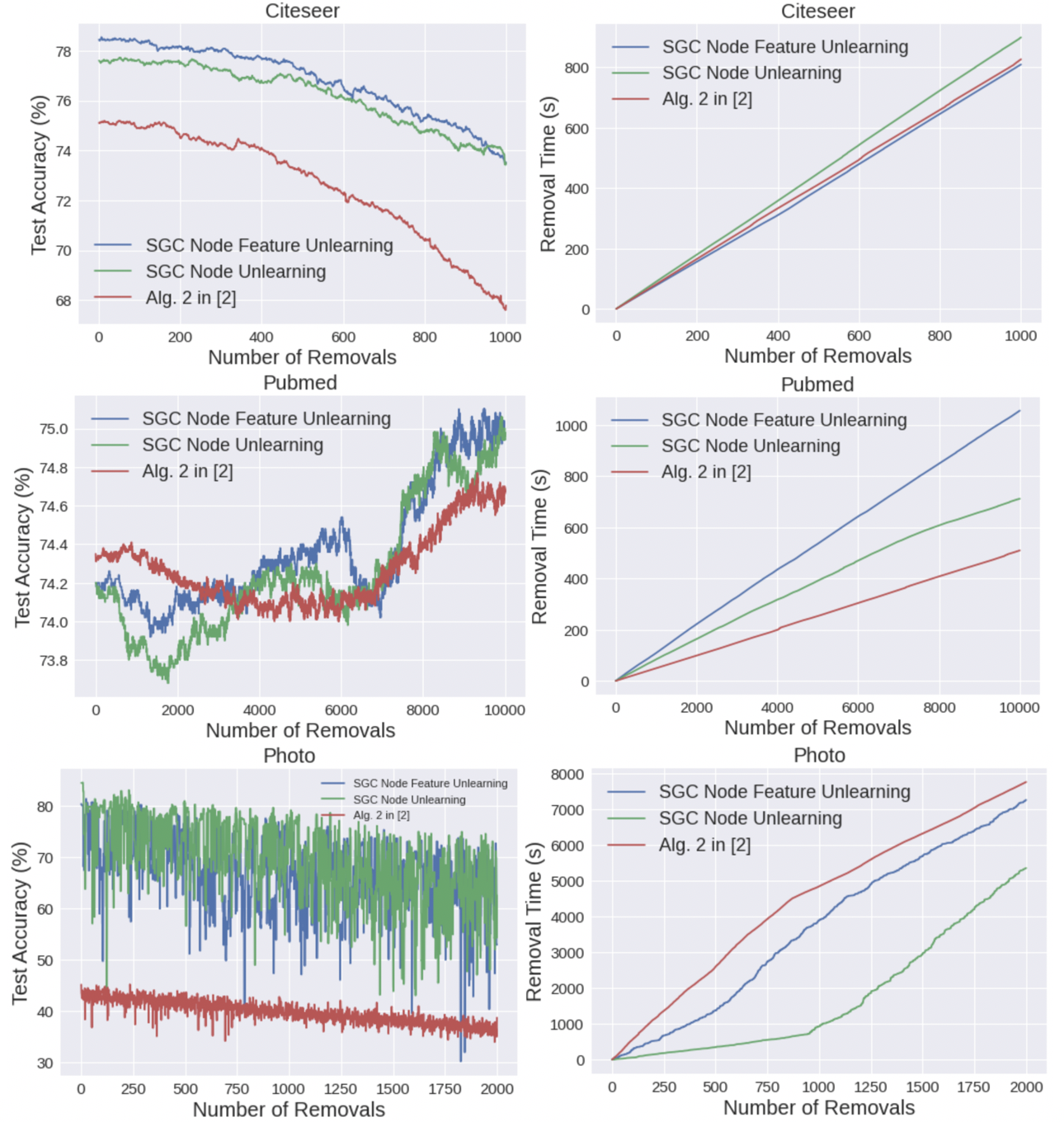}
    \caption{Performance of certified graph unlearning methods on different datasets. First Row: We removed up to $55\%$ of the training data in Citeseer. Second Row: We removed up to $50\%$ of the training data in Pubmed. Third Row: We set $\alpha=10,\lambda=10^{-4}$, and removed up to $30\%$ of the training data in Amazon Photo.}
    \label{fig:app_other_datasets}
\end{figure}

\begin{figure}[h]
    \centering
    \includegraphics[width=\linewidth]{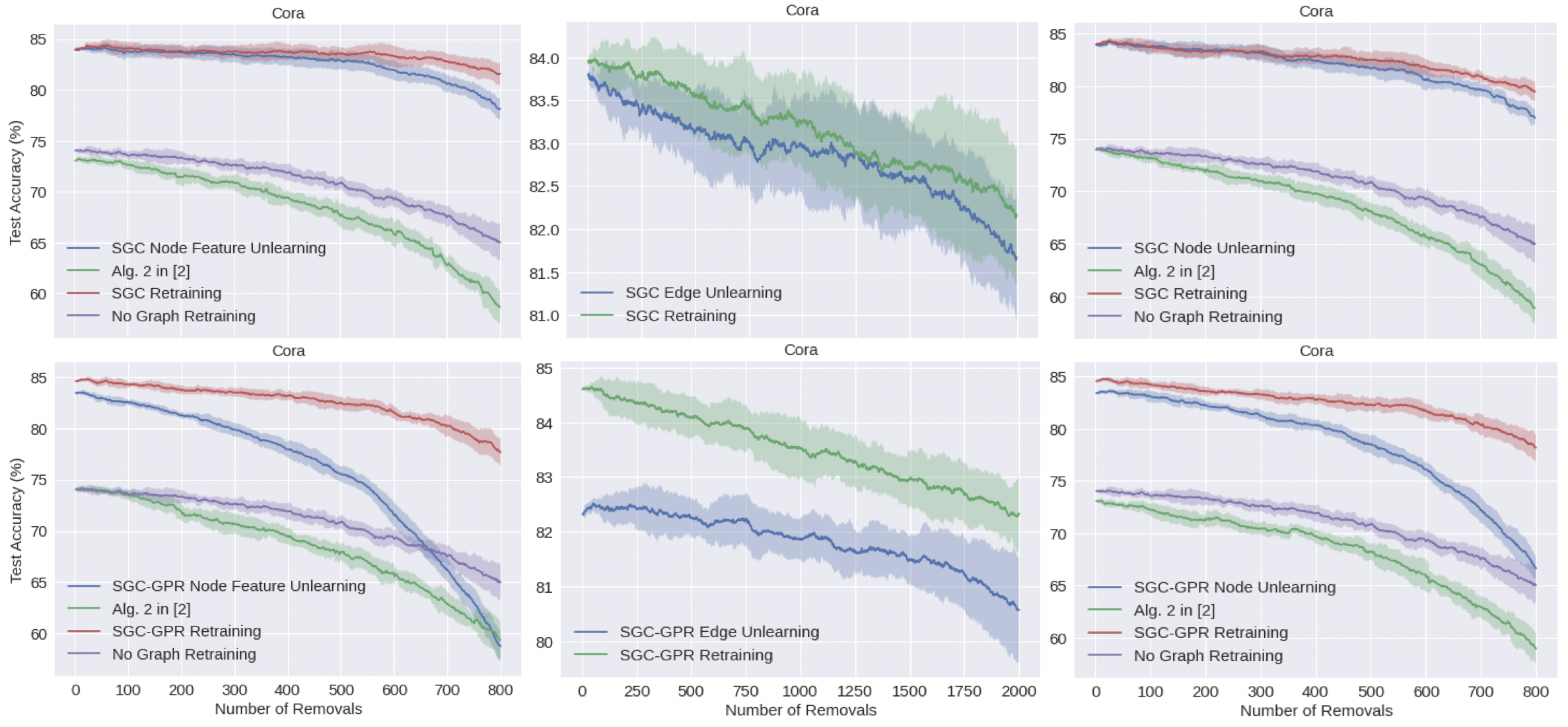}
    \caption{Performance of certified graph unlearning methods on Cora. First Row: The reported results are based on averaging over $10$ repeated trails with random splitting. Second Row: GPR-based models are used to obtain node embeddings. All other settings are the same as in Figure~\ref{fig:cora_full}.}
    \label{fig:app_cora_rebuttal}
\end{figure}

\end{document}